\title{Eliminating The Impossible, Whatever Remains Must Be True \\[5pt] \large{On Extracting and Applying Background Knowledge In The Context Of Formal Explanations}}
\author {
	Jinqiang Yu\textsuperscript{\rm 1,\rm 4},
  Alexey Ignatiev\textsuperscript{\rm 1},
  Peter J. Stuckey\textsuperscript{\rm 1,\rm 4},
	Nina Narodytska\textsuperscript{\rm 2},
  Joao Marques-Silva\textsuperscript{\rm 3}
}
\newtheorem{proposition}{Proposition}
\newtheorem{remark}{Remark}
\newtheorem{example}{Example}
\newcommand{\fml}[1]{{\mathcal{#1}}}
\newcommand{\tn}[1]{\textnormal{#1}}
\newcommand{\mbf}[1]{\ensuremath\mathbf{#1}}
\newcommand{\mbb}[1]{\ensuremath\mathbb{#1}}
\DeclareMathOperator*{\limply}{\rightarrow}
\def\squareforqed{\hbox{\rlap{$\sqcap$}$\sqcup$}}
\def\qed{\ifmmode\squareforqed\else{\unskip\nobreak\hfil
\penalty50\hskip1em\null\nobreak\hfil\squareforqed
\parfillskip=0pt\finalhyphendemerits=0\endgraf}\fi}
\newcommand{\pnote}[1]{$\llbracket$\textcolor{tred3}{peter}:~~{\sl\textcolor{tdgray1}{#1}}$\rrbracket$}
\newcommand{\jynote}[1]{$\llbracket$\textcolor{tred3}{jinqiang}:~~{\sl\textcolor{tdgray1}{#1}}$\rrbracket$}
\newcommand{\ignore}[1]{}
\newcommand{\frmeq}[1]{\begin{empheq}[box={\fboxsep=1.5pt\doublebox}]{flalign*}#1\end{empheq}}
\DeclareOldFontCommand{\sc}{\normalfont\scshape}{\@nomath\sc}
\newcommand{\myparagraph}[1]{\textbf{#1}}
\newcommand{\Status}{Status}
\begin{document}

\maketitle
\begin{abstract}
  The rise of AI methods to make predictions and decisions has led to
  a pressing need for more explainable artificial intelligence (XAI)
  methods.
  One common approach for XAI is to produce a post-hoc explanation,
  explaining why a black box ML model made a certain prediction.
  Formal approaches to post-hoc explanations provide succinct reasons
  for \emph{why} a prediction was made, as well as \emph{why not}
  another prediction was made.
  But these approaches assume that features are independent and
  uniformly distributed.
  While this means that ``why'' explanations are correct, they may be
  longer than required.
  It also means the ``why not'' explanations may be suspect as the
  counterexamples they rely on may not be meaningful.
  In this paper, we show how one can apply background knowledge to
  give more succinct ``why'' formal explanations, that are presumably
  easier to interpret by humans, and give more accurate ``why not''
  explanations.
  In addition, we show how to use existing rule induction
  techniques to efficiently extract background information from a
  dataset, and also how to report which background information was
  used to make an explanation, allowing a human to examine it if they
  doubt the correctness of the explanation.
\end{abstract}

%
\definecolor{tyellow1}{HTML}{FCE94F}
\definecolor{tyellow2}{HTML}{EDD400}
\definecolor{tyellow3}{HTML}{C4A000}
%
\definecolor{torange1}{HTML}{FCAF3E}
\definecolor{torange2}{HTML}{F57900}
\definecolor{torange3}{HTML}{C35C00}
%
\definecolor{tbrown1}{HTML}{E9B96E}
\definecolor{tbrown2}{HTML}{C17D11}
\definecolor{tbrown3}{HTML}{8F5902}
%
\definecolor{tgreen1}{HTML}{8AE234}
\definecolor{tgreen2}{HTML}{73D216}
\definecolor{tgreen3}{HTML}{4E9A06}
%
\definecolor{tblue1}{HTML}{729FCF}
\definecolor{tblue2}{HTML}{3465A4}
\definecolor{tblue3}{HTML}{204A87}
%
\definecolor{tpurple1}{HTML}{AD7FA8}
\definecolor{tpurple2}{HTML}{75507B}
\definecolor{tpurple3}{HTML}{5C3566}
%
\definecolor{tred1}{HTML}{EF2929}
\definecolor{tred2}{HTML}{CC0000}
\definecolor{tred3}{HTML}{A40000}
%
\definecolor{tlgray1}{HTML}{EEEEEC}
\definecolor{tlgray2}{HTML}{D3D7CF}
\definecolor{tlgray3}{HTML}{BABDB6}
%
\definecolor{tdgray1}{HTML}{888A85}
\definecolor{tdgray2}{HTML}{555753}
\definecolor{tdgray3}{HTML}{2E3436}

\section{Introduction} \label{sec:intro}

Recent years have witnessed rapid advances in Artificial Intelligence
(AI) and Machine Learning (ML) algorithms revolutionizing all aspects
of human lives~\cite{bengio-nature15,taward18}. 
An ever growing range of practical applications of AI and ML, on the
one hand, and a number of critical issues observed in modern AI
systems (e.g. decision bias~\cite{propublica16} and
brittleness~\cite{szegedy-iclr14}), 
on the other
hand, gave rise to the quickly advancing area of theory and practice
of Explainable AI (XAI).

Several major approaches to XAI have been proposed in the recent past.
Besides tackling XAI through computing \emph{interpretable} ML models
directly~\cite{rudin-natmi19}, 
or through the use of
interpretable models for approximating complex \emph{black-box} ML
models~\cite{guestrin-kdd16}, the most prominent approach to XAI is to
compute \emph{post-hoc explanations} to ML predictions on
demand~\cite{lundberg-nips17,guestrin-aaai18}.
%
Prior work distinguishes post-hoc (\emph{abductive}) explanations answering
a \emph{``why?''} question and (\emph{contrastive}) explanations targeting a \emph{``why
  not?''} question~\cite{miller-aij19}.
%
Heuristic approaches to post-hoc explainability
are known to suffer from a number of fundamental explanation quality
issues~\cite{nsmims-sat19,inms-corr19,lukasiewicz-corr19,ignatiev-ijcai20},
including the existence of out-of-distribution
attacks~\cite{lakkaraju-aies20a}.
A promising alternative is formal
explainability where explanations are computed as prime implicants of
the decision function associated with ML
predictions~\cite{darwiche-ijcai18}. Formal explanations have also
been related with abductive reasoning~\cite{inms-aaai19,inms-nips19}.

Although provably correct and minimal, formal explanations have a few
limitations.
%
In order to provide provable correctness
guarantees that a subset of features is sufficient for an ML
prediction, formal approaches have to take into account the complete
feature space assuming that the features are independent and uniformly
distributed~\cite{kutyniok-jair21}.
This makes a formal reasoner check all the combinations of feature
values, including those that realistically can \emph{never appear} in
practice.
This issue is caused by the inability of modern (both formal and
heuristic\footnote{The lack of background knowledge support pertains
to heuristic approaches as well by making them
error-prone~\cite{lakkaraju-aies20a,snimmsv-aaai22}.}) explanation approaches to
account for background knowledge associated with the problem domain of
the target dataset.
It results in formal explanations being unnecessarily long, which
makes them hard for a human decision maker to interpret.
%

Motivated by this limitation, our work focuses on computing both
abductive and contrastive formal explanations making use of background
knowledge, and makes the following contributions:
%
First, given a training dataset, an efficient generic approach to
extracting background knowledge in the form of highly accurate
\emph{if-then} rules is proposed.
Following recent work on using constraints in compilation-based formal
explainability~\cite{rubin-aaai22}, accurate background knowledge is
argued to be the key to good quality explanations.
The approach builds on a recent formal method for learning decision
sets~\cite{ilsms-aaai21} and is able to extract reasonably short rules
representing relations between various features of the target dataset.
Also, as our approach is designed to enumerate 100\% accurate rules,
its performance is shown to be on par with a modern implementation of
the well-known Apriori and Eclat association rule mining
algorithms~\cite{as-vlbd94,zaki-kdd97}.
Second, a novel approach to computing formal explanations taking into
account background knowledge is proposed, \emph{independent} of the nature of
the background knowledge; the only requirement imposed is that the
knowledge must be represented as a conjunction of constraints.
Third, we prove theoretically that the use of background knowledge
positively affects the quality of both abductive and contrastive
explanations, thus, helping to build trust in the underlying AI
systems.
Fourth, we develop an effective way to discover which background
knowledge rules are used in extracting an explanation.
This enables a human decision maker to examine whether or not the
rules used are meaningful, which further facilitates human trust in
the explanations computed.
Fifth and finally, motivated by the results
of~\cite{inms-corr19,ignatiev-ijcai20}, we argue that background
knowledge helps one assess the correctness of heuristic ML
explainers~\cite{guestrin-kdd16,lundberg-nips17,guestrin-aaai18} more
accurately since it blocks impossible combinations of feature values.
Namely, we show that the estimated correctness of SHAP, LIME, and
Anchor may improve significantly when background knowledge is
available.


\section{Preliminaries} \label{sec:prelim}

\subsubsection{SAT, MaxSAT, and SMT.}

Definitions standard in \emph{propositional satisfiability} (SAT) and
\emph{maximum satisfiability} (MaxSAT) solving are
assumed~\cite{sat-handbook21}.
SAT and MaxSAT formulas are assumed to be propositional.
A propositional formula $\varphi$ is considered to be in
\emph{conjunctive normal form} (CNF) if it is a conjunction (logical
\emph{``and''}) of clauses, where a \emph{clause} is a disjunction
(logical \emph{``or''}) of literals, and a \emph{literal} is either a
Boolean variable $b$ or its \emph{negation} $\neg b$.
Whenever convenient, a clause is treated as a set of literals.
A \emph{truth assignment} $\mu$ is a mapping from the set of variables
in $\varphi$ to $\{0, 1\}$.
A clause is \emph{satisfied} by truth assignment $\mu$ if one of its
literals is assigned value $1$ by $\mu$; otherwise, the clause is said
to be \emph{falsified}.
If all clauses of formula $\varphi$ are satisfied by assignment $\mu$
then $\mu$ also satisfies $\varphi$; otherwise, $\varphi$ is falsified by
$\mu$.
%
%
A formula $\varphi$ is said to be \emph{satisfiable} if there is an
assignment $\mu$ that satisfies $\varphi$; otherwise, $\varphi$ is
\emph{unsatisfiable}.
%

In the context of unsatisfiable formulas, the maximum satisfiability
problem is to find a truth assignment that maximizes the number of
satisfied clauses.
%
%
Hereinafter, we will make use of a variant of MaxSAT called Partial
(Unweighted) MaxSAT~\cite[Chapters~23~and~24]{sat-handbook21}.
The formula $\varphi$ in Partial (Unweighted) MaxSAT is a conjunction
of \emph{hard} clauses $\fml{H}$, which must be satisfied, and
\emph{soft} clauses $\fml{S}$, which represent a preference to satisfy
them, i.e.\ $\varphi = \fml{H} \wedge \fml{S}$.
The Partial Unweighted MaxSAT problem aims at finding a truth
assignment that satisfies all the hard clauses while maximizing the
total number of satisfied soft clauses.
%
%
%


Note that we consider a family of ML classifiers such that their
decision making process can be represented logically as a
propositional formula.
%
This is needed for applying formal reasoning about ML model behavior,
as well as for representing background knowledge extracted.
Finally, a logical representation of boosted tree models will require
us to apply an extension of propositional logic to decidable fragments
of first-order logic (FOL).
Namely, we will assume the use of \emph{satisfiability modulo
theories} (SMT) in the theory of linear arithmetic over reals, i.e.\
the concept of a clause will be lifted to \emph{linear constraints}
over real variables.
Optimization problems for SMT
can be defined analogously to MaxSAT.

\begin{figure*}[t!]
	\begin{subfigure}[b]{\textwidth}
		\centering
		\begin{minipage}{0.72\textwidth}
        \frmeq{
	\begin{array}{lllcl}
		\tn{R$_{0}$:} & \tn{IF}     & \text{Education = Dropout} & \tn{THEN} & \text{Target $< 50$k} \\
		\tn{R$_{1}$:} & \tn{ELSE IF} & \text{Occupation = Service}& \tn{THEN} & \text{Target $< 50$k} \\
		\tn{R$_{2}$:} & \tn{ELSE IF} & \text{\Status{} = Married}  \land \text{ Relationship = Husband}& \tn{THEN} & \text{Target $\geq 50$k} \\
		\tn{R$_{3}$:} & \tn{ELSE IF} & \text{\Status{} = Married}  \land \text{ Relationship = Wife}& \tn{THEN} & \text{Target $\geq 50$k} \\
            \tn{R$_{\tn{\textsc{def}}}$:} & \tn{ELSE} &  & \tn{THEN} &  \text{Target $< 50$k} \\
          \end{array}
        }
		\end{minipage}
	\caption{Decision list.}
    \label{fig:dl}
	\end{subfigure}
\begin{subfigure}[b]{\textwidth}
  \begin{center}
    \scalebox{0.98}{
      \begin{minipage}{0.99\textwidth}
%
%
%

\tikzstyle{box} = [draw=black!90, thick, rectangle, rounded corners,
                     inner sep=10pt, inner ysep=20pt, dotted
                  ]
\tikzstyle{title} = [draw=black!90, fill=black!5, semithick, top color=white,
                     bottom color = black!5, text=black!90, rectangle,
                     font=\small, inner sep=2pt, minimum height=1.3em,
                     top color=tyellow2!27, bottom color=tyellow2!27
                    ]
\tikzstyle{feature} = [rectangle,font=\scriptsize,rounded corners=1mm,thick,%
                       draw=black!80, top color=tblue2!20,bottom color=tblue2!25,%
                       draw, minimum height=1.1em, text centered,%
                       inner sep=2pt%
                      ]
\tikzstyle{pscore} = [rectangle,font=\scriptsize,rounded corners=1mm,thick,%
                     draw=black!80, top color=tgreen3!20,bottom color=tgreen3!27,%
                     draw, minimum height=1.1em, text centered,%
                     inner sep=2pt%
                    ]
\tikzstyle{nscore} = [rectangle,font=\scriptsize,rounded corners=1mm,thick,%
                     draw=black!80, top color=tred2!20,bottom color=tred2!25,%
                     draw, minimum height=1.1em, text centered,%
                     inner sep=2pt%
                    ]

\begin{adjustbox}{center}
\setlength{\tabcolsep}{3pt}
\def\arraystretch{3}
\begin{tabular}{ccc}
    \begin{tikzpicture}[node distance = 4.0em, auto]
        \node [box] (box) {%
        \begin{minipage}[t!]{0.33\textwidth}
            \vspace{0.9cm}\hspace{1.5cm}
        \end{minipage}
        };
        \node[title] at (box.north) {$\text{T}_\text{1}$ ($\geq 50$k)};

        \node [feature] (feat1) at (-0.38, 0.57) {Marital Status $=$ Married?};
        \node [feature, below left  = 0.8em and -3.5em of feat1] (feat2) {Education $=$ Dropout?};
        \node [feature, below right = 0.8em and -3.5em of feat1] (feat3) {Relationship $=$ Not-in-family?};

        \node [nscore, below left  = 0.8em and -2.3em of feat2] (pos1) {-0.2192};
        \node [pscore, below right  = 0.8em and -2.5em of feat2] (neg1) {0.1063};
        \node [nscore, below left = 0.8em and -2.4em of feat3] (pos2) {-0.1561};
        \node [nscore, below right = 0.8em and -2.4em of feat3] (neg2) {-0.3850};

        \draw [->,thick,black!80] (feat1) to[] node[above, pos=1.2, font=\scriptsize] {yes} (feat2.north);
        \draw [->,thick,black!80] (feat1) to[] node[above, pos=1.1, font=\scriptsize] { no} (feat3.north);
        \draw [->,thick,black!80] (feat2) to[] node[above, pos=1.2, font=\scriptsize] {yes} (pos1.north);
        \draw [->,thick,black!80] (feat2) to[] node[above, pos=1.1, font=\scriptsize] { no} (neg1.north);
        \draw [->,thick,black!80] (feat3) to[] node[above, pos=1.2, font=\scriptsize] {yes} (pos2.north);
        \draw [->,thick,black!80] (feat3) to[] node[above, pos=1.1, font=\scriptsize] { no} (neg2.north);
    \end{tikzpicture}
    &
    \begin{tikzpicture}[node distance = 4.0em, auto]
        \node [box] (box) {%
        \begin{minipage}[t!]{0.28\textwidth}
            \vspace{0.9cm}\hspace{1.5cm}
        \end{minipage}
        };
        \node[title] at (box.north) {$\text{T}_\text{2}$ ($\geq 50$k)};

        \node [feature] (feat1) at (0.23, 0.57) {Marital Status $=$ Married?};
        \node [feature, below left  = 0.8em and -3.1em of feat1] (feat2) {Occupation $=$ Service?};
        \node [feature, below right = 0.8em and -3.1em of feat1] (feat3) {Hours/w $>$ 45?};

        \node [nscore, below left  = 0.8em and -2.3em of feat2] (pos1) {-0.2231};
        \node [pscore, below right  = 0.8em and -2.5em of feat2] (neg1) {0.0707};
        \node [nscore, below left = 0.8em and -1.4em of feat3] (pos2) {-0.0080};
        \node [nscore, below right = 0.8em and -1.4em of feat3] (neg2) {-0.2549};

        \draw [->,thick,black!80] (feat1) to[] node[above, pos=1.2, font=\scriptsize] {yes} (feat2.north);
        \draw [->,thick,black!80] (feat1) to[] node[above, pos=1.1, font=\scriptsize] { no} (feat3.north);
        \draw [->,thick,black!80] (feat2) to[] node[above, pos=1.2, font=\scriptsize] {yes} (pos1.north);
        \draw [->,thick,black!80] (feat2) to[] node[above, pos=1.1, font=\scriptsize] { no} (neg1.north);
        \draw [->,thick,black!80] (feat3) to[] node[above, pos=1.2, font=\scriptsize] {yes} (pos2.north);
        \draw [->,thick,black!80] (feat3) to[] node[above, pos=1.1, font=\scriptsize] { no} (neg2.north);
    \end{tikzpicture}
    &
     \begin{tikzpicture}[node distance = 4.0em, auto]
        \node [box] (box) {%
        \begin{minipage}[t!]{0.276\textwidth}
            \vspace{0.9cm}\hspace{1.5cm}
        \end{minipage}
        };
        \node[title] at (box.north) {$\text{T}_\text{3}$ ($\geq 50$k)};

        \node [feature] (feat1) at (-0.1, 0.57) {Relationship $=$ Own-child?};
        \node [feature, below left  = 0.8em and -3.5em of feat1] (feat2) {Education $=$ Master?};
        \node [feature, below right = 0.8em and -3.5em of feat1] (feat3) {Education $=$ Dropout?};

        \node [pscore, below left  = 0.8em and -2.3em of feat2] (pos1) {0.1186};
        \node [nscore, below right  = 0.8em and -2.5em of feat2] (neg1) {-0.3483};
        \node [nscore, below left = 0.8em and -2.4em of feat3] (pos2) {-0.2844};
        \node [nscore, below right = 0.8em and -2.4em of feat3] (neg2) {-0.0128};

        \draw [->,thick,black!80] (feat1) to[] node[above, pos=1.2, font=\scriptsize] {yes} (feat2.north);
        \draw [->,thick,black!80] (feat1) to[] node[above, pos=1.1, font=\scriptsize] { no} (feat3.north);
        \draw [->,thick,black!80] (feat2) to[] node[above, pos=1.2, font=\scriptsize] {yes} (pos1.north);
        \draw [->,thick,black!80] (feat2) to[] node[above, pos=1.1, font=\scriptsize] { no} (neg1.north);
        \draw [->,thick,black!80] (feat3) to[] node[above, pos=1.2, font=\scriptsize] {yes} (pos2.north);
        \draw [->,thick,black!80] (feat3) to[] node[above, pos=1.1, font=\scriptsize] { no} (neg2.north);
    \end{tikzpicture}
    \\
   \end{tabular}
\end{adjustbox}

      \end{minipage} }
			\caption{Boosted tree~\cite{guestrin-kdd16a} consisting of 3 trees
				with the depth of each tree at most 2.}
    \label{fig:bt}
  \end{center}
\end{subfigure}
\caption{Example DL and BT models trained
      on the well-known \emph{adult} classification dataset.}
  \label{fig:models}
\end{figure*}
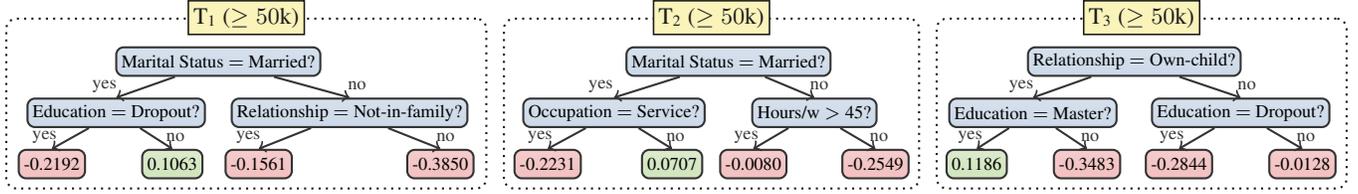

\ignore{
In this paper, we consider \emph{background knowledge} as the correlations between feature-values given
a dataset $\fml{E} = \{e_1, e_2, \ldots, e_n\}$, where an instance
$e_j=(\mbf{v}_j, \mbf{c}_j)$.
There are many ways of extracting background knowledge from a given large
dataset, for example item set mining~\cite{?}.
We will use a  modern MaxSAT-based approach to extracting the dependency of feature $f_i$, $i \in [m]$
on other set of features $f_j, j \in [m] \setminus \{i\}$.
It creates extracted background knowledge in the form of a
set of \emph{decision rules},
i.e. ``IF $\mbf{x}_c$ THEN $x_i$"",
where $\emph{x}_c \in \mbb{F} \setminus \fml{D}_i$.
A decision rule $x_1 \wedge \cdots \wedge x_m \rightarrow x_i$
can be seen to be equivalent to a clause
$\neg x_1 \vee \cdots \vee \neg x_m \vee x_i$.
}
\ignore{
\begin{example}\label{ex:rclause}
	To illustrate, how CNF formulas can serve to represent relations
	between features of a dataset, consider the data instances shown
	in~\autoref{tab:inst}.
	They are taken from simplified version\footnote{For clearness of the
		concepts studied and the ideas proposed, the running example used
		throughout will correspond to a \emph{simplified} version of the
		\emph{adult} dataset~\cite{kohavi-kdd96}, where some of the
	features are dropped. Note that the experimental results shown below
	deal with the original datasets.} of the well-known \emph{adult}
	dataset~\cite{kohavi-kdd96}.
	Let us focus on the feature \emph{\Status}.
	Then, assuming the dataset represents trustable information, the
	following two rules can be extracted:

	\begin{itemize}
		\item \tn{IF} Relationship $=$ Husband \tn{THEN} \Status{}
			$=$ Married
		\item  \tn{IF} Relationship $=$ Wife \tn{THEN} \Status{} $=$
			Married
	\end{itemize}
	Observe that a rule is of the form ``IF \emph{antecedent} THEN
	\emph{conclusion}'', where the antecedent is a set of literals over
	feature values and conclusion is a feature literal.
	Since both rules represent a logical implication, each of the them
	can also be considered as a clause sufficing to illustrate feature
	correlation:
	\begin{itemize}
		\item $\left(\,\neg \text{Relationship} =
			\text{Husband} \lor \text{\Status} = \text{Married}\,\right)$
		\item $\left(\,\neg \text{Relationship} = \text{Wife} \lor \text{\Status} = \text{Married}\,\right)$
	\end{itemize}
	\end{example}
}


\subsubsection{Classification Problems.}

Classification problems consider a set of classes $\fml{K} = \{c_1,
c_2, \ldots, c_k\}$, and a set of features $\fml{F}=\{1, \ldots, m\}$.
The value of each feature $i \in \fml{F}$ is taken from a domain
$\fml{D}_i$, which can be integer, real-valued or Boolean.
%
%
Therefore, the complete feature space is defined as
$\mathbb{F}\triangleq\prod_{i=1}^{m}\fml{D}_i$.
A concrete point in feature space is represented by
$\mbf{v}=(v_1,\ldots,v_m)\in\mbb{F}$, where each $v_i \in \mbf{v}$ is
a constant taken by feature $i\in\fml{F}$.
An \emph{instance} or \emph{example} is denoted by a specific point
$\mbf{v} \in\mbb{F}$ in feature space and its corresponding class $c
\in \fml{K}$, i.e. a pair ($\mbf{v}, c$) represents an instance.
Moreover, the notation $\mbf{x} = (x_1,\ldots,x_m)$ denotes an
arbitrary point in feature space, where each $x_i\in\mbf{x}$ is a
variable taking values from its corresponding domain $\fml{D}_i$ and
representing feature $i\in\fml{F}$.

A classifier defines a classification function $\tau: \mathbb{F}
\limply \fml{K}$.
Whenever convenient, a classification function $\tau$ and an
associated class $c$ are represented by a \emph{decision predicate}
$\tau_c: \mathbb{F} \to \{0, 1\}$.
A decision predicate $\tau_c$ is given a specific class $c \in
\fml{K}$, such that
$\forall(\mbf{x}\in\mbb{F}).\tau_c(\mbf{x})\leftrightarrow(\tau(\mbf{x})=c)$.
\ignore{
There are many ways to learn classifiers from a given dataset $\fml{E} =
\{e_1, e_2, \ldots, e_n\}$, where an instance $e_j=(\mbf{v}_j, c_j)$.
In this paper we consider: \emph{decision lists}~\cite{},
\emph{boosted trees}~\cite{} and \emph{binaried neural networks}~\cite{}.

\ignore{
A \emph{decision rule} is in the form of ``IF antecedent THEN prediction",
where the antecedent is a set of features.
In this paper, a decision rule can be viewed as a clause.

A \emph{decision list} (DL) is an ordered set of decision rules,
often written as a cascade of IF-THEN-ELSEs. An instance
$\mbf{v}\in\mbb{F}$ is classified by the first rule of a DL
that matches the instance $\mbf{v}$.

A \emph{boosted tree} (BT) is a tree ensemble defining decision
trees $T_c$ for each $c \in \fml{K}$.
Given an instance $\mbf{v}\in\mbb{F}$ the
class of the instance is obtained by computing the sum of scores assigned
by trees for each class $s(\mbf{v},c) = \sum_{t \in T_c} t(\mbf{v})$ and
assigning the class which has the maximum score $\textrm{argmax}_{c \in
  \fml{K}} s(\mbf{v},c)$.

A \emph{neural network} (NN) is a network $N$ composed of
of artificial neurons where an instance $\mbf{v}\in\mbb{F}$ is
passed through the input layer, hidden layers, and output layer,
defining a function $N(\mbf(v)$ in terms of the input instance.
In a \emph{binaried neural network} (BNN) the inputs and neuron values of the
hidden layers are binary (usually $b_n \in \{-1,1\}$)
and each neuron computes an
activation function as a
weighted sum of its inputs, e.g. $w_n = \sum_{i \in
  input(n)} w_{i,n} b_i$ and $b_n = 1$ if $w_n \geq a_n$
otherwise $b_n = -1$,
where $input(n)$ are the neurons (or inputs) that are input to neuron $n$,
$w_{i,n}$ is the weight of the connection from neuron (or input) $i$
to neuron $n$ and $a_n$ is the activation level of neuron $n$.
}

A \emph{decision rule} is in the form of ``IF antecedent THEN prediction",
where the antecedent is a set of features.
A \emph{decision list} (DL) is an ordered set of decision rules,
often written as a cascade of IF-THEN-ELSEs.
A \emph{boosted tree} (BT) is a tree ensemble defining decision
trees $T_c$ for each $c \in \fml{K}$.
A \emph{neural network} (NN) is a network $N$ composed of
of artificial neurons where an instance $\mbf{v}\in\mbb{F}$ is
passed through the input layer, hidden layers, and output layer,
defining a function $N(\mbf(v)$ in terms of the input instance.
In a \emph{binarized neural network} (BNN) the inputs and neuron values $b_n$
of the hidden layers are binary (usually $b_n \in \{-1,1\}$).
In this paper, we apply the known encoders of DLs~\cite{},
BTs~\cite{} and BNNs~\cite{} into the logic representation.
}
There are many ways to learn classifiers for a given dataset.
%
In this paper, we consider: \emph{decision
lists} (DLs)~\cite{rivest-ml87,clark-ml89}, 
\emph{boosted trees} (BTs)~\cite{friedman-tas01,guestrin-kdd16a}, and
\emph{binarized neural networks} (BNNs)~\cite{hcseyb-neurips16}.
%

\begin{table}[t!]
	\caption{Several examples extracted from \emph{adult} dataset.}
	\begin{adjustbox}{center}
	\label{tab:inst}
	\scalebox{0.69}{
		\setlength{\tabcolsep}{3pt}
		\begin{tabular}{ccccccc}\toprule
			\textbf{Education} & \textbf{\Status} & \textbf{Occupation} &
			\textbf{Relationship} & \textbf{Sex} & \textbf{Hours/w} & \textbf{Target} \\ \midrule
			HighSchool & Married & Sales &
			Husband & Male & $ 40$ to $ 45$ & $\geq 50$k\\ \midrule
			Bachelors & Married & Sales &
			Wife & Female & $\leq 40$ & $\geq 50$k \\ \midrule
			\label{inst:3}Masters & Married & Professional &
			Wife & Female & $\geq 45$ & $\geq 50$k \\ \midrule
			Masters & Married & Professional &
			Wife & Female & $\leq 40$ &  $\geq 50$k \\ \midrule
			Dropout & Separated & Service &
			Not-in-family &	Male & $\leq 40$ &  $< 50$k\\ \midrule
			Dropout & Never-Married & Blue-Collar &
			Unmarried & Male & $\geq 45$ & $\geq 50$k \\ \bottomrule
		\end{tabular}
	}
	\end{adjustbox}
\end{table}

\begin{example}\label{ex:cls}
	Consider the data shown in Table~\ref{tab:inst}.
	It represents a snapshot of instances taken from a simplified
	version\footnote{For simplicity, the running example used throughout the text
		will correspond to a \emph{simplified} version of the
		\emph{adult} dataset~\cite{kohavi-kdd96}, where some
	of the features are dropped. Note that the experimental
	results shown below deal with the \emph{original} datasets.} of the
	\emph{adult} dataset~\cite{kohavi-kdd96}.
	Figure~\ref{fig:models} illustrates DL and BT models trained for this
	dataset.
	Observe that for instance
	$\mbf{v} = \{\text{Education}=\text{HighSchool},$
	$\text{\Status}=\text{Married}$, $\text{Occupation}=\text{Sales}$,
	Relation\-ship~$=\text{Husband}, \text{Sex}=\text{Male},$ $\text{Hours/w}=\text{40 to 45}\}$
	from Table~\ref{tab:inst}, rule
	$\text{R}_\text{2}$ in the DL in Figure~\ref{fig:dl} predicts
	$\geq 50k$.
	Similarly, the sum of the weights ($0.1063$, $0.0707$
	and $-0.0128$ in the $3$ trees, respectively) for prediction $\geq
	50k$ is positive ($0.1642$) in the BT in
	Figure~\ref{fig:bt}, and so the BT model also predicts $\geq 50k$
	for the aforementioned instance $\mbf{v}$.
	%
  %
\end{example}

\ignore{
\subsection{Booleanization of Classification Problems}

The approaches we outline for explanation rely on the
input classification problem being binarized.
Note that the ML model we explain need not be defined on the binarized
version of the problems, but the explanations will be in terms of the
binarized problem.

We can easily convert a non-binary categorical feature $i$ where $\fml{D}_i$
is a finite set to a set of
binary features by \emph{one hot encoding}. We replace feature $i$ by
$|\fml{D}_i|$ Boolean
features of the form $v_i = d, d \in \fml{D}_i$.

Similarly we can binarize a real or large domain integer feature $i$
by quantizing the domain $\fml{D}_i$ into $k$ Boolean features.
Suppose $l = \min \fml{D}_i$ and $u = \max \fml{D}_i$ then a simple uniform
quantization replaces feature $i$ by the $k$ features
$l + (u-l)/k*i \leq v_i \leq l + (u-l)/k*(i+1), 0 \leq i < k$.
}

\ignore{
\subsection{Data Mining, Background Knowledge.}
\pnote{I am not sure we need this section, we do need to say what we assume
  background knowledge is, we do need to clarify that it is on the
  Booleanized form of the problem!}
\emph{Data mining} is the
discovery process of extracting useful information from large data.
}

\subsubsection{Interpretability and Explanations.}
Interpretability is not formally defined since it is a
subjective concept~\cite{lipton-cacm18}.
In this paper, we define interpretability as the
conciseness of the computed explanations
for an ML model to justify a provided prediction.
The definition of explanation for an ML model is built on earlier
work~\cite{darwiche-ijcai18,inms-aaai19,darwiche-ecai20,marquis-kr20,msi-aaai22},
where explanations are equated with \emph{abductive explanations} (AXps),
which are subset-minimal sets of features sufficing to
explain the prediction given by an ML model.
Concretely, given an instance $\mbf{v} \in \mbb{F}$ and a computed prediction
$c \in \fml{K}$, i.e. $\tau(\mbf{v}) = c$, an AXp is a subset-minimal set of
features $\fml{X} \subseteq \fml{F}$, such that

\begin{equation} \label{eq:axp}
\forall(\mbf{x} \in \mbb{F}). \bigwedge\nolimits_{i \in \fml{X}}
(x_i = v_i) \limply (\tau(\mbf{x}) = c)
\end{equation}

Abductive explanations are also prime implicants of
the decision predicate $\tau_c$ and hence
a \emph{prime implicant} (PI) explanation is another name for an AXp.

\begin{example}\label{ex:axp}
	Consider the models in Figure~\ref{fig:models} and
	instance $\mbf{v}$ from Example~\ref{ex:cls}.
	By examining the DL model, specifying
	$\text{Education} = \text{HighSchool}$, $\text{\Status} =
	\text{Married}$, $\text{Occupation} = \text{Sales}$, and
	$\text{Relationship} = \text{Husband}$ guarantees that any compatible
	instance is classified by $R_2$ independent of the values of other
	features, i.e.\ Sex and Hours/w.
	Similarly, the prediction of an instance is guaranteed to be
	$\geq 50k$ in Figure~\ref{fig:bt} as long as the feature values above
	are used, since the sum of weights is promised to be $0.1063 +
	0.0707 + -0.0128 = 0.1642$ for class $\geq 50k$.
	Therefore, the (only) AXp $\fml{X}$ for the prediction of $\mbf{v}$
	is $\{$Education, \Status, Occupation, Relationship$\}$ in
	both models.
  %
  %
\end{example}

We also consider \emph{contrastive explanations} (CXps)
defined as subset-minimal sets of features that are
necessary to change the prediction if the features of a CXp
are allowed to take arbitrary values from their
domains.
Formally and following~\cite{inams-aiia20}, a CXp for prediction
$\tau(\mbf{v})=c$ is defined as a minimal subset $\fml{Y} \subseteq
\fml{F}$ such that
\begin{equation} \label{eq:cxp}
	\exists(\mbf{x}\in\mbb{F}).\bigwedge\nolimits_{i\not\in\fml{Y}}(x_i=v_i)\land(\tau(\mbf{x})\not=c)
\end{equation}

\begin{example}\label{ex:cxp}
	Consider the setup of Example~\ref{ex:axp}.
	Given either model,
	$\fml{Y} = \{$Occupation$\}$ is a CXp for instance $\mbf{v}$
	because the prediction for $\mbf{v}$ can be
	changed if feature `Occupation' is allowed to take another value,
	e.g.\ if the value is changed to `Service'.
	%
	%
	Similarly, changing the value of feature `Occupation'
	to `Service' triggers that the weights in the $3$ trees become
	$0.1063$, $-0.2231$ and $-0.0128$.
	Therefore, the total weight is $-0.0982$, i.e.\ the prediction is
	changed.
	By further examining the two models, other subsets of features can
	be identified as CXps for~$\mbf{v}$.
	The set of CXps is $\mbb{Y} = \{\{$Education$\}$,
		$\{$\Status$\}$, $\{$Occupation$\}$,
		$\{$Relationship$\}\}$, while the set of AXps demonstrated in
		Example~\ref{ex:axp} is
		$\mbb{X}=\{\{$Education, \Status, Occupation, Relationship$\}\}$.
  %
  %
\end{example}

Recent work, which builds on the seminal work of
Reiter~\cite{reiter-aij87}, establishes a minimal hitting set (MHS)
duality relationship between AXps and CXps~\cite{inams-aiia20}.
In other words, each CXp \emph{minimally hit} every AXp, and
vice-versa.
The explanations enumeration algorithms used in this paper employ
this fact.

\begin{example} \label{ex:xdual}
	Observe how the minimal hitting set duality holds for the set of
	abductive explanations $\mbb{X}$ and the set of contrastive
	explanations $\mbb{Y}$ shown in Example~\ref{ex:cxp}.
	The only abductive explanation minimally hits all the
	contrastive explanations and vice versa.
\end{example}

There is a growing body of recent work on formal
explanations~\cite{msgcin-nips20,msgcin-icml21,ims-ijcai21,ims-sat21,barcelo-nips21,kutyniok-jair21,darwiche-jair21,kwiatkowska-ijcai21,mazure-cikm21,tan-nips21,iims-jair22,rubin-aaai22,iisms-aaai22,hiicams-aaai22,msi-aaai22,an-ijcai22,leite-kr22,barcelo-corr22}.

\section{Extracting Background Knowledge} \label{sec:extract}

Recent work~\cite{rubin-aaai22} argues that background knowledge is
helpful in the context of formal explanations.
The idea is that, if identified, background knowledge may help forbid
some of the combinations of feature values that would otherwise have
to be taken into account by a formal reasoner, thus, slowing the
reasoner down and making the explanations unnecessarily long.
But the question of how such knowledge can be obtained in an
automated way remains open.
\begin{example}\label{ex:rextract1}
	%
	Assume that Table~\ref{tab:inst} represents \emph{trustable} information. The
	following two rules can be extracted:
	\begin{itemize}
		\item \tn{IF} Relationship $=$ Husband \tn{THEN} \Status{}
			$=$ Married
		\item  \tn{IF} Relationship $=$ Wife \tn{THEN} \Status{} $=$
			Married
	\end{itemize}
	These rules may be used to discard feature
	\emph{\Status} when computing explanations
	as long as \emph{Relationship} equals either \emph{Husband} or
	\emph{Wife} because of the implications identified.
\end{example}

Here we describe the MaxSAT-based approach to automatically extract
background knowledge, which represents hidden relations between
features of a dataset if the dataset is assumed to be trustable.
It builds on the recent two-stage approach~\cite{ilsms-aaai21} to
learning smallest size decision
sets~\cite{resende-mp92,leskovec-kdd16,ipnms-ijcar18,meel-cp18,meel-aies19,yislb-cp20,yisb-jair21}.
Concretely, we apply the first stage of~\cite{ilsms-aaai21} which
enumerates individual decision rules given a dataset, using
MaxSAT.
%


Without diving into the details, the idea of~\cite{ilsms-aaai21}
is as follows.
Given training data $\fml{E}$ and target class $c\in\fml{K}$, a
MaxSAT solver is invoked multiple times, each producing a unique
subset-minimal (irreducible) rule in the form of \emph{``IF
\emph{antecedent} THEN \emph{prediction} $c$''}, where the antecedent
is a set of feature values.
The MaxSAT solver is fed with various CNF constraints and an objective
function targeting rule size minimization.
%
The approach also detects and blocks \emph{symmetric rules}, i.e.\ those that do not
contribute new information to the rule-based representation of class
$c\in\fml{K}$.
%

%

We can modify the MaxSAT approach outlined above to learning
background knowledge in the form of decision rules, i.e.\ identifying
the dependency of a feature $i \in \fml{F}$ on other features $j \in
\fml{F} \setminus \{i\}$.
For this, we need to discard the prediction column from the dataset
$\fml{E}$ and instead focus on a feature $i\in\fml{F}$, consider some
of its values $v_{ij}\in\fml{D}_i$ and ``pretend'' to compute decision
rules for a ``fake class'' $x_i=v_{ij}$.
Thanks to the properties of the approach of~\cite{ilsms-aaai21}, all
the rules computed are guaranteed to be subset-minimal and to respect
training data $\fml{E}$.
Once all the rules for feature $i\in\fml{F}$ and value
$v_{ij}\in\fml{D}_i$ are computed, the same exercise can be repeated
for all the values in $\fml{D}_i\setminus\{v_{ij}\}$ but, more
importantly, all the other features.


%
\begin{example}\label{ex:rextract2}
	Consider again Table~\ref{tab:inst}.
	The two rules shown in Example~\ref{ex:rextract1} are computed by our
	rule learning approach if we focus on feature \emph{\Status}.
	The following two rules can be extracted when feature
	\emph{Relationship} is focused on instead:
	\begin{description}
		\item[$\bullet$] \tn{IF} \Status{} $=$ Married $\land$ Sex $=$ M. \tn{THEN} Rel. $=$ Husband
		\item[$\bullet$] \tn{IF} \Status{} $=$ Married $\land$ Sex $=$ F. \tn{THEN} Rel. $=$ Wife
		\end{description}
\end{example}
As can be observed below, both Example~\ref{ex:rextract1} and
Example~\ref{ex:rextract2} may be used to shorten explanations under
certain conditions (see Section~\ref{sec:apply}).

\ignore{
\paragraph{one-hot encoding.}
When dealing with the \emph{one-hot encoding} of
a categorical feature $i\in\fml{F}$, i.e.\ $D_i=\{1,\ldots,n\}$, we
implicitly enforce the constraint $\sum_{k\in[n]}{x_{ik}}=1$, which
the explainer \emph{has to respect}. Here, Boolean variables
$x_{ik}$ represent unique values of feature $i$, i.e.\ $x_{ik}=1$ iff
$f_i=j$. Similarly for quantized ordinal feautures we constrain exactly on
quantile to hold.
}

\myparagraph{Duplicate Rules.}
%
%
As mentioned above, all rules generated with the MaxSAT approach
of~\cite{ilsms-aaai21} are guaranteed to be subset-minimal.
Furthermore, none of the rules enumerated is symmetric with another
rule if considered in the \emph{if-then} form.
However, when the rules are treated as clauses, i.e.\ a disjunction of
Boolean literals, some rules may duplicate the other.
Indeed, recall that a rule of size $k\leq \left|\fml{F}\right|$ is of
the form $(f_1 \land \ldots \land f_{k-1}) \rightarrow f_k$ where each
$f_i$ represents a literal $(x_i=v_{i_{j_i}})$, $i\in\fml{F}$ and
$v_{i_{j_i}}\in\fml{D}_i$.
Clearly, this same proposition can be equivalently represented as a
clause $(\neg{f_1} \lor \ldots \lor \neg{f_{k-1}} \lor f_k)$.
Observe that the same clause can be used to represent another rule
$(f_1 \land \ldots \land f_{k-2} \land \neg{f_k}) \rightarrow
\neg{f_{k-1}}$, which can thus be seen as symmetric in the
\emph{clausal form}.
This way, a clause of size $k$ represents $k$ possible rules.
However, due to symmetry, it suffices to compute only one of them and
block all the ``duplicates'' by adding its clausal representation to
the MaxSAT solver.
This novel symmetry breaking mechanism significantly improves the
scalability of our approach.

\begin{example} \label{ex:duplt}
	Consider a rule \{\,IF
	\text{\Status} $=$ \text{Married} $\land$
	\text{Sex} $=$ \text{Male} THEN
	\text{Relationship} $=$ \text{Husband}\,\}
	computed when compiling feature-value
	\text{Relationship} $=$ \text{Husband}.
  This rule is represented as a clause
  $$\left(\text{\Status} \neq \text{Married} \lor
  \text{Sex} \neq \text{M.} \lor
  \text{Relationship} = \text{Husband}\right)$$
  There are two duplicates in other contexts:
  \begin{description}
	  \item[$\bullet$]  \tn{IF}
	\text{\Status} $=$ \text{Married} $\land$
	\text{Rel.} $\neq$ \text{Husband} \tn{THEN}
	\text{Sex} = \text{F.}
	 \item[$\bullet$]  \tn{IF} \text{Sex} $=$ \text{M.} $\land$
	\text{Rel.} $\neq$ \text{Husband} \tn{THEN}
	\text{\Status} $\neq$ \text{Married}
\end{description}
  %
  %
\end{example}

\ignore{
Note that our approach does not extract the rules targeting
 $\neg f_i$, e.g. the last rule in Example~\ref{ex:duplt}, where
the rule target is the negation of $marital\ status = married$.
\pnote{ARE we assuming a Booleanized problem already, I assume we are.
Do we extract rules for $\neg f_i$ or not its not clear.
I think what this para is saying is if the feature is a Boolean resulting
from one-hot encoding or quantization we dont extract rules for the negation!}
\jynote{We don't extract rules for $\neg f_i$ when the domain of $f_i$ has more than
	2 values. But when the domain of
a feature has only 2 values, such as sex, we extract rules for sex != male, because
extracting rules for sex != male is equivalent to extracting rules for sex = female.
For simplicity, can we just say we don't extract rules for $\neg f_i$?}
}

\myparagraph{Extraction limit.}
Even if we remove duplicate rules, there can still be many
rules to enumerate for an entire dataset.
Many of them will never, or only rarely, contribute to reducing the
size of explanations of the classifier.
Extracting these \emph{low value} rules is unnecessary in the rule
extracting process.
In practice, we noticed that some rules (e.g. long rules or rules
having a low support) never contribute to explanation reduction.
\ignore{
Hence, using some common sense constructive limit (e.g. on rule
size, or support, among others) on the rules when enumerating them and
focusing only on the rules that satisfy the desired criteria helps us
not only avoid an overhead of exhaustive rule enumeration but also
does not damage the quality of explanations.
}
Hence, we apply an \emph{extraction limit} to prevent
exhaustive rule enumeration, which enables us to focus only on
\emph{most useful} rules.
Here, extraction limit can be a restriction of a user's choice, e.g. a
total extraction runtime, a limit on the number of rules, rule support
or size, etc.

%
%
\ignore{
Observe that the objective function fed to the MaxSAT solver (in the
form of soft clauses) describes the preference of rules, which the
solver has to respect when computing the rules.
%
%
For example, when size limit is selected as the extraction limit, the
objective function can offer the preference of small rules and,
therefore, smallest rules are computed first.
}

\begin{algorithm}[t]
	\caption{Rule Extraction}\label{alg:rextract}
	\textbf{Input}: Dataset $\fml{E}$, extraction limit  $\lambda$ \\
	\textbf{Output}: Rules $\varphi$
	\begin{algorithmic}[1]
		\STATE $\fml{E}_f, \fml{F} \gets \texttt{DropClass}(\fml{E}), \texttt{ExtractFeatures}(\fml{E})$
		\STATE $\varphi, B \gets \emptyset, \emptyset $ \COMMENT{to extract and block rules, resp.}
		\FOR {$i \in \fml{F}$} \label{line:forfeat}
		\FOR {$\textsf{rule}\in \texttt{EnumerateRules}(\fml{E}_f, i, B$)}
		\IF {\texttt{limit}($\textsf{rule}, \lambda$) is $true$}
		\STATE \textbf{break}
		\ENDIF
		\STATE $\varphi \gets  \varphi \cup \textsf{rule}$
		\ENDFOR
		\STATE $B \gets \varphi$
		\ENDFOR
		\RETURN $\varphi$
	\end{algorithmic}
\end{algorithm}

\ignore{
\begin{algorithm}[t]
	\SetStartEndCondition{ }{}{}%
	\SetKwProg{Fn}{Function}{:}{}
	\SetKwFunction{Rextract}{Rextract}%
	\SetKw{KwTo}{in}\SetKwFor{For}{for}{\string:}{}%
	\SetKwIF{If}{ElseIf}{Else}{if}{:}{elif}{else:}{}%
	\SetKwFor{While}{while}{:}{fintq}%
	\AlgoDontDisplayBlockMarkers\SetAlgoNoEnd\SetAlgoNoLine%
	\DontPrintSemicolon

	\caption{Rule Extracting}\label{alg:rextract}
	\Fn{\Rextract{$\fml{E}, \lambda$}}{
	\SetKwInOut{Input}{Input}\SetKwInOut{Output}{Output}
	\Input{Dataset $\fml{E}$, extraction limit  $\lambda$}
	\Output{Rules $\varphi$}
	$\fml{E}_f, \fml{F} \gets \texttt{DropClass}(\fml{E}), \texttt{ExtractFeatures}(\fml{E})$ \\
	$\varphi, B \gets \emptyset, \emptyset $ \tcp*{Extract rules and block rules}
	\For{$i \in \fml{F}$}{\label{line:forfeat}
		\For{$\textsf{rule}\in \texttt{EnumerateRules}(\fml{E}_f, i, B$)}{
			\uIf{\texttt{limit}($\textsf{rule}, \lambda$) is $true$}{
				\textbf{break}
			}
			 $\varphi \gets  \varphi \cup \textsf{rule}$

		}
		$B \gets \varphi$
	}
	\Return{$\varphi$}
	}
\end{algorithm}
}

A high-level view on the overall rule extraction approach is provided in Algorithm~\ref{alg:rextract}.
Initially, the class column from the original dataset $\fml{E}$ is
dropped and the features $\fml{F}$ in $\fml{E}$ are acquired.
For each feature $i \in \fml{F}$, the algorithm enumerates the
decision rules targeting $i$ until the extraction limit is met or no
more rules can be found.
The rules previously learned are blocked in the clausal form to avoid
computing their duplicates.
Finally, the algorithm returns the rules extracted.

\ignore{
Note that by construction each background rule computed is 100\% compatible
with the training data it is extracted from, i.e. every instance agrees
with (is a solution to) the background rule. Of course in practice the test
data is only a subset of the real feature space so we can extract background
rules which are not correct. In Section~\ref{sec:resrextract} we show that
background rules extracted have very high accuracy.

\myparagraph{Inconsistencies.}
\emph{Inconsistencies} in a dataset may exist when the approach
computes the decision rules targeting a selected feature.
However, due to the nature of \emph{coverage} and \emph{discrimination
constraints}\footnote{An interested reader is referred
to~\cite{ilsms-aaai21} for details on the constraints used.} used in
the original MaxSAT approach~\cite{ilsms-aaai21}, the inconsistencies
are not problematic for rule extracting since the inconsistent
instances are \emph{not covered} by the collected rules.

\begin{example}
	Observe that inconsistencies exist in the examples of instances
	in Table~\ref{tab:inst}, where two instances have the same subset of
	features values $\{$\emph{Education $=$ Masters, \Status{} $=$
			Married, Occupation $=$ Professional, Relationship $=$ Husband,
	Sex $=$ Female}$\}$ but different values of feature `Hours/w'.
	The corresponding two instances are not covered by the extracted
	rules targeting feature `Hours/w'.
	%
	%
	%
\end{example}
}

	%
	Our approach computes only rules that
	are perfectly consistent with the \emph{known} data, which makes
	sense if the data is extensive and trustworthy.
	In practical settings, however, some of the data are unknown, i.e.\
	the rules computed may be inconsistent with unseen parts of the
	feature space $\mbb{F}$.
	If testing and validation data are available,
        then the rules can be tested against them.
	We can then exclude the rules that are not
	\emph{sufficiently} accurate wrt.\ test and/or validation data.
	%

\section{Knowledge-Assisted Explanations} \label{sec:apply}

In this section, we show how to apply background knowledge as
additional constraints when computing a single formal abductive or
contrastive explanation for an ML model prediction but also when
enumerating them.
We also show how to identify the rules that have been used when
extracting formal explanations, which comes in handy when trustable
explanations are of concern.


We assume the obtained background knowledge can be represented as a
formula $\varphi$.
Under that assumption,~\cite{rubin-aaai22} proposes to
compute AXps for positive predictions of a Boolean classifier $\tau:
\mbb{F}\rightarrow\{0,1\}$ taking into account constraints $\varphi$.
Observe that formula $\varphi$ can be seen as representing a predicate
$\varphi:\mbb{F}\rightarrow\{0,1\}$, the truth value of which, i.e.\
$\varphi(\mbf{x})$, can be tested for an instance $\mbf{v}\in\mbb{F}$.
The approach of~\cite{rubin-aaai22} relies on
\emph{compiling} a Boolean classifier $\tau(\mbf{x})$ into a
\emph{tractable} representation~\cite{darwiche-ijcai18} and proposes
to compute an AXp $\fml{X}\subseteq\fml{F}$ for prediction
$\tau(\mbf{v})=1$, $\mbf{v}\in\mbb{F}$, subject to constraints
$\varphi$ as a prime implicant of
$\left[\varphi(\mbf{x})\rightarrow\tau(\mbf{x})=1\right]$.

Observe that we can generalize this idea to the context of computing
formal abductive and contrastive explanations for \emph{any
classifier} that admits a logical representation suitable for making
reasoning oracle calls wrt.\ formulas~\eqref{eq:axp}
and~\eqref{eq:cxp}.
Namely, given a prediction $\tau(\mbf{x})=c$, $\mbf{v}\in\mbb{F}$,
$c\in\fml{K}$, an abductive explanation $\fml{X}\subseteq\fml{F}$
subject to background knowledge $\varphi$ is such that:
\begin{equation} \label{eq:axpc}
	\forall(\mbf{x}\in\mbb{F}).\bigwedge\nolimits_{j\in\fml{X}}(x_j=v_j)\limply\left[\varphi(\mbf{x})\limply(\tau(\mbf{x})=c)\right]
\end{equation}

More importantly, the same can be done with respect to contrastive
explanations.
Given a prediction $\tau(\mbf{x})=c$,
$\mbf{v}\in\mbb{F}$, $c\in\fml{K}$, a contrastive explanation
$\fml{Y}\subseteq\fml{F}$ subject to background knowledge $\varphi$ is
such that the following holds:
\begin{equation} \label{eq:cxpc}
	\exists(\mbf{x}\in\mbb{F}).\bigwedge\nolimits_{i\not\in\fml{Y}}(x_i=v_i)\land\left[\varphi(\mbf{x})\land(\tau(\mbf{x})\neq c)\right]
\end{equation}

\ignore{

  %
  %


}

Note that~\eqref{eq:axpc} and~\eqref{eq:cxpc} are the negation of each
other, i.e.\ a subset of features $\fml{Y}\subseteq\fml{F}$ \emph{is}
a CXp for prediction $\tau(\mbf{x})=c$ iff
$\fml{X}=\fml{F}\setminus\fml{Y}$ \emph{is not} an AXp.
This means when dealing with either AXps or CXps, one can reason
about (un)satisfiability of formula
$\bigwedge\nolimits_{i\in\fml{Z}}(x_i=v_i)\land\left[\varphi(\mbf{x})\land(\tau(\mbf{x})\neq
c)\right]$ with $\fml{Z}$ being either $\fml{X}$ or
$\fml{F}\setminus\fml{Y}$ depending on the kind of target explanation.
%
%
Therefore, if background knowledge $\varphi$ is a
\emph{conjunction} of constraints, e.g.\ rules, we can integrate them
in the existing formal explainability setup
of~\cite{inms-aaai19} with no additional overhead.


Following~\cite{inams-aiia20} and applying the same arguments, an
immediate observation to make is that in the presence of background
knowledge, the minimal hitting set duality between AXps and CXps
holds:

\begin{proposition} \label{prop:xdual}
	Let $\mbf{v}\in\mbb{F}$ be an instance such that $\tau(\mbf{v})=c$,
	$c\in\fml{K}$, and background knowledge $\varphi$ is compatible with
	$\mbf{v}$.
	Then any AXp $\fml{X}$ for prediction $\tau(\mbf{v})=c$ minimally
	hits any CXp for this prediction, and vice versa.
\end{proposition}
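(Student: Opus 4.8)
The plan is to lift Reiter's classical minimal-hitting-set duality to the knowledge-augmented setting by observing that the only change introduced by background knowledge $\varphi$ is a strengthening of the antecedent/consequent conditions, and that this strengthening preserves the structural facts on which the duality rests. The cleanest route is to reduce everything to a single monotone predicate. Define, for the fixed instance $\mbf{v}$ and prediction $c$, the notion of a \emph{weak AXp} as any set $\fml{Z}\subseteq\fml{F}$ satisfying the universal implication in~\eqref{eq:axpc}, and observe that this property is monotone: if $\fml{Z}$ is a weak AXp and $\fml{Z}\subseteq\fml{Z}'$, then $\fml{Z}'$ is too, since fixing more features only makes the antecedent $\bigwedge_{j\in\fml{Z}'}(x_j=v_j)$ stronger. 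An AXp is then a subset-minimal weak AXp.

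The key step is to establish the exact complementarity already flagged in the text: $\fml{Y}$ satisfies the contrastive condition~\eqref{eq:cxpc} iff $\fml{F}\setminus\fml{Y}$ does \emph{not} satisfy the abductive condition~\eqref{eq:axpc}. First I would verify this by noting that~\eqref{eq:cxpc} is literally the negation of~\eqref{eq:axpc} under the substitution $\fml{X}=\fml{F}\setminus\fml{Y}$: the free features in the CXp are exactly $\fml{F}\setminus\fml{Y}$, so $\bigwedge_{i\notin\fml{Y}}(x_i=v_i)=\bigwedge_{j\in\fml{X}}(x_j=v_j)$, and the existential statement that $\varphi(\mbf{x})\land(\tau(\mbf{x})\neq c)$ can be met is precisely the failure of the universal implication $\bigwedge_{j\in\fml{X}}(x_j=v_j)\limply[\varphi(\mbf{x})\limply(\tau(\mbf{x})=c)]$. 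Here the compatibility hypothesis on $\varphi$ (that $\varphi$ holds at $\mbf{v}$) is what guarantees the witness space is nonempty and that $\fml{F}$ itself is a weak AXp, so the two collections are not degenerate.

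Given this complementarity, the duality follows from the abstract fact, due to Reiter, that for any monotone family of ``sufficient'' sets, the subset-minimal sufficient sets and the subset-minimal sets whose complements are \emph{not} sufficient form a pair of mutually minimal-hitting-set systems. Concretely, I would argue the two directions symmetrically: if $\fml{X}$ is an AXp and $\fml{Y}$ a CXp, then $\fml{X}\cap\fml{Y}\neq\emptyset$, for otherwise $\fml{X}\subseteq\fml{F}\setminus\fml{Y}$ would make $\fml{F}\setminus\fml{Y}$ a weak AXp by monotonicity, contradicting that $\fml{Y}$ is a CXp; minimality of the hit then comes from the minimality of $\fml{X}$ (resp.\ $\fml{Y}$) together with the complementarity. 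The main obstacle is bookkeeping rather than conceptual: one must be careful that the background-knowledge conjunct $\varphi(\mbf{x})$ is carried identically through both~\eqref{eq:axpc} and~\eqref{eq:cxpc} so that the negation is \emph{exact}, and that the compatibility assumption is invoked precisely where nonemptiness of witnesses is needed. Since the argument is entirely structural and $\varphi$ enters symmetrically on both sides, the proof is essentially that of~\cite{inams-aiia20} with $\tau(\mbf{x})=c$ uniformly replaced by $\varphi(\mbf{x})\limply(\tau(\mbf{x})=c)$, so I would state it as applying ``the same arguments'' and only verify the complementarity and compatibility points explicitly.
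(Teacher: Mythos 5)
Your proposal is correct and takes essentially the same route as the paper: the paper offers no explicit proof, merely observing that \eqref{eq:axpc} and \eqref{eq:cxpc} are negations of each other (your complementarity step) and stating that the arguments of \cite{inams-aiia20} carry over, which is exactly the monotone-predicate, Reiter-style argument you spell out. One slight imprecision: compatibility of $\varphi$ with $\mbf{v}$ is not what makes $\fml{F}$ a weak AXp---that follows from $\tau(\mbf{v})=c$ alone, since fixing all features forces $\mbf{x}=\mbf{v}$---but this does not affect the argument.
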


Proposition~\ref{prop:xdual} enables us to apply
algorithms originally studied in the context of over-constrained
systems~\cite{bs-dapl05,liffiton-jar08,blms-aicom12,mshjpb-ijcai13,mpms-ijcai15,iplms-cp15,lpmms-cj16,bcb-atva18}
to explore all AXps and CXps for ML predictions.
In particular, the existing explanation extraction and enumeration
algorithms~\cite{inms-aaai19,inams-aiia20} can be readily applied by
taking into account background knowledge, as shown
in~\eqref{eq:axpc} and~\eqref{eq:cxpc}.
%
%
%

Gorji et al.~\cite{rubin-aaai22} proved that
subset-minimal AXps
computed subject to additional
constraints for Boolean classifiers tend to be smaller than their
unconstrained ``counterparts''.
The rationale is that when additional constraints are imposed, some of
the features $i\in\fml{F}$ may be dropped from an AXp because the
equalities $x_i=v_i$ falsify the constraints, i.e.\ they represent data
instances that are \emph{not permitted} by the constraints.
Based on their result, the following generalization can be proved to 
hold:
\begin{proposition} \label{prop:caxp}
	%
	Consider $\mbf{v}\in\mbb{F}$ such that $\tau(\mbf{v})=c$,
	$c\in\fml{K}$, and background knowledge $\varphi$ is compatible with
	$\mbf{v}$.
	%
	%
	Then for any subset-minimal AXp $\fml{X}\subseteq\fml{F}$ for
	prediction $\tau(\mbf{v})=c$, there is a subset-minimal AXp
	$\fml{X}'\subseteq\fml{F}$ for $\tau(\mbf{v})=c$ subject to
	background knowledge $\varphi$ such that $\fml{X}'\subseteq\fml{X}$.
	%
\end{proposition}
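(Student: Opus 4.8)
The plan is to argue from the monotone structure of the sufficiency condition underlying AXps. Call a set $\fml{S}\subseteq\fml{F}$ \emph{sufficient} for $\tau(\mbf{v})=c$ if it satisfies the implication in~\eqref{eq:axp} with the minimality requirement dropped, and \emph{sufficient subject to} $\varphi$ if it satisfies the corresponding condition~\eqref{eq:axpc}. An AXp is then a subset-minimal sufficient set, and an AXp subject to $\varphi$ is a subset-minimal set that is sufficient subject to $\varphi$. First I would record that both notions are \emph{monotone}: enlarging $\fml{S}$ only strengthens the antecedent $\bigwedge_{i\in\fml{S}}(x_i=v_i)$ of the respective implication, so any superset of a sufficient (resp.\ sufficient-subject-to-$\varphi$) set is again sufficient (resp.\ sufficient-subject-to-$\varphi$). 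Consequently every sufficient set contains at least one subset-minimal sufficient set, obtainable by greedily discarding features while sufficiency is preserved.

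The crux is the following observation: every set that is sufficient \emph{without} background knowledge is also sufficient \emph{subject to} $\varphi$. Indeed, fix such an $\fml{S}$ and any $\mbf{x}\in\mbb{F}$ with $\bigwedge_{i\in\fml{S}}(x_i=v_i)$; plain sufficiency gives $\tau(\mbf{x})=c$, whence the consequent of the inner implication $\varphi(\mbf{x})\limply(\tau(\mbf{x})=c)$ in~\eqref{eq:axpc} holds and the whole implication is satisfied. In words, appending the hypothesis $\varphi(\mbf{x})$ can only \emph{weaken} the sufficiency requirement, so the collection of sufficient sets can only grow when background knowledge is added.

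Combining the two ingredients, let $\fml{X}$ be any subset-minimal AXp for $\tau(\mbf{v})=c$. Then $\fml{X}$ is sufficient, hence by the observation it is sufficient subject to $\varphi$. By monotonicity, $\fml{X}$ contains a subset-minimal set $\fml{X}'\subseteq\fml{X}$ that is sufficient subject to $\varphi$, i.e.\ an AXp for $\tau(\mbf{v})=c$ subject to background knowledge $\varphi$. This $\fml{X}'$ witnesses the claim, so the proof reduces to the two structural facts above.

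The step most likely to cause trouble is getting the direction of the implication right in the crux observation: one must check that the extra hypothesis $\varphi(\mbf{x})$ sits on the \emph{left} of the inner implication and therefore relaxes, rather than tightens, the requirement on $\fml{S}$. I would also flag the role of the compatibility hypothesis $\varphi(\mbf{v})=1$: it guarantees the constrained problem is non-degenerate---for instance $\fml{F}$ itself is sufficient subject to $\varphi$, since fixing all features forces $\mbf{x}=\mbf{v}$ and $\tau(\mbf{v})=c$---so that a subset-minimal AXp subject to $\varphi$ genuinely exists. The monotonicity argument itself does not otherwise rely on compatibility, and the statement is exactly the generalization of the Boolean-classifier result of Gorji et al.\ to arbitrary classifiers admitting the reasoning oracle calls of~\eqref{eq:axpc}.
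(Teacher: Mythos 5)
Your proof is correct and follows essentially the same route as the paper's: the key observation that any set satisfying~\eqref{eq:axp} also satisfies~\eqref{eq:axpc} (since $\varphi(\mbf{x})$ only weakens the inner implication), followed by shrinking $\fml{X}$ to a subset-minimal $\fml{X}'\subseteq\fml{X}$ satisfying~\eqref{eq:axpc}, is exactly the paper's argument, with your monotonicity remark making explicit why the linear-search traversal the paper invokes yields a genuinely subset-minimal set. Your additional note on the role of compatibility of $\varphi$ with $\mbf{v}$ is a sensible clarification the paper leaves implicit, but it does not change the structure of the proof.
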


\begin{proof}
	First, observe that if~\eqref{eq:axp} holds for a set $\fml{S}$
	then~\eqref{eq:axpc} holds for $\fml{S}$ too.
	%
	%
	Let $\fml{X}$ be a subset-minimal AXp for $\tau(\mbf{v})=c$ with no
	knowledge of $\varphi$, i.e.~\eqref{eq:axp} holds for $\fml{X}$.
	Thanks to the observation above,~\eqref{eq:axpc} also holds for
	$\fml{X}$.
	To make it subset-minimal subject to $\varphi$, we can apply linear
	search feature traversal (similar to the AXp extraction
	algorithm~\cite{inms-aaai19}) checking if any of the features of
	$\fml{X}$ can be dropped s.t.~\eqref{eq:axpc} still holds.
	The result subset-minimal set of features $\fml{X}'$ is the target
	AXp subject to knowledge $\varphi$.
\end{proof}

\begin{remark}
	Note that the opposite, i.e.\ that given AXp $\fml{X}'$ subject to
	background knowledge $\varphi$, there must exist a subset-minimal AXp
	$\fml{X}\supseteq\fml{X}'$ without background knowledge $\varphi$,
	in general does not hold.
	To illustrate a counterexample, consider a fully Boolean classifier
	$\tau:\{0,1\}^3\rightarrow\{0,1\}$ on features $\fml{F}=\{a, b,
	c\}$, which returns 1 iff $(a+b+c)\geq 2$.
	Consider instance $\mbf{v}=(1,1,0)$ classified as 1.
	Given knowledge $\varphi=(\neg{c}\rightarrow
	a)\land(\neg{c}\rightarrow b)$, a valid subset-minimal AXp is
	$\fml{X}'=\{c\}$.
	However, when discarding knowledge $\varphi$, the only
	subset-minimal AXp for $\mbf{v}$ is
	$\fml{X}=\{a,b\}\not\supseteq\fml{X}'$.
\end{remark}

\begin{figure}[t]
	\centering
        \frmeq{\scriptsize\arraycolsep=2pt
	\begin{array}{lllcl}
		\tn{R$_{0}$:} & \tn{IF}     & \text{\Status{} = Married} & \tn{THEN} & \text{Target $\geq 50$k} \\
		\tn{R$_{1}$:} & \tn{ELSE IF} & \text{Sex = Male}  \land \text{Relationship} \neq \text{Husband} & \tn{THEN} & \text{Target $< 50$k} \\
            \tn{R$_{\tn{\textsc{def}}}$:} & \tn{ELSE} &  & \tn{THEN} &  \text{Target $\geq 50$k} \\
        \end{array}
	}
	\caption{A DL for selected examples of \emph{adult} dataset.}
    \label{fig:dlexmp}
\end{figure}

\begin{example} \label{ex:smallaxp}
	Consider the DL in Figure~\ref{fig:dlexmp}.
	Given an instance~$\mbf{v} = \{$Education = Dropout, \Status{}
		= Separated, Occupation = Service, Relationship = Not-in-Family,
		Sex = Male, Hours/w = $\leq 40\}$, the prediction
		enforced by \tn{R$_{1}$} is $\leq 50$k and the AXp is $\fml{X} =
		\{$\Status, Relationship, Sex$\}$.
	Let a single constraint~$\varphi$ be
	$\{ \text{Sex} = \text{Male} \land \text{Relationship} =
	\text{Not-in-Family} \rightarrow \text{\Status} =
\text{Separated} \}$.
	%
	%
	Feature `\Status' can be dropped because the constraint $\varphi$
	ensures it to be set to the ``right value'' if the other two
	features are set as required, and hence \tn{R$_{0}$} is guaranteed
	not to fire.
	Thus, we can compute a smaller
	AXp $\fml{X}' = \{$ Relationship, Sex $\}$.
\end{example}

While using background knowledge~$\varphi$ pays off in terms of
interpretability of abductive explanations, this cannot be said
wrt.~contrastive explanations.
Surprisingly and as the following result proves, background knowledge
can only contribute to increase the size of contrastive
explanations.



\begin{proposition} \label{prop:ccxp}
	Consider $\mbf{v}\in\mbb{F}$ such that $\tau(\mbf{v})=c$,
	$c\in\fml{K}$, and background knowledge $\varphi$ is compatible with
	$\mbf{v}$.
	%
	%
	%
	Then for any subset-minimal CXp $\fml{Y}'\subseteq\fml{F}$ for
	prediction $\tau(\mbf{v})=c$ subject to knowledge
	$\varphi$, there is subset-minimal CXp $\fml{Y}\subseteq\fml{F}$ is
	a CXp for prediction $\tau(\mbf{v})=c$ such that
	$\fml{Y}'\supseteq\fml{Y}$.
\end{proposition}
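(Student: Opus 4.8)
The plan is to mirror the proof of Proposition~\ref{prop:caxp} but in the dual direction, exploiting the fact that the knowledge-aware existential condition~\eqref{eq:cxpc} is logically \emph{stronger} than the unconstrained one~\eqref{eq:cxp}. The central observation I would record first is this: any witness $\mbf{x}\in\mbb{F}$ satisfying $\bigwedge_{i\not\in\fml{S}}(x_i=v_i)\land\varphi(\mbf{x})\land(\tau(\mbf{x})\neq c)$ for a set $\fml{S}$ also satisfies $\bigwedge_{i\not\in\fml{S}}(x_i=v_i)\land(\tau(\mbf{x})\neq c)$, simply by dropping the conjunct $\varphi(\mbf{x})$. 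Hence if the (non-minimal) CXp condition~\eqref{eq:cxpc} holds for $\fml{S}$ subject to $\varphi$, then~\eqref{eq:cxp} holds for $\fml{S}$ with no background knowledge.

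Next I would note the monotonicity of the non-minimal CXp property: if $\fml{S}$ satisfies~\eqref{eq:cxp} and $\fml{S}\subseteq\fml{T}$, then $\fml{T}$ also satisfies~\eqref{eq:cxp}, because fixing fewer features (the indices $i\not\in\fml{T}$ form a subset of $i\not\in\fml{S}$) only enlarges the set over which the existential quantifier ranges, so the same witness still works. Thus ``being a (weak) CXp'' is upward closed, and a set is a subset-minimal CXp exactly when it satisfies~\eqref{eq:cxp} but no proper subset does.

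With these two facts in hand, take an arbitrary subset-minimal CXp $\fml{Y}'\subseteq\fml{F}$ for $\tau(\mbf{v})=c$ subject to $\varphi$; by definition $\fml{Y}'$ satisfies~\eqref{eq:cxpc}, and by the first observation it therefore satisfies~\eqref{eq:cxp}, i.e.\ $\fml{Y}'$ is a (possibly non-minimal) CXp without background knowledge. I would then apply deletion-based minimization, exactly as in the CXp extraction algorithm of~\cite{inms-aaai19,inams-aiia20}: traverse the features of $\fml{Y}'$ one by one, discarding any feature whose removal preserves~\eqref{eq:cxp}. Since deletion only shrinks the set, the resulting subset-minimal CXp $\fml{Y}$ satisfies $\fml{Y}\subseteq\fml{Y}'$, which is precisely $\fml{Y}'\supseteq\fml{Y}$ as required.

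The main point requiring care is keeping the two target conditions straight: the minimization must be carried out against the \emph{unconstrained} predicate~\eqref{eq:cxp}, and the soundness of the whole argument rests on the one-directional implication \eqref{eq:cxpc}$\Rightarrow$\eqref{eq:cxp} for a fixed set (the reverse fails, which is why only containment and not equality is claimed). I would also remark that the hypothesis that $\varphi$ is compatible with $\mbf{v}$ guarantees the framework is well-posed and that a witness for $\fml{Y}'$ exists to begin with, so the minimization starts from a genuine CXp.
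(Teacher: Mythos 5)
Your proof is correct and follows essentially the same route as the paper's: the key observation that \eqref{eq:cxpc} implies \eqref{eq:cxp} for a fixed set (by dropping the conjunct $\varphi(\mbf{x})$ from the witness), followed by deletion-based (linear search) minimization of $\fml{Y}'$ against the unconstrained condition \eqref{eq:cxp} to obtain a subset-minimal CXp $\fml{Y}\subseteq\fml{Y}'$. The additional remarks on monotonicity and on why the containment is only one-directional are sound elaborations of the same argument.
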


\begin{proof}
	First, observe that if~\eqref{eq:cxpc} holds for a set $\fml{S}$
	then~\eqref{eq:cxp} holds for $\fml{S}$ too.
	%
	%
	%
	Let $\fml{Y}'$ be a subset-minimal CXp subject to background
	knowledge $\varphi$, i.e.~\eqref{eq:cxpc} holds for $\fml{Y}'$.
	By the observation made above,~\eqref{eq:cxp} also holds for
	$\fml{Y}'$.
	Now, by applying linear search dropping features of $\fml{Y}'$ and
	checking~\eqref{eq:cxp}~\cite{inms-aaai19}, one can get a
	subset-minimal $\fml{Y}\subseteq\fml{Y}'$ wrt.~\eqref{eq:cxp}, i.e.\
	$\fml{Y}$ is a subset-minimal CXp.
\end{proof}

\begin{remark}
	Note that the reverse direction: given a CXp $\fml{Y}$ generated
	without using background knowledge, there must exists a CXp
	$\fml{Y'} \supseteq \fml{Y}$ using background knowledge, does not
	hold.
	Consider a classifier on Boolean features $\fml{F} = \{a,b,c\}$
	which returns the parity ODD, EVEN of $a+b+c$.
	Consider background knowledge $a = b$.
	Now $\fml{Y} = \{a\}$ is a CXp for $\tau(1,1,1)=\text{ODD}$ without
	using background knowledge supported by instance
	$\tau(0,1,1)=\text{EVEN}$.
	But this does not agree with the background knowledge.
	The only CXp using the background knowledge is $\{c\}$, because $a$
	and $b$ must change together they never affect the parity.
	However and as our experimental results confirm, in practice these
	examples do not arise, as we always find a CXp using
	background knowledge that extends a CXp without background
	knowledge.
\end{remark}

One may wonder then why background knowledge is useful when
computing CXps.
The reason is that the CXps generated using background knowledge are
\emph{correct} under the assumption that the background knowledge
describes the actual relationships between features.
On the contrary, CXps generated without using background knowledge are
\emph{only correct} under the assumption that every combination of
feature values is possible, i.e.\ all features are independent and
their values are uniformly distributed across the feature space,
which hardly ever occurs in practice.

\begin{example}
	Consider the setup of Example~\ref{ex:smallaxp}.
	Observe that
	a CXp for the prediction is $\fml{Y} = \{$
	\Status{} $\}$.
	Its correctness relies on the fact that changing
        \Status{} to Married changes the prediction to $\geq 50k$.
	But given the background knowledge $\varphi$, this is clearly
	erroneous.
	Since the other fixed features in instance $\mbf{v}$ are $\{$ Sex
	$=$ Male, Education $=$ Dropout, Occupation $=$ Service,
	Relationship $=$ Not-in-family, Hours/w $=$ $\leq 40\;\}$, the
	modification is inconsistent with knowledge $\varphi$.
	This demonstrates the weakness of CXps as they rely on the
	assumption that any tuple of feature values in $\mbb{F}$ is
	possible.
	Applying constraint $\varphi$
	leads to
	a larger CXp $\fml{Y}' \triangleq \{$ \Status,
	Relationship
	$\}$.
	This clearly does allow the prediction to change and it is
	compatible with $\varphi$.
\end{example}

\subsection{Attributing Responsibilities to Knowledge}
Since the computed background knowledge is not always useful, e.g.\
the extracted rules may not necessarily contribute to smaller AXps, we
introduce an approach to discovering which of the rules are used to
reduce an explanation.
Using this approach, we can observe and measure the effect of the
value of extraction limit discussed in Section~\ref{sec:extract}, e.g.\
the size limit of 5 can be considered as a reasonable extraction
limit if the size of most of the \emph{useful rules} is no more than
5.
A further usage is that when providing a user with an explanation, we
can expose which background knowledge was used to generate the
explanation.
This enables the user to assess the quality of the rules used and
decide whether they trust or disagree with the background knowledge.

\begin{algorithm}[t!]
	\caption{Determine Background Knowledge Used}\label{alg:usedrules}
	\textbf{Input}: Classifier $\tau$, instance $\mbf{v}$, prediction $c=\tau(\mbf{v})$,
		constraints $\varphi$, AXp $\fml{X}'$ \\
	\textbf{Output} Used rules: $\varphi_u\subseteq\varphi$
	\begin{algorithmic}[1]
		\STATE $\varphi_u \gets \varphi$
		\IF {\label{ln:check1}$\texttt{Entails}(\fml{X}', \tau, \mbf{v}, c, \emptyset$)}
		\RETURN $\emptyset$
		\ENDIF
		\FOR {$r \in \varphi$}
			\IF {\label{ln:check2}$\texttt{Entails}(\fml{X}', \tau, \mbf{v}, c, \varphi_u \setminus \{r\}$)}
			\STATE $\varphi_u  \gets \varphi_u  \setminus \{r\}$
			\ENDIF
		\ENDFOR
		\RETURN $\varphi_u$
	\end{algorithmic}
\end{algorithm}

%

%
Given a knowledge-assisted AXp $\fml{X}'$ for prediction
$\tau(\mbf{v})=c$ and background knowledge $\varphi$, Algorithm~\ref{alg:usedrules}
reports a subset of rules $\varphi$ \emph{responsible} for the
explanation $\fml{X}'$.
The algorithm makes use of a number of calls to \texttt{Entails},
which is meant to be a call to reasoning oracle deciding the validity
of formula~\eqref{eq:axpc} subject to background knowledge specified
as the \emph{final parameter}.
First, we check if $\fml{X}'$ satisfies the AXp
condition~\eqref{eq:axp} with no knowledge given. 
If this is the case, then no rules are used when computing $\fml{X}'$
and so the algorithm returns $\emptyset$.
Otherwise, the algorithm proceeds by considering each rule
$r\in\varphi$ one by one and checking if condition~\eqref{eq:axpc}
holds for background knowledge $\varphi\setminus\{r\}$ (see
line~\ref{ln:check2}).
If it does then rule $r$ can be dropped;
otherwise, it is necessary for AXp $\fml{X}'$ and is thus kept.
This simple linear search procedure ends up identifying a
subset-minimal set of rules $\varphi_u$ that are responsible for
abductive explanation $\fml{X}'$.

\ignore{
Algorithm~\ref{alg:usedrules} outlines the proposed approach.
Given a knowledge-assisted AXp $\fml{X}'$ for prediction
$\tau(\mbf{v})=c$ and background knowledge $\varphi$, the algorithm
reports a subset of rules $\varphi$ \emph{responsible} for the
explanation $\fml{X}$.
The algorithm makes use of a number of calls to \texttt{Entails},
which is meant to be a call to reasoning oracle deciding the validity
of formula~\eqref{eq:axpc} subject to background knowledge specified
as the \emph{final parameter}.
First, we check if $\fml{X}'$ satisfies the AXp
condition~\eqref{eq:axp} with no knowledge given (see
line~\ref{ln:check1}).
If this is the case, then no rules are used when computing $\fml{X}'$
and so the algorithm returns $\emptyset$.
Otherwise, the algorithm proceeds by considering each rule
$r\in\varphi$ one by one and checking if condition~\eqref{eq:axpc}
holds for background knowledge $\varphi\setminus\{r\}$ (see
line~\ref{ln:check2}).
If it does then rule $r$ can be dropped;
otherwise, it is necessary for AXp $\fml{X}'$ and is thus kept.
This simple linear search procedure ends up identifying a
subset-minimal set of rules $\varphi_u$ that are responsible for AXp
$\fml{X}'$.
}

Note that a similar algorithm can be outlined for
identifying background knowledge useful when computing contrastive
explanations.
In that case, instead of making calls to \texttt{Entails}, one would
need to make calls to a reasoner deciding the validity of
formula~\eqref{eq:cxpc} subject to a varying set of background
constraints $\varphi_u$.



\section{Experimental Results} \label{sec:res}

This section assesses our approach to extracting background knowledge
wrt. popular association rule mining algorithms
and the quality of the enumerated AXps and CXps with background
knowledge for 3 different ML models: DLs, BTs, and BNNs.
Finally, this section applies the background knowledge identified for
evaluating correctness of the explanations produced by some heuristic
ML explainers.

\myparagraph{Setup and Prototype Implementation.}
All the experiments were run on an Intel Xeon 8260 CPU running Ubuntu
20.04.2 LTS, with a memory limit of 8 GByte.
%
%
A prototype of the proposed approach to extracting background
knowledge and computing AXps and CXps applying background knowledge
was developed as a set of Python scripts.\footnote{The implementation
as well as all datasets and logs of our experiments is available at
\url{https://github.com/jinqiang-yu/xcon22}.}
The implementation of knowledge extraction builds
on~\cite{ilsms-aaai21} and extensively uses state-of-the-art SAT
technology~\cite{imms-sat18,imms-jsat19}.
%
Also, the implementation of explanation enumeration for DLs and BNNs
makes use of the SAT technology~\cite{imms-sat18} while for BTs we
apply modern SMT solvers~\cite{gm-smt15}.

A few words should be said about the competition considered.
First, we compared our knowledge extraction approach to
the Apriori and Eclat algorithms~\cite{as-vlbd94,zaki-kdd97}.
In our experiments, these algorithms behave almost identically with
Eclat solving one more instance; as a result, we use Eclat as the best
competitor.
%
%
%
%
When running Eclat, we apply the same setup as used for our approach.
Finally, heuristic explainers are represented by
LIME~\cite{guestrin-kdd16}, SHAP~\cite{lundberg-nips17}, and
Anchor~\cite{guestrin-aaai18} in their default configurations.

\myparagraph{Datasets.}
The benchmarks considered include a selection of datasets publicly
available from UCI Machine Learning Repository~\cite{dua-2019} and
Penn Machine Learning Benchmarks~\cite{Olson2017PMLB}.
In total, 24 datasets are selected.
%
Whenever applicable, numeric
features in all benchmarks were quantized into 4, 5, or 6 intervals.
%
%
Therefore, the total number of quantized datasets considered is 62.

\myparagraph{Machine Learning Models.}
We used CN2~\cite{clark-ml89} to train the DL models studied.
%
BTs were computed by XGBoost~\cite{guestrin-kdd16} s.t. each class
is represented by 25 trees of depth 3.
%
%
BNNs were trained by PyTorch~\cite{pytorch-neurips19}.
%
Three configurations of hidden layers\footnote{The 3
  configurations are classified as \emph{small}, \emph{medium} and
  \emph{large}. The size of the hidden layers of these
  3 configurations is as follows: large: (64, 32, 24, 2); medium:
(32, 16, 8, 2); small: (10, 5, 5, 2).} were used when training
BNNs to achieve sufficient test accuracy.
As usual, each of the 62 datasets was randomly split into
80\% of training and 20\% of test data,
respectively.
The average test accuracy of the
DL, BT, and BNN models was 76.47\%, 76.17\%, and 80.31\%.

\subsection{Knowledge Extraction}\label{sec:resrextract}

\begin{figure}[!t]
  \centering
    \centering
    \includegraphics[width=0.46\textwidth]{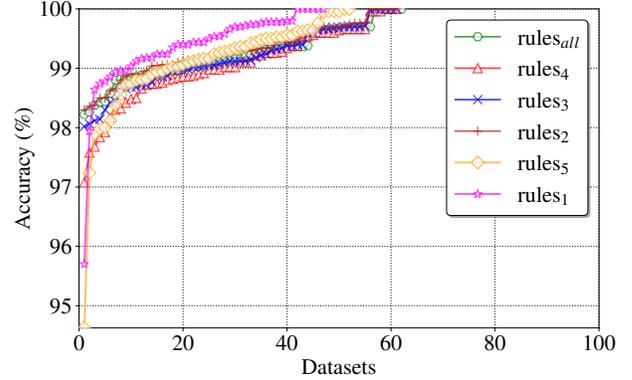}
    \caption{Accuracy of rules extracted by \emph{xcon}.}
    \label{fig:raccry}
\end{figure}

\begin{table}[t!]
\centering
\caption{Average accuracy of individual rules over test data.}
\label{tab:racry}
\scalebox{0.84}{
\begin{tabular}{ccccccc}\toprule
 & rule$_1$ & rule$_2$ & rule$_3$ & rule$_4$ & rule$_5$ & rule$_{all}$ \\ \midrule
Accuracy (\%) & $99.22$ & $99.35$ & $99.29$ & $99.16$ & $99.06$ & $99.08$ \\ \bottomrule
\end{tabular}
}
\end{table}

%
%
\ignore{
Although the proposed knowledge extraction approach computes rules
that are fully consistent with the known (training) data, to evaluate
how it performs in a real life scenario, we applied 5-fold cross
validation, i.e.\ each dataset was split into 5 chunks of training and
test (unseen) data and the average result across all 5 train-test
chunks was calculated.
%
%
%
%
Given a rule, its accuracy is calculated as $\frac{I - E}{I}$, where
$I$ is the total number of test instances while $E$ is the number of
test instances in that disagree with the rule.
An instance $\mbf{v}$ disagrees with a rule $r$ if $\mbf{v}$ falsifies
$r$.
%
The accuracy for the entire dataset is defined as the average rule
accuracy across the 5 folds.
%
Also, the extraction limit value when enumerating rules was 5, i.e. we
examined the average accuracy of rules of size $s\in \{1, \ldots,5\}$,
where $s$ is the number of literals in the left-hand size of the rule,
but also the accuracy of \emph{all rules} of size \emph{up to 5}.
The corresponding approaches are denoted as $rules_s$,
$s\in\{1,\ldots,5,\text{\em``all''}\}$.
}
Knowledge extraction is tested using 5-fold cross validation.
The average accuracy of each rule is measured as the proportion of test
instances that violate that rule, averaged over the folds.
We consider rules of length 1 to 5, and extracting all possible rules
(\emph{all}).
%
%
%
%
%
%
%
\begin{figure}[!t]
	\centering
	%
  \begin{subfigure}[b]{0.238\textwidth}
    \centering
    \includegraphics[width=\textwidth]{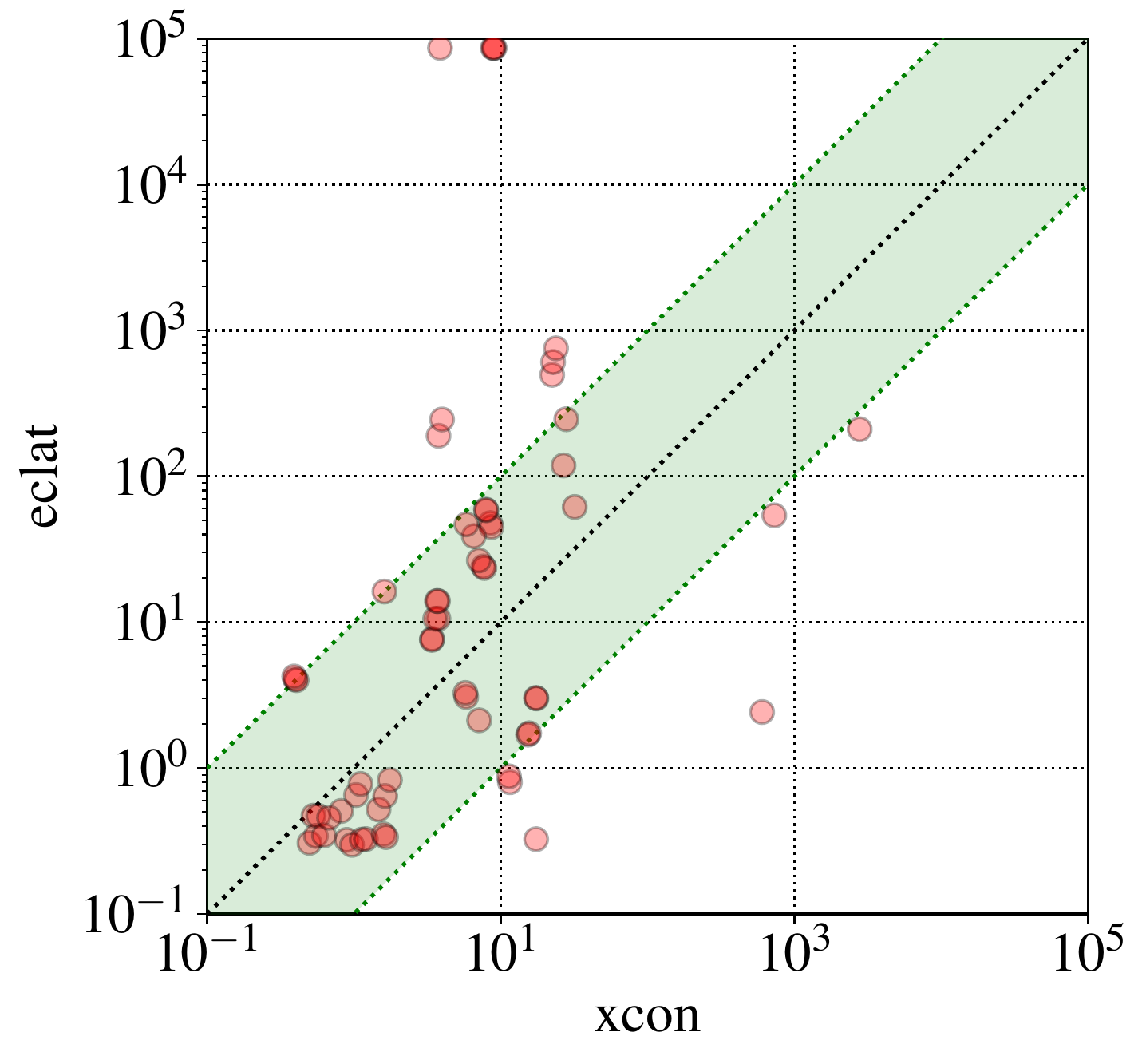}
    \caption{Runtime comparison.}
    \label{fig:eclatsizertime}
  \end{subfigure}
  \begin{subfigure}[b]{0.23\textwidth}
    \centering
    \includegraphics[width=\textwidth]{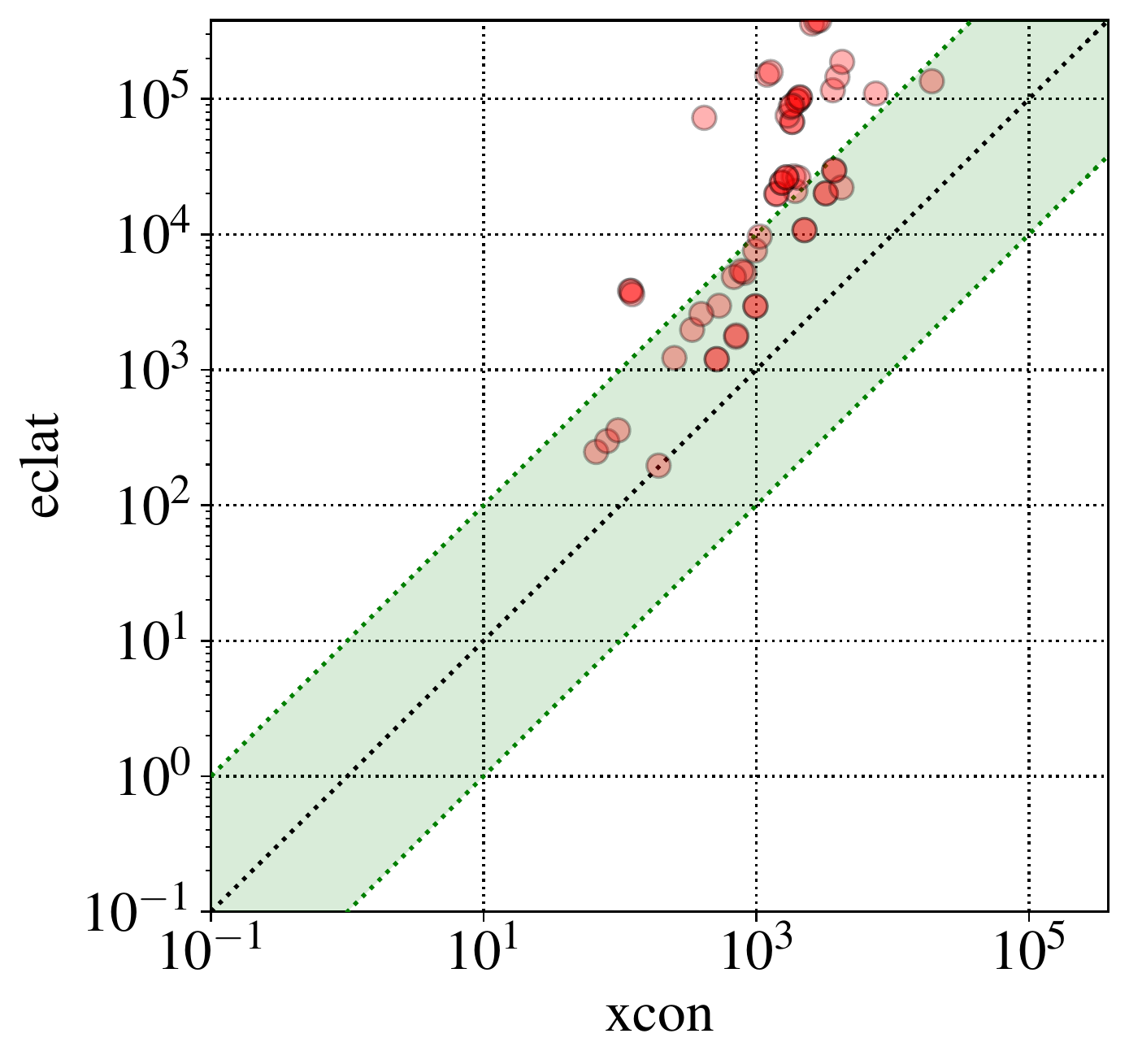}
    \caption{Number of rules comparison.}
    \label{fig:eclatsizenrules}
  \end{subfigure}
  \caption{\emph{Eclat} vs. \emph{xcon} -- performance comparison.}
  \label{fig:eclatrtime}
\end{figure}
Figure~\ref{fig:raccry} and Table~\ref{tab:racry} compare the average accuracy of the background
knowledge extracted.
%
As can be observed, the average accuracy of \emph{all rules}
and $rules_s$, $s\in \{1,\ldots,5\}$, exceeds 99\%.
\ignore{
As can be observed, average rule accuracy gets no lower than 94\%.
Furthermore, the average accuracy of \emph{all rules} in all datasets
is over 98\%.
Finally, for the majority of datasets, the accuracy of $rules_s$,
$s\in \{1,\ldots,5\}$, also exceeds 98\%.
}

Additionally, we compare the overall performance of exhaustive rule
extraction against rule extraction with the size limit 5.
On average, exhaustive (limited, resp.) rule enumeration ends up
computing 2116.29 (1964.24, resp.) rules per dataset.
According to our results, our approach is quite efficient and for the
lion's share of datasets (59 out of 62) both exhaustive and limited
enumeration finish within 30 seconds; for the 3 remaining datasets,
limited enumeration is a bit faster but both approaches finished rule
enumeration within 3000 seconds.

In the remainder of this section, we compare \emph{xcon} against Eclat
in terms of the overall performance. For a fair comparison, we set
Eclat to extract only rules of confidence~100\%, i.e.\ all the rules
extracted are perfectly consistent with the \emph{known} data.
%
%

%

\paragraph{Scalability.}
Figure~\ref{fig:eclatsizertime} demonstrates that \emph{xcon} can
extract rules faster or on par with Eclat in the vast majority of the
considered datasets.
Moreover, Eclat can only extract rules for the train-test 5-fold pairs
of 58 (out of 62) datasets, while \emph{xcon} is able to extract rules
for all the considered datasets. 

\paragraph{Rule Number.}
Figure~\ref{fig:eclatsizenrules} depicts the comparison of the number
of extracted rules in the 58 datasets solved by both approaches.
Observe that Eclat extracts more rules than \emph{xcon} because it
uses a less expressive language for the feature literals, i.e. it
cannot extract rules containing the \emph{negation} of a feature-value
pair.
For example, assume \emph{xcon} can extract a rule [IF $x_1 \not = 0 $
THEN $x_2 = 1$] given features~$1$ and $2$ and their
domains~$\fml{D}_1 = \fml{D}_2 = \{0, 1, 2\}$.
In this case, Eclat is unable to extract the above rule -- instead, it
has to extract two distinct rules to represent the same information,
i.e.\ [IF $x_1 = 1 $ THEN $x_2 = 1$] but also [IF $x_1 = 2 $ THEN $x_2
= 1$].


\subsection{Knowledge-Assisted Explanations} \label{sec:resexp}
%
This section evaluates the proposed approach to computing formal
explanations for DLs, BTs, and BNNs, where the computed background
knowledge was applied. In particular, we evaluate the runtime of
explanation enumeration, explanation size, as well as the portion of
background knowledge used when computing explanations.
Note that here we consider only the rules of size at most 5, which is
shown to be a reasonable value in Section~\ref{sec:rsize}.

For each of the 62 datasets, we selected all 
\emph{test} instances and enumerated 20 \emph{smallest size} AXps or
CXps for each such instance.
Hereinafter, \emph{xcon}$_{\ast}$ s.t.\ $\ast\in\{dl, bt, bnn\}$
denotes the proposed approach applied for explaining DL, BT, and BNN
models, respectively.
Furthermore, a superscripted version \emph{xcon}$_\ast^r$ is used to
denote the configurations that apply background knowledge.



\begin{figure*}[!ht]
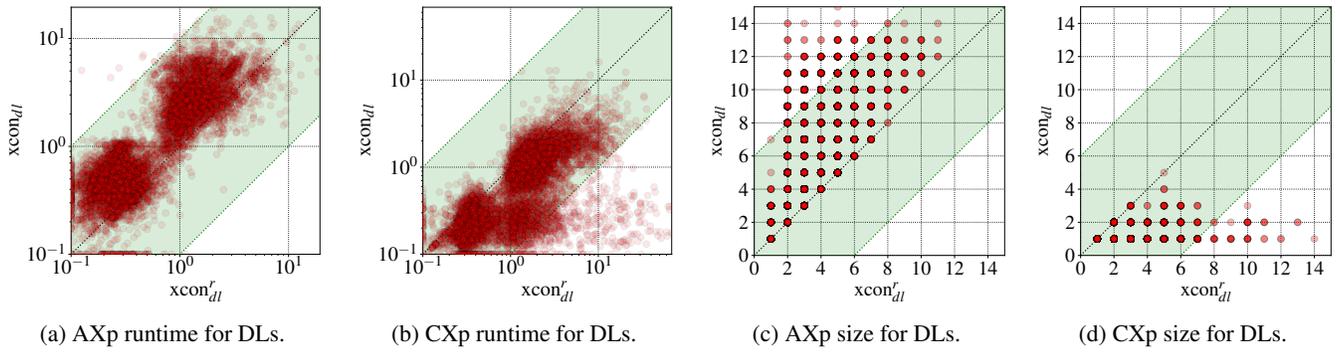

	\begin{subfigure}[b]{0.244\textwidth}
    \centering
    \includegraphics[width=\textwidth]{dl/dl_exprtime_abd_scatter}
    \caption{AXp runtime for DLs.}
    \label{fig:dlaxprtime}
  \end{subfigure}
  \hfill
  \begin{subfigure}[b]{0.244\textwidth}
    \centering
    \includegraphics[width=\textwidth]{dl/dl_exprtime_con_scatter}
    \caption{CXp runtime for DLs.}
    \label{fig:dlcxprtime}
  \end{subfigure}
  \hfill
  \begin{subfigure}[b]{0.23\textwidth}
    \centering
    \includegraphics[width=\textwidth]{dl/dl_minsize_abd_scatter}
    \caption{AXp size for DLs.}
    \label{fig:dlaxp}
  \end{subfigure}%
  \hfill
  \begin{subfigure}[b]{0.23\textwidth}
    \centering
    \includegraphics[width=\textwidth]{dl/dl_minsize_con_scatter}
    \caption{CXp size for DLs.}
    \label{fig:dlcxp}
  \end{subfigure}
  \caption{Impact of \emph{xcon} rules on runtime~(ms) and explanation size for DLs.}
  \label{fig:dlxprtime}
\end{figure*}

\begin{figure*}[!ht]
	\begin{subfigure}[b]{0.244\textwidth}
    \centering
    \includegraphics[width=\textwidth]{bt/bt_exprtime_abd_scatter}
    \caption{AXp runtime for BTs.}
    \label{fig:btaxprtime}
  \end{subfigure}
  \hfill
  \begin{subfigure}[b]{0.244\textwidth}
    \centering
    \includegraphics[width=\textwidth]{bt/bt_exprtime_con_scatter}
    \caption{CXp runtime for BTs.}
    \label{fig:btcxprtime}
  \end{subfigure}
  \hfill
  \begin{subfigure}[b]{0.23\textwidth}
    \centering
    \includegraphics[width=\textwidth]{bt/bt_minsize_abd_scatter}
    \caption{AXp size for BTs.}
    \label{fig:btaxp}
  \end{subfigure}%
  \hfill
  \begin{subfigure}[b]{0.23\textwidth}
    \centering
    \includegraphics[width=\textwidth]{bt/bt_minsize_con_scatter}
    \caption{CXp size for BTs.}
    \label{fig:btcxp}
  \end{subfigure}
  \caption{Impact of \emph{xcon} rules on runtime~(ms) and explanation size for BTs.}
  \label{fig:btxprtime}
\end{figure*}

\begin{figure*}[!ht]
	\begin{subfigure}[b]{0.244\textwidth}
    \centering

    \includegraphics[width=\textwidth]{bnn/bnn_exprtime_abd_scatter}
    \caption{AXp runtime for BNNs.}
    \label{fig:bnnaxprtime}
  \end{subfigure}
  \hfill
  \begin{subfigure}[b]{0.244\textwidth}
    \centering
    \includegraphics[width=\textwidth]{bnn/bnn_exprtime_con_scatter}
    \caption{CXp runtime for BNNs.}
    \label{fig:bnncxprtime}
  \end{subfigure}
  \hfill
  \begin{subfigure}[b]{0.23\textwidth}
    \centering
    \includegraphics[width=\textwidth]{bnn/bnn_minsize_abd_scatter}
    \caption{AXp size for BNNs.}
    \label{fig:bnnaxp}
  \end{subfigure}%
  \hfill
  \begin{subfigure}[b]{0.23\textwidth}
    \centering
    \includegraphics[width=\textwidth]{bnn/bnn_minsize_con_scatter}
    \caption{CXp size for BNNs.}
    \label{fig:bnncxp}
  \end{subfigure}
  \caption{Impact of \emph{xcon} rules on runtime~(ms) and explanation size for BNNs.}
  \label{fig:bnnxprtime}
\end{figure*}

\begin{table}[t!]
\centering
\caption{Change of average minimum explanation size.}
\label{tab:exp}
\scalebox{0.84}{
\begin{tabular}{ccccccc}\toprule
	\multirow{2}{*}{Dataset} & \multirow{2}{*}{Feats}  & \multirow{2}{*}{Model} & \multicolumn{2}{c}{AXp Size} &
	\multicolumn{2}{c}{CXp Size}  \\ \cmidrule{4-7}
				 & & & Before & After & Before & After  \\ \midrule
 &  & DL & 7.46 & 3.65 & 1.00 & 1.60 \\
adult & 65 & BT & 5.02 & 2.84 & 1.10 & 2.13 \\
 &  & BNN & 7.51 & 3.00 & 1.40 & 2.15 \\\midrule
 &  & DL & 5.65 & 3.74 & 1.01 & 1.15 \\
compas & 16 & BT & 3.91 & 3.09 & 1.06 & 1.15 \\
 &  & BNN & 4.40 & 2.79 & 1.19 & 1.30 \\\midrule
 &  & DL & 5.30 & 4.30 & 1.00 & 1.41 \\
lending & 35 & BT & 1.99 & 1.80 & 1.00 & 2.04 \\
 &  & BNN & 4.36 & 2.49 & 1.35 & 1.90 \\\midrule
 &  & DL & 9.51 & 5.58 & 1.00 & 1.23 \\
recidivism & 29 & BT & 6.04 & 4.04 & 1.17 & 1.67 \\
 &  & BNN & 7.01 & 4.01 & 1.42 & 1.82 \\\bottomrule
\end{tabular}
}
\end{table}

\myparagraph{Scalability.}
The scatter plots in
Figures~\ref{fig:dlaxprtime},~\ref{fig:dlcxprtime},
~\ref{fig:btaxprtime},~\ref{fig:btcxprtime},~\ref{fig:bnnaxprtime} and
\ref{fig:bnncxprtime} depict the comparison of the average runtime of
computing a single AXp or CXp for an instance (taken across all the 20
explanations computed) between \emph{xcon}$_\ast$ and
\emph{xcon}$_\ast^r$.
Clearly, for all the 3 models, the use of background knowledge
significantly improves the performance of AXp extraction (see
Figures~\ref{fig:dlaxprtime}, \ref{fig:btaxprtime}, and
\ref{fig:bnnaxprtime}).
At first glance, the performance of CXp extraction deteriorates
significantly in the case of DLs (Figure~\ref{fig:dlcxprtime}) if
compared to the other two models.
We should say that this impression is caused by a different scaling
used in Figure~\ref{fig:dlcxprtime} --- observe that CXp extraction is
1--3 orders of magnitude faster for DLs than for the other 2 models,
both when applying and not applying background knowledge.
Also, this can be explained by the fact that the average CXp size in
the case of DLs increases tremendously, which leads to a much larger
number of reasoning oracle calls when computing an explanation.
For BTs and BNNs, the use of background knowledge neither improves nor
degrades the computation of CXps (Figures~\ref{fig:btcxprtime} and
\ref{fig:bnncxprtime}), even though an increase of CXp size can be
also observed.

\myparagraph{Explanation Quality.}
The change of smallest size of AXps and CXps in an instance is shown
in Figure~\ref{fig:dlaxp}, \ref{fig:dlcxp}, \ref{fig:btaxp},
\ref{fig:btcxp}, \ref{fig:bnnaxp}, and \ref{fig:bnncxp}.
As can be seen in Figure~\ref{fig:dlaxp},\ref{fig:btaxp} and
\ref{fig:bnnaxp}, our results demonstrate how background knowledge, if
present, contributes to AXp size reduction across all models.
In particular, in many cases the size of a smallest AXp drops from 14
to 2, from 11 to 2, and from 17 to 2, for DLs, BTs, and BNNs,
respectively.
%
%
%
%
In contrast to AXps, Figures~\ref{fig:dlcxp}, \ref{fig:btcxp} and
\ref{fig:bnncxp} illustrate that the size of smallest CXps is
increased when background knowledge is applied.
Namely, in a number of cases the size of a smallest CXp jumps from 1
to 14, from 1 to 15, and from 2 to 13 for DLs, BTs, and BNNs,
respectively.
%
These results exemplify how easy it is to flip the prediction when no
background knowledge is present illustrating the potential
correctness issues for the corresponding CXps.

Table~\ref{tab:exp} details the change of the average size of smallest
AXps and CXps computed without or with background knowledge for DLs,
BTs and BNNs and for a selection of 4 publicly available datasets:
\emph{adult}, \emph{compas}, \emph{lending} and \emph{recidivism},
which were previously studied in the context of heuristic and formal
explanaitions.
%
(Here, all numeric features, if any, are quantized into 6 intervals.)
%
Note that Table~\ref{tab:exp} confirms the general observations made
that background knowledge triggers smaller AXps but larger CXps for
all the models studied.
The average size of smallest AXps in \emph{adult} and
\emph{recidivism} drops by around 4 for DLs and BNNs, while the
average smallest CXp size slightly increases for the two models.
In \emph{compas}, the size of smallest AXps in the three models
decreases by 1--2 and the size of smallest CXps subtly increases.
The size of smallest AXps in \emph{lending} drops by 1.00 in DLs,
0.19 in BTs, and 1.87 in BNNs.

\myparagraph{On Eclat-Assisted Explanations.}
The experiment above was repeated for the background knowledge
extracted with the use Eclat and its results are detailed below.
Namely, Figure~\ref{fig:adlxp}, Figure~\ref{fig:abtxp} and
Figure~\ref{fig:abnnxp} evaluate the proposed approach to computing
formal explanations for DLs, BTs, and BNNs, taking into account the
rules mined by Eclat.
Note that here only 58 datasets tackled by Eclat are considered.
Similar to \emph{xcon}$_{\ast}$ above, \emph{eclat}$_{\ast} \in
\{dl, bt, bnn\}$ represents the formal explanation approach applied to
DL, BT, and BNN models, respectively.
Moreover, \emph{eclat}$^r_{\ast}$ is used to denote the
configurations that apply background knowledge extracted by Eclat.
Scalability-wise and in contrast to the case of \emph{xcon}, where the
performance of AXps computation improves and the performance of CXp
computation degrades in the presence of background knowledge extracted
by the MaxSAT approach, the use of Eclat-provided background
knowledge degrades the performance of CXp generation for DLs as well
as both AXp and CXp computation for BTs.
This can be explained by the larger number of rules extracted by
Eclat compared to the MaxSAT approach.
In terms of the quality of explanations, observations similar to the
case of \emph{xcon} can be made, i.e.\ background knowledge extracted
by Eclat can trigger AXp size reduction across all the 3 models,
while the size of CXps increases due to the background knowledge.

\begin{figure*}[!ht]
	\centering
	\begin{subfigure}[b]{0.248\textwidth}
    \centering
    \includegraphics[width=\textwidth]{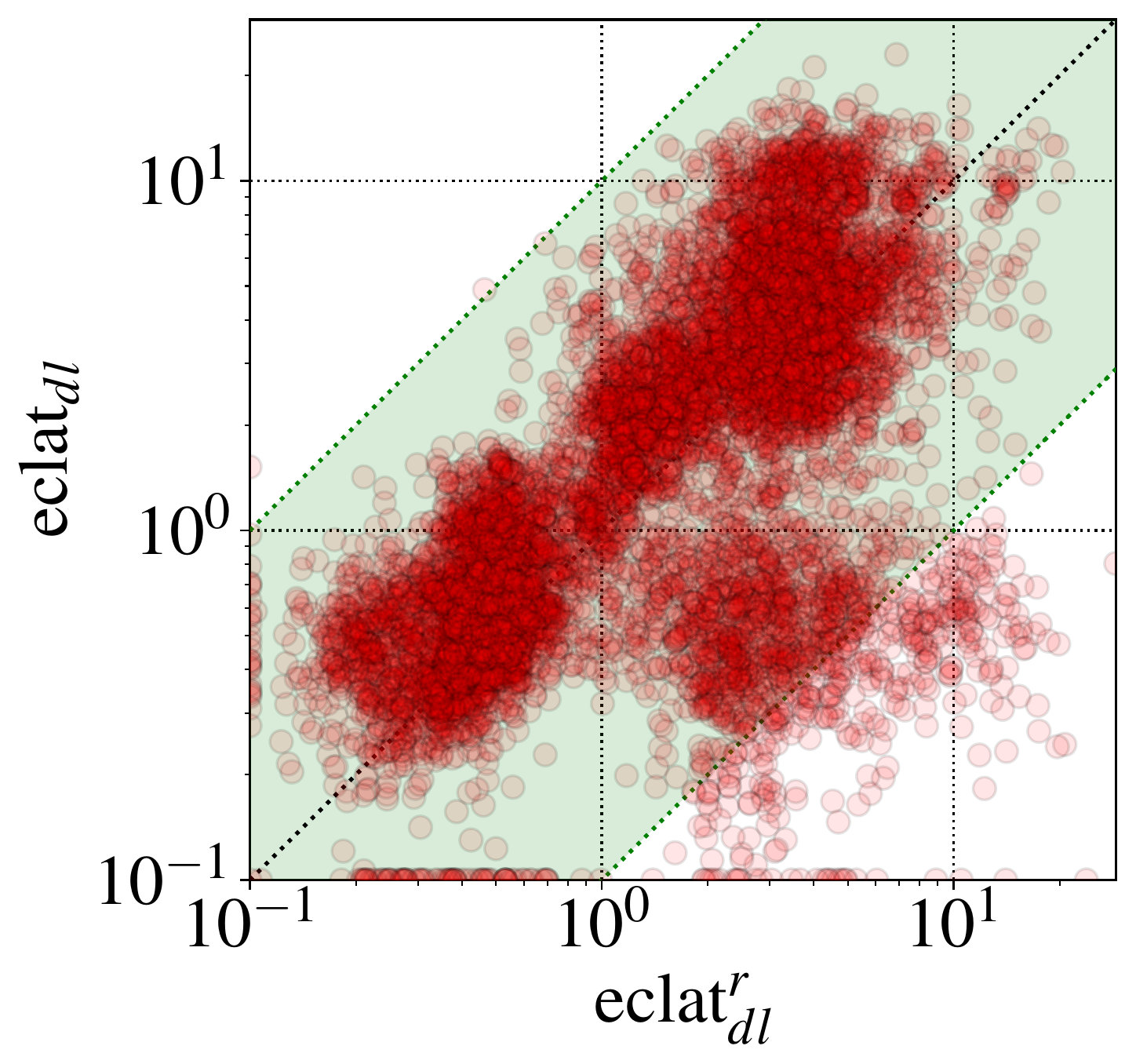}
    \caption{AXp runtime for DLs.}
    \label{fig:adlaxprtime}
  \end{subfigure}
  \hfill
  \begin{subfigure}[b]{0.248\textwidth}
    \centering
    \includegraphics[width=\textwidth]{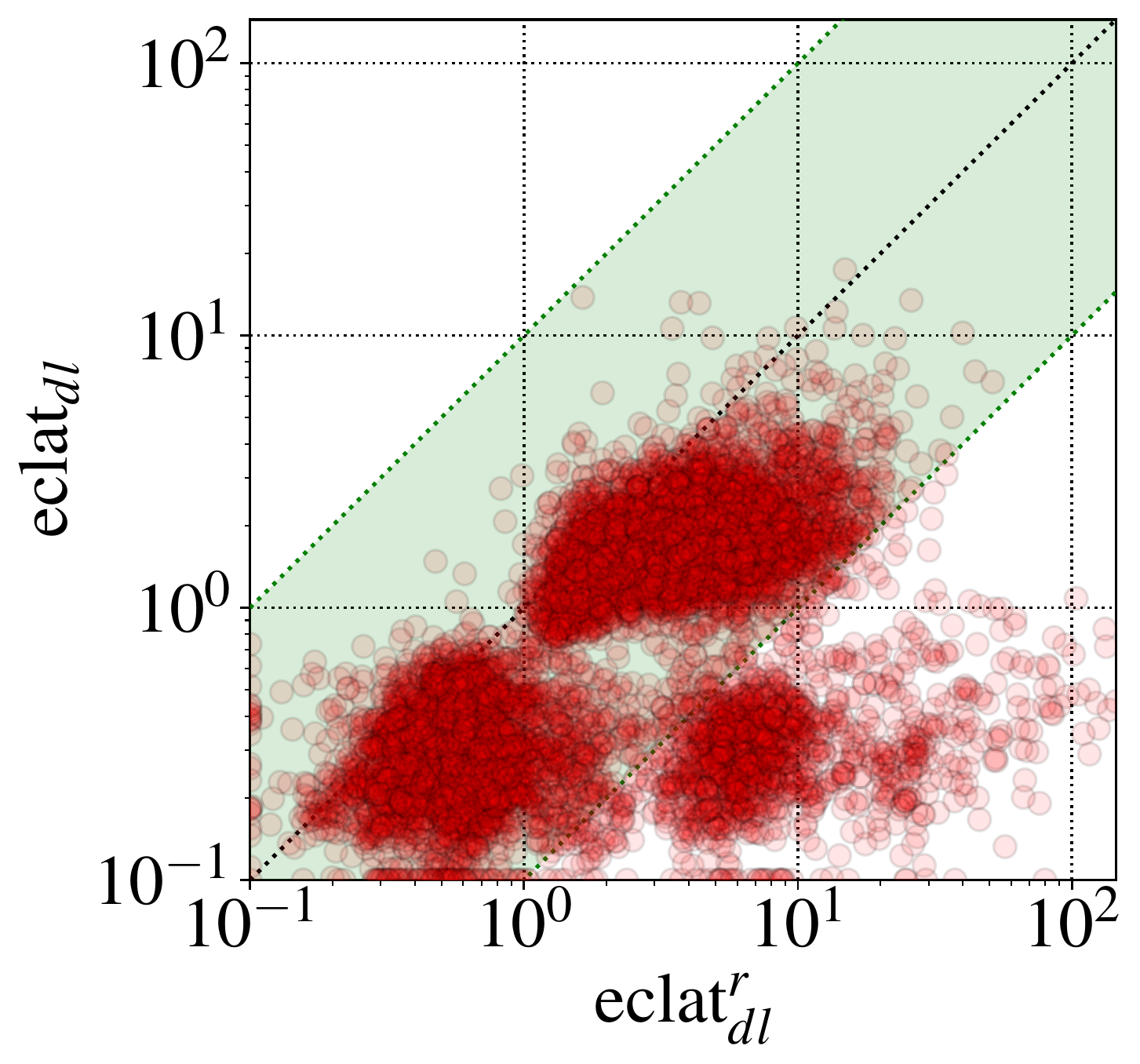}
    \caption{CXp runtime for DLs.}
    \label{fig:adlcxprtime}
  \end{subfigure}
  \hfill
  \begin{subfigure}[b]{0.234\textwidth}
    \centering
    \includegraphics[width=\textwidth]{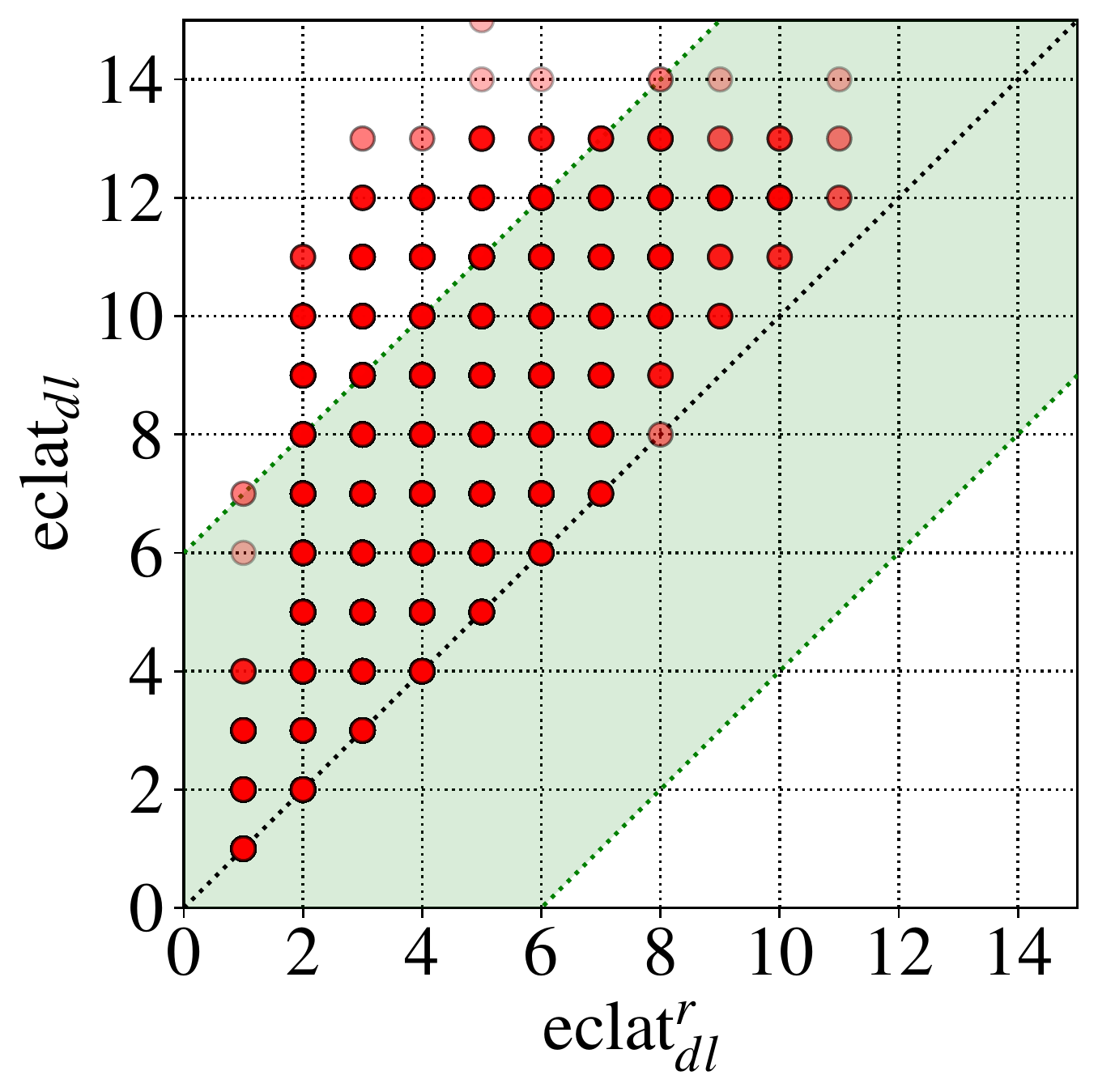}
    \caption{AXp size for DLs.}
    \label{fig:adlaxp}
  \end{subfigure}%
  \hfill
  \begin{subfigure}[b]{0.234\textwidth}
    \centering
    \includegraphics[width=\textwidth]{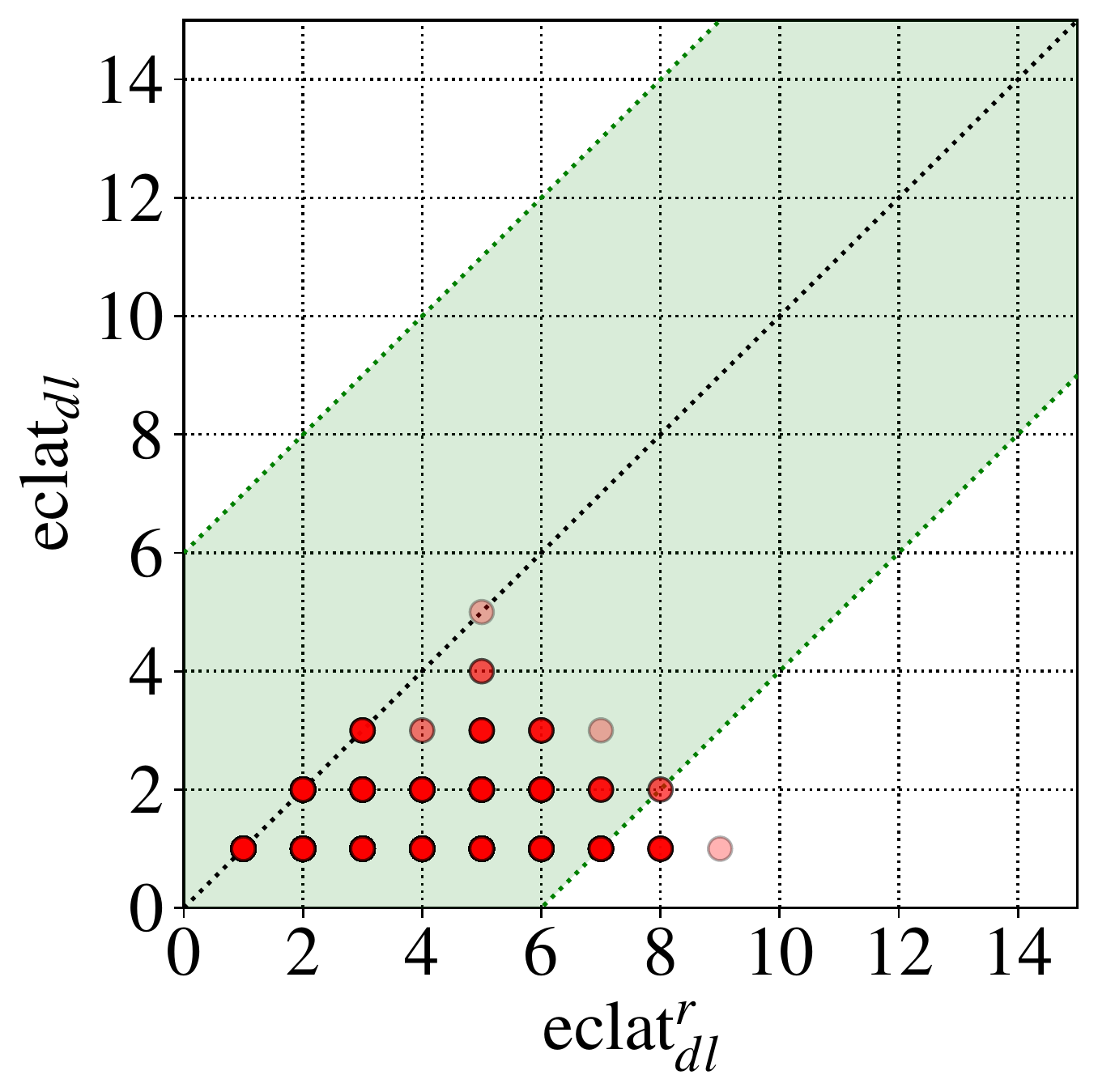}
    \caption{CXp size for DLs.}
    \label{fig:adlcxp}
  \end{subfigure}
  \caption{Impact of Eclat rules on runtime~(ms) and explanation size for DLs.}
  \label{fig:adlxp}
\end{figure*}

\begin{figure*}[!ht]
	\centering
	\begin{subfigure}[b]{0.244\textwidth}
    \centering
    \includegraphics[width=\textwidth]{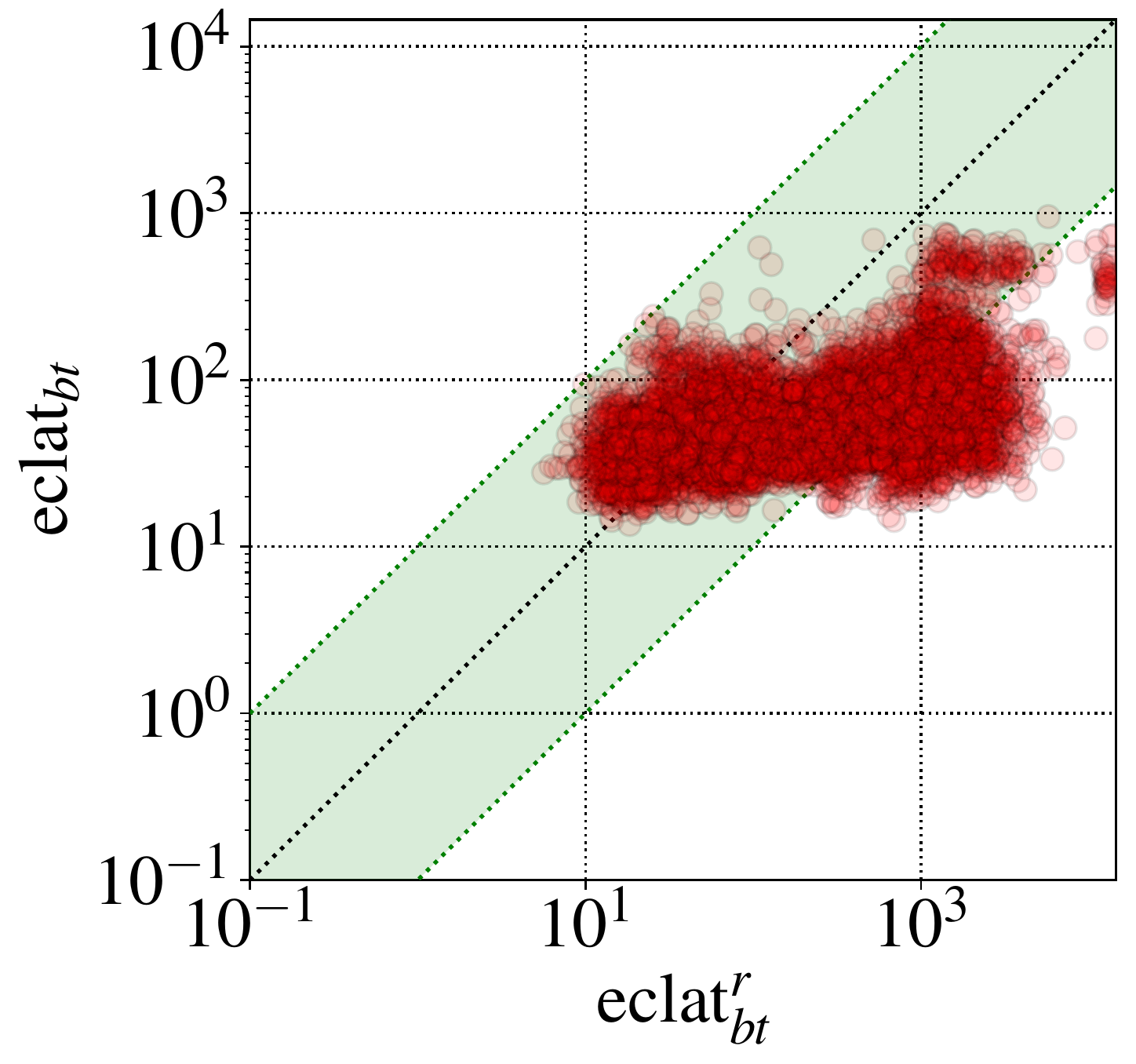}
    \caption{AXp runtime for BTs.}
    \label{fig:abtaxprtime}
  \end{subfigure}
  \hfill
  \begin{subfigure}[b]{0.244\textwidth}
    \centering
    \includegraphics[width=\textwidth]{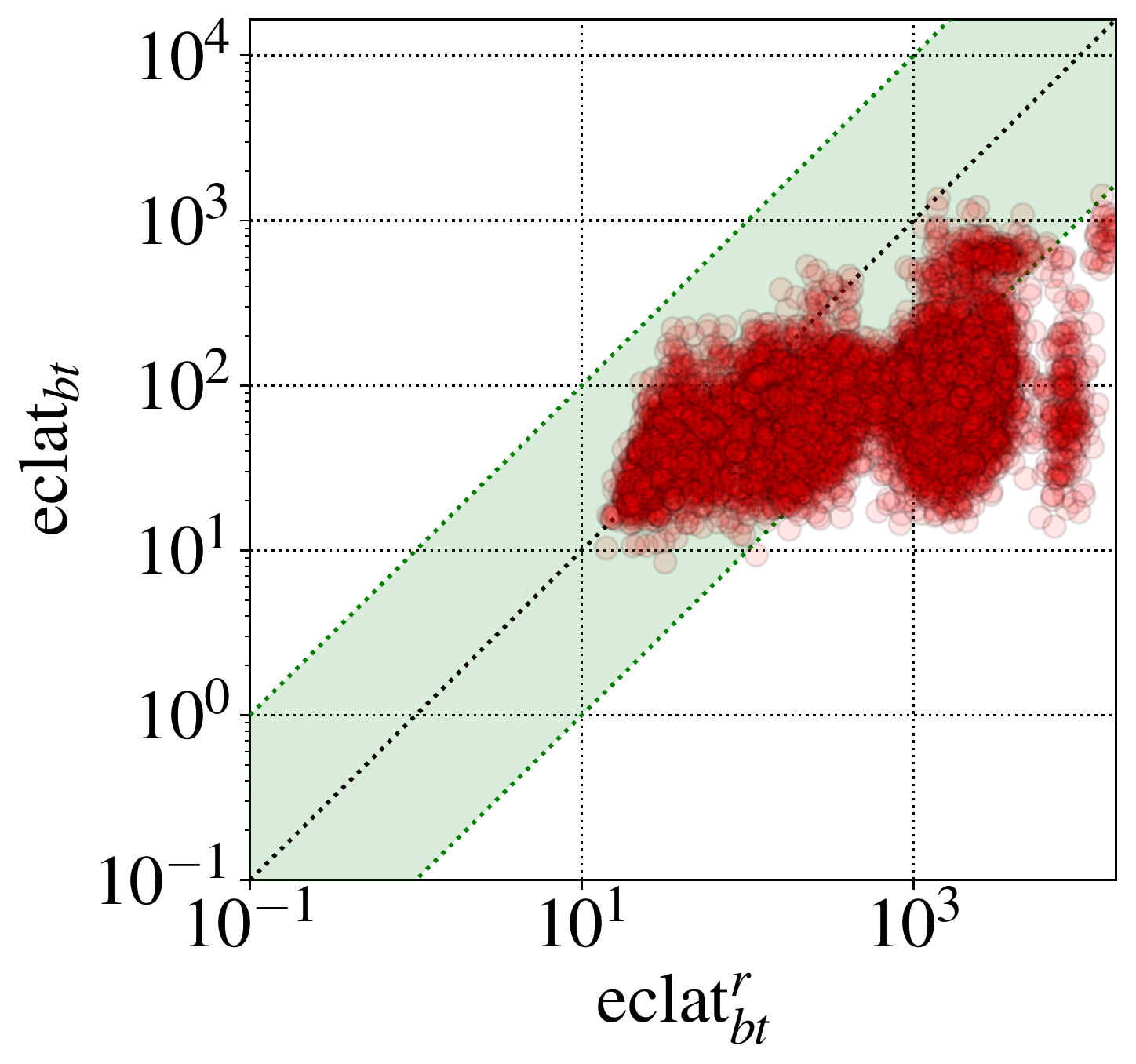}
    \caption{CXp runtime for BTs.}
    \label{fig:abtcxprtime}
  \end{subfigure}
  \hfill
  \begin{subfigure}[b]{0.237\textwidth}
    \centering
    \includegraphics[width=\textwidth]{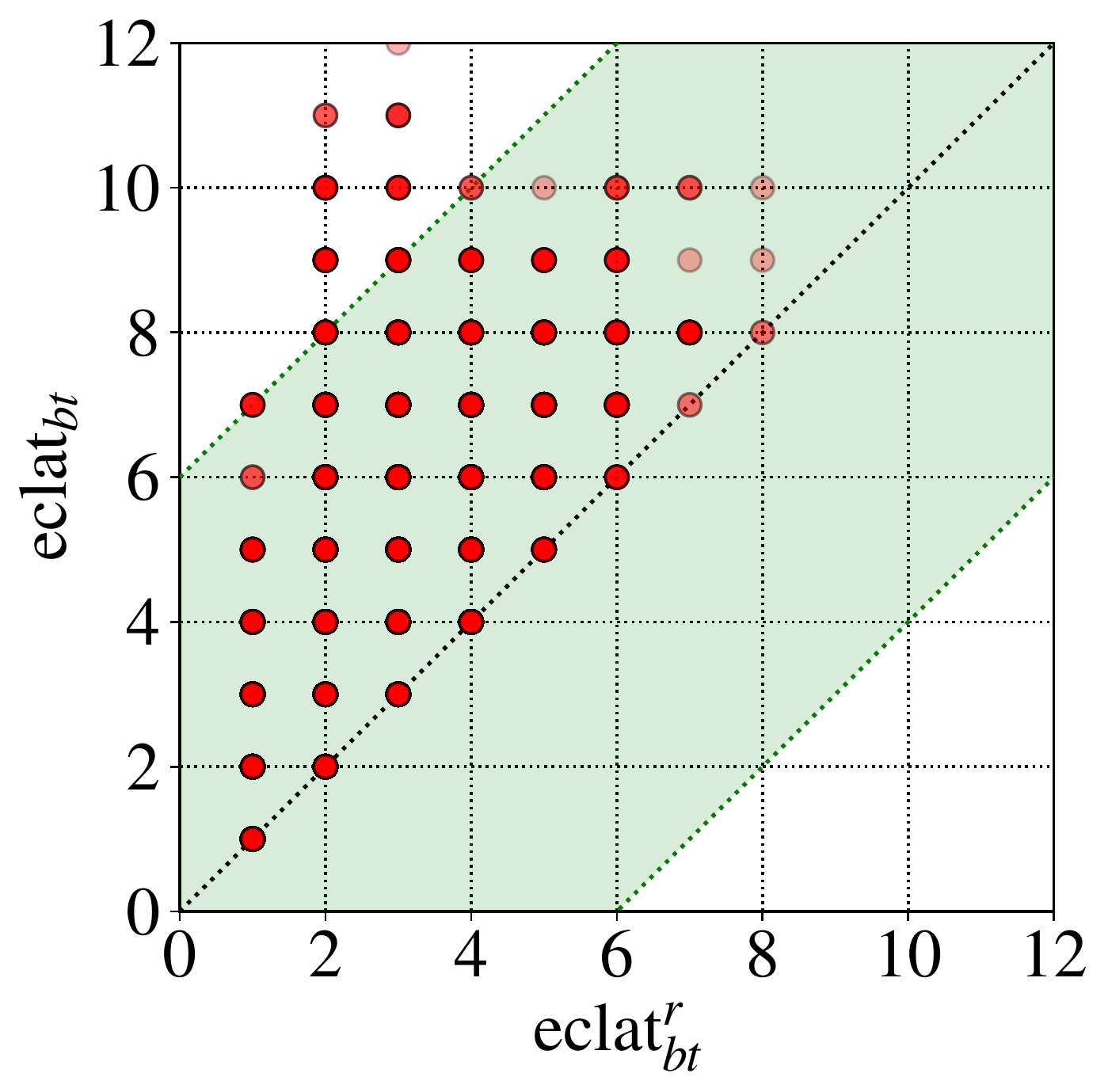}
    \caption{AXp size for BTs.}
    \label{fig:abtaxp}
  \end{subfigure}%
  \hfill
  \begin{subfigure}[b]{0.237\textwidth}
    \centering
    \includegraphics[width=\textwidth]{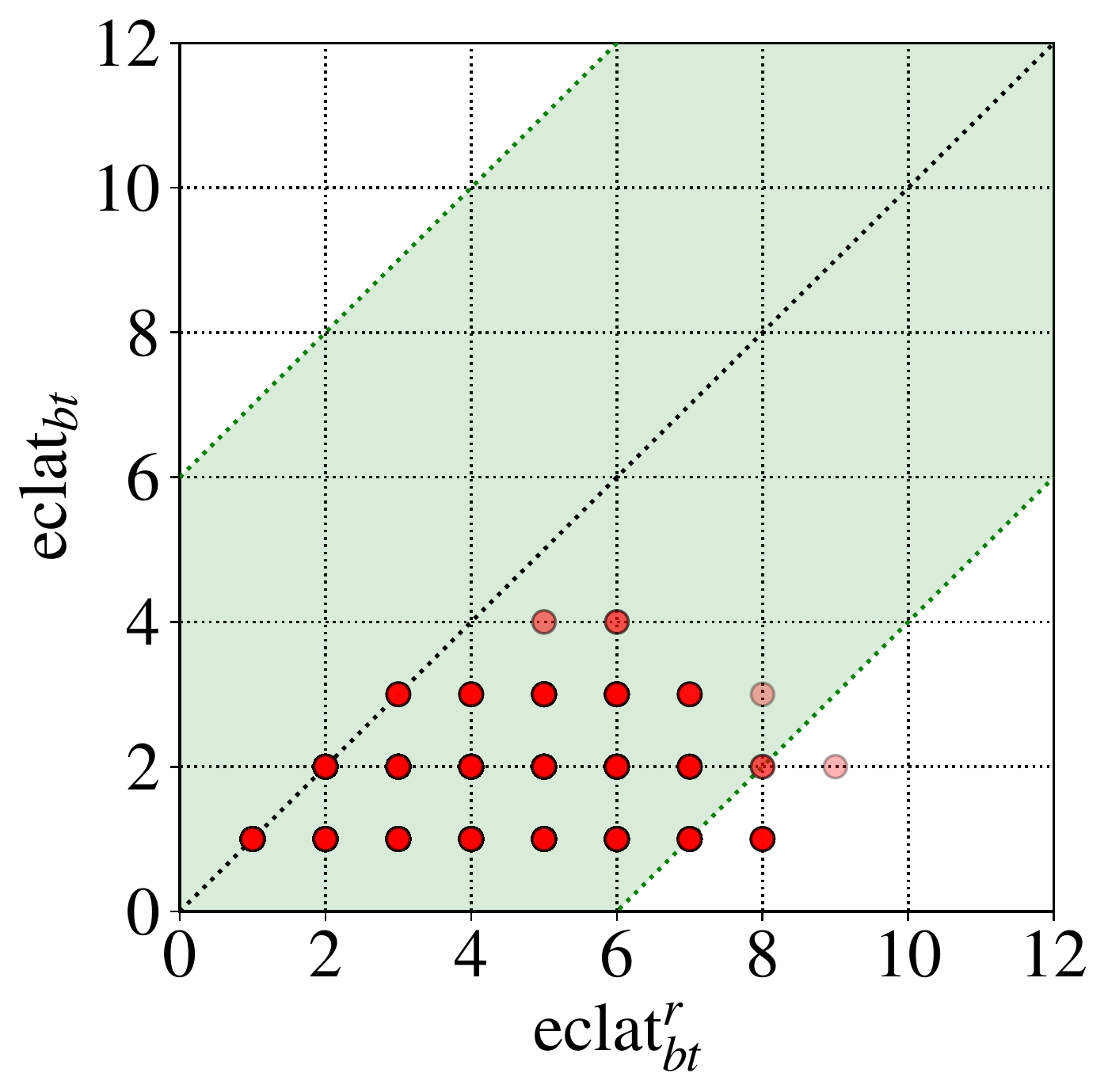}
    \caption{CXp size for BTs.}
    \label{fig:abtcxp}
  \end{subfigure}
  \caption{Impact of Eclat rules on runtime~(ms) and explanation size for BTs.}
  \label{fig:abtxp}
\end{figure*}

\begin{figure*}[!ht]
	\centering
	\begin{subfigure}[b]{0.244\textwidth}
    \centering
    \includegraphics[width=\textwidth]{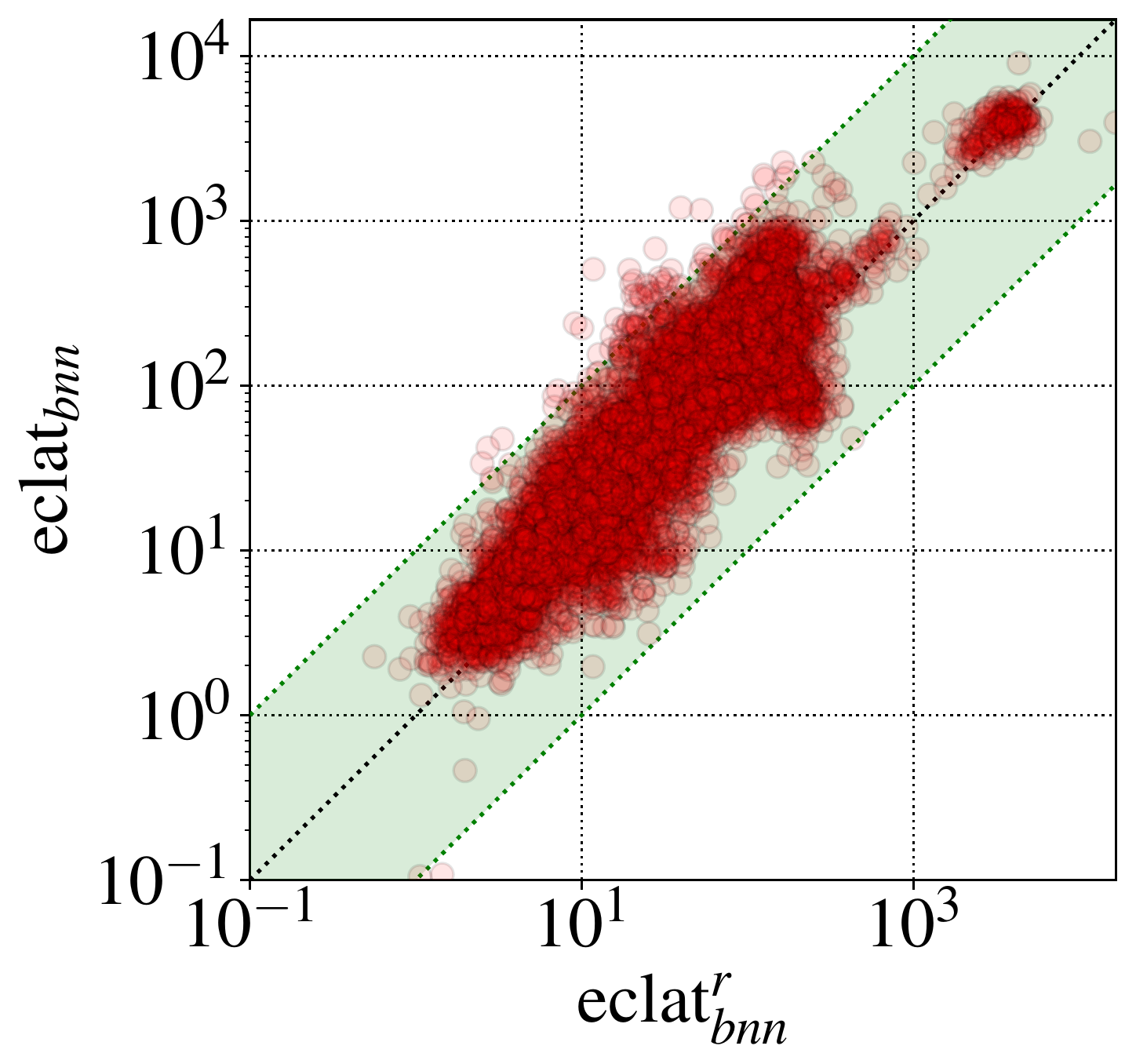}
    \caption{AXp runtime for BNNs.}
    \label{fig:abnnaxprtime}
  \end{subfigure}
  \hfill
  \begin{subfigure}[b]{0.244\textwidth}
    \centering
    \includegraphics[width=\textwidth]{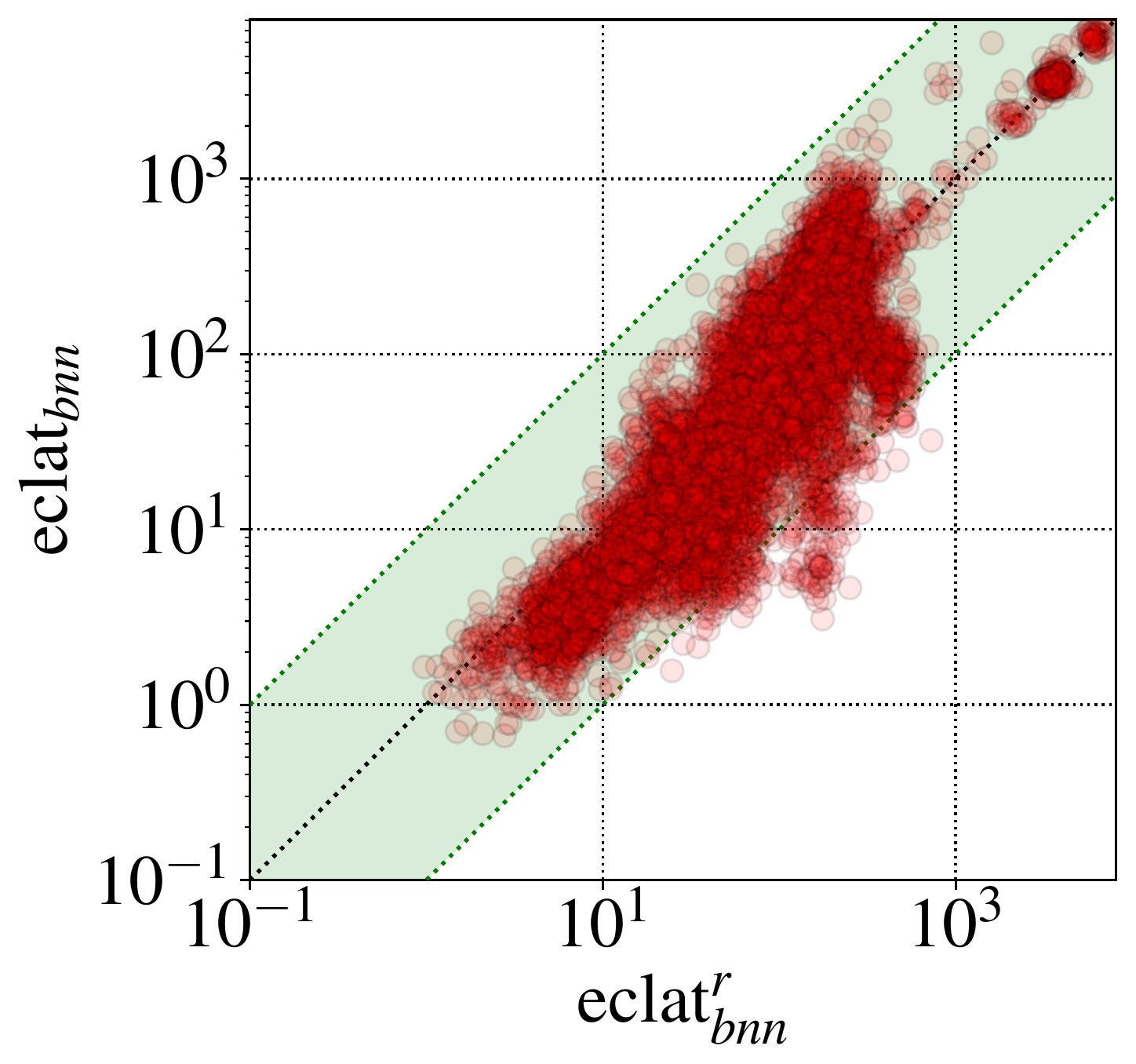}
    \caption{CXp runtime for BNNs.}
    \label{fig:abnncxprtime}
  \end{subfigure}
  \hfill
  \begin{subfigure}[b]{0.237\textwidth}
    \centering
    \includegraphics[width=\textwidth]{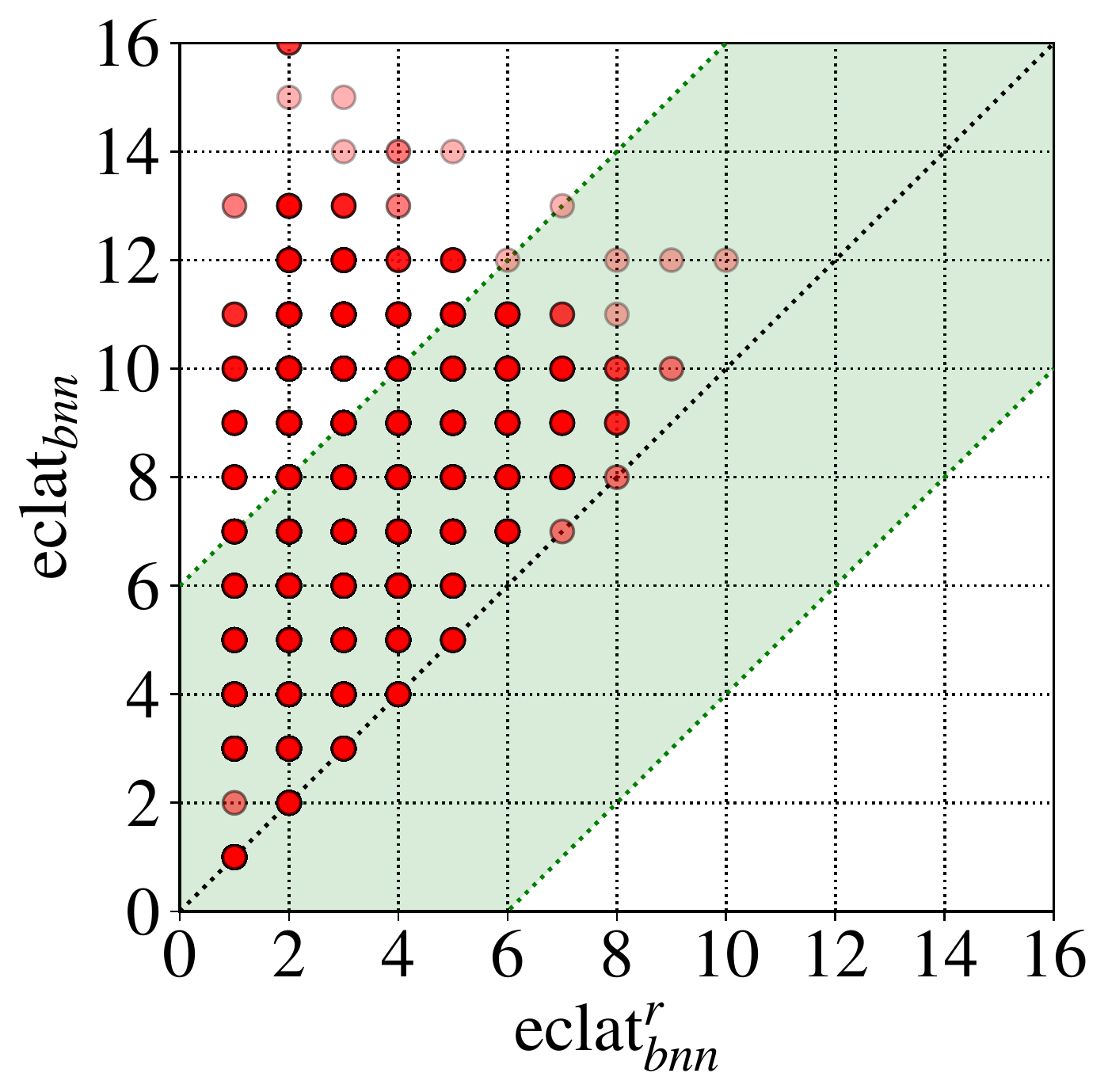}
    \caption{AXp size for BNNs.}
    \label{fig:abnnaxp}
  \end{subfigure}%
  \hfill
  \begin{subfigure}[b]{0.237\textwidth}
    \centering
    \includegraphics[width=\textwidth]{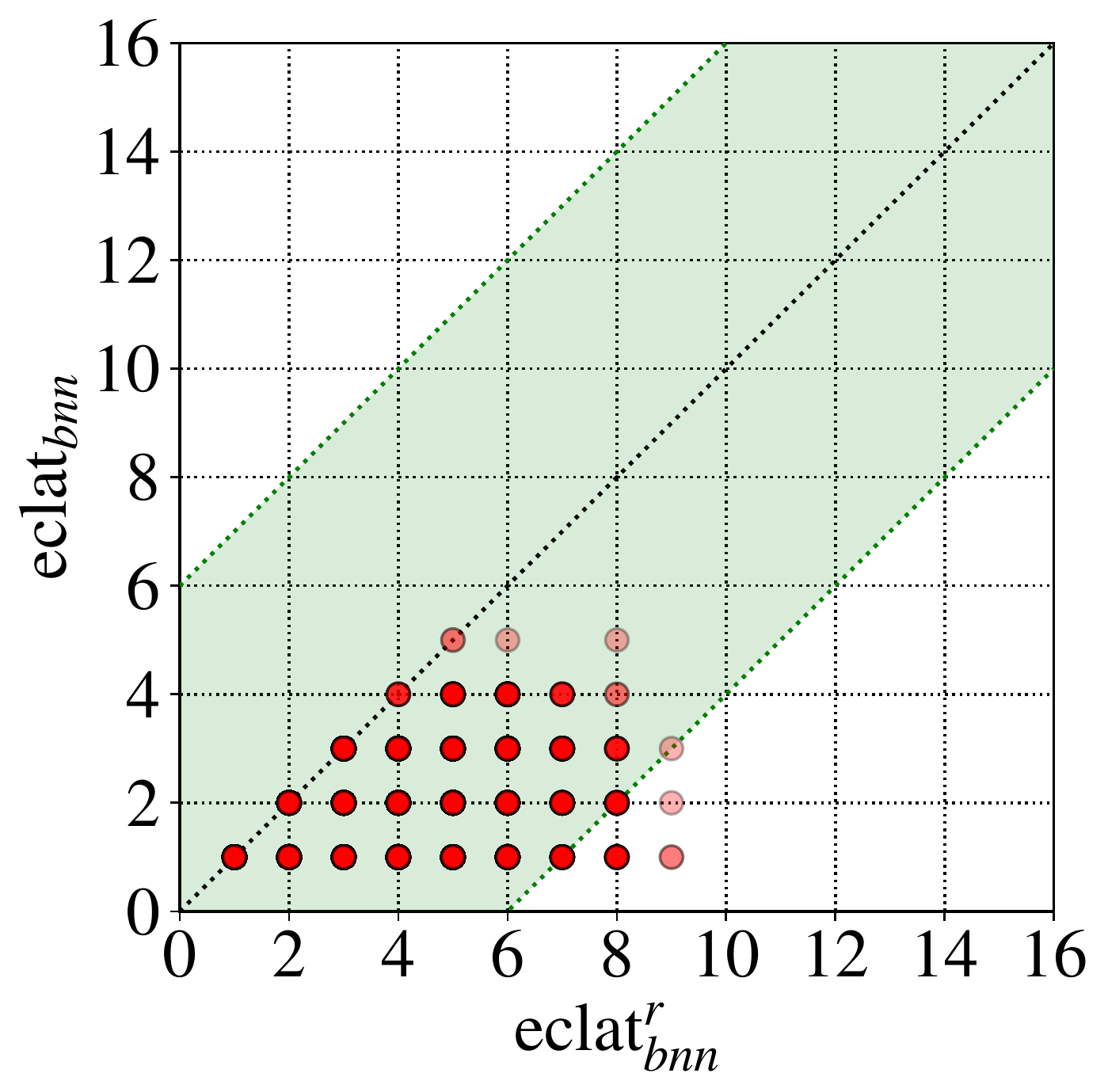}
    \caption{CXp size for BNNs.}
    \label{fig:abnncxp}
  \end{subfigure}
  \caption{Impact of Eclat rules on runtime~(ms) and explanation size for BNNs.}
  \label{fig:abnnxp}
\end{figure*}

\subsection{Usefulness of Background Knowledge} \label{sec:rsize}

\begin{table}[t!]
\centering
\caption{Size distribution of used rules.}
\label{tab:usedr}
\scalebox{0.72}{
\begin{tabular}{cccccccccc}\toprule
	\multirow{2}{*}{Dataset} & \multirow{2}{*}{Feats} & \multirow{2}{*}{Model} & \multicolumn{7}{c}{Distribution (\%)} \\ 
				 & & & 1 & 2 & 3 & 4 & 5 & 6 & 7 \\ \midrule
 &  & DL & 10.9 & 17.2 & 37.7 & 21.9 & 8.4 & 3.1 & 0.8 \\
adult & 65 & BT & 7.3 & 10.0 & 39.5 & 30.2 & 10.1 & 2.4 & 0.4 \\
 &  & BNN & 9.8 & 11.5 & 39.6 & 26.8 & 9.0 & 2.7 & 0.5 \\\midrule
 &  & DL & 55.4 & 17.4 & 22.3 & 3.3 & 0.2 & 1.4 & $-$ \\
compas & 16 & BT & 53.2 & 29.0 & 16.1 & 1.0 & 0.1 & 0.6 & $-$ \\
 &  & BNN & 41.4 & 27.3 & 27.2 & 2.9 & 0.9 & 0.3 & $-$ \\\midrule
 &  & DL & 43.4 & 4.1 & 3.3 & 18.7 & 20.2 & 9.3 & 1.1 \\
lending & 35 & BT & 41.7 & 7.6 & 4.5 & 13.3 & 23.2 & 9.1 & 0.7 \\
 &  & BNN & 36.2 & 3.6 & 3.5 & 21.3 & 24.6 & 9.5 & 1.2 \\\midrule
 &  & DL & 2.9 & 1.5 & 9.1 & 25.8 & 28.6 & 20.3 & 8.7 \\
recidivism & 29 & BT & 2.1 & 1.5 & 8.1 & 25.8 & 30.9 & 20.7 & 8.4 \\
 &  & BNN & 1.6 & 1.4 & 7.5 & 24.1 & 36.6 & 18.7 & 7.4 \\\bottomrule
\end{tabular}
}
\end{table}

To assess rules' contribution into explanation extraction, we applied
the setup of Section~\ref{sec:resexp}, i.e.\ we enumerated at most
20 smallest size AXps for each test instance.\footnote{This
experiment is conducted only for the proposed MaxSAT-based approach
for knowledge extraction.}
%
Table~\ref{tab:usedr} presents the evaluation of which rules contribute
to AXp size reduction for DLs, BTs, and BNNs for the same selection of
datasets studied in Table~\ref{tab:exp}, i.e.\ \emph{adult},
\emph{compas}, \emph{lending} and \emph{recidivism}.
However and in contrast to the previous experiment, the rules here are
exhaustively extracted for each of the datasets, i.e. \emph{no
extraction limit} is applied.
This resulted in extracting rules up to size 7.

Our experimental results indicate that rules of size greater than 5
are not frequently used when computing AXps.
Table~\ref{tab:usedr} shows that the size of more than 98\% of the
useful rules in the three models for \emph{compas} ranges from 1 to
4.
For \emph{adult}, more than 95\% of the useful rules comprise 1 to
5 literals.
The rules of size 5 are significant in the case of \emph{lending}
and \emph{recidivism}, where more than 20\% of the useful rules
contain 5 literals.
However, there are 29.0\%, 29.1\%, and 26.1\% of the useful
rules larger than size 5 for \emph{recidivism} in DLs, BT, and BNNs,
respectively, while less than 11\% of the useful rules contain more
than 5 literals for the other 3 datasets.
These results support our choice of value 5 as the extraction limit
since the size of the vast majority of useful rules is no more than 5.
%

\ignore{
\paragraph{Rules to Reduce an AXp.}
Figure~\ref{fig:ruse} depicts the average number of rules used to
reduce the size of an AXp for
an instances in $72$ datasets in the $3$ ML models.
The total number of instances is $4897$.
The number is calculated as $\frac{|R_u|}{|AXps_{r}|}$,
where $|R_u|$ is the number of rules used in an instance
and $|AXps_{r}|$ is the number of AXps that can be reduced by the rules.
Note that $0$ in indicates that there are no AXps that can be reduced in an instance.
The rules extracted in the rule extracting process are restricted to at most size $5$.
The maximum number of rules used for an AXp in an instance
in DLs and BTs is between $105$ as well as $112$ respectively,
while $158$ is the highest number of rules used in BNNs.
As can be observed, the number of rules used is no more than $10$ for
$2696$ instances in DLs, $3006$ instances in BTs and $1753$ instances in BNNs.

\begin{figure}[!h]
    \centering
    \includegraphics[width=0.49\textwidth]{avgrules_per_expl}
    \caption{Rules Used per AXp.}
    \label{fig:ruse}
 \end{figure}%

\pnote{We dont talk about reporting the background knowledge used to the user!}
}

\subsection{Formal vs. Heuristic Explanations}\label{res:hres}

Following~\cite{inms-aaai19,nsmims-sat19,ignatiev-ijcai20},
we apply formal explanations to assess the runtime and
explanation quality for the heuristic approaches LIME, SHAP, and Anchor.
The idea is to show the importance of trustable background knowledge
when targeting a more accurate quality assessment.

%

\paragraph{Scalability.}
Figure~\ref{fig:hrtime} and Table~\ref{tab:hrtime} illustrate the runtime of a single explanation
extraction for a data instance across the 62 datasets performed by
LIME, SHAP, Anchor, \emph{xcon}$_\ast$, and \emph{xcon}$^r_\ast$.
Here, \emph{xcon}$^r_{\ast}$ and \emph{xcon}$_{\ast}$ represent the
proposed approach to computing AXps or CXps with/without background
knowledge, s.t.\ $\ast\in\{axp, cxp\}$.
Observe that both \emph{xcon}$_\ast$ and \emph{xcon}$^r_\ast$
outperform LIME and Anchor for all the 3 models, explaining a data
instance in a fraction of a second.
LIME and Anchor are 1-2 orders of magnitude slower for DL and BNN
models, while LIME outweighs Anchor when generating explanations for
BTs.
The worst performance for DL and BNN models is demonstrated
by SHAP while, surprisingly, SHAP outperforms the other competitors
for BTs models.

\begin{table}[t!]
\centering
\caption{Average runtime per explanation.}
\label{tab:hrtime}
\scalebox{0.70}{
\setlength{\tabcolsep}{6pt}
\begin{tabular}{cS[table-format=3.0]S[table-format=3.0]S[table-format=3.0]S[table-format=3.0]S[table-format=5.0]S[table-format=6.0]S[table-format=5.0]}\toprule
\multirow{2}{*}{Model} & \multicolumn{7}{c}{Runtime per explanation (ms)} \\ \cmidrule{2-8}
		       & xcon$_{axp}$ & xcon$^r_{axp}$ & xcon$_{cxp}$ & xcon$^r_{cxp}$ & $\text{LIME}$ & $\text{SHAP}$ & $\text{Anchor}$ \\ \midrule
DL & 2 & 1 & 1 & 2 & 3755 & 42555 & 3800 \\
BT & 80 & 82 & 97 & 151 & 98 & 6 & 351 \\
BNN & 196 & 152 & 179 & 199 & 15607 & 183058 & 11384 \\ \bottomrule
\end{tabular}
}
\end{table}

\begin{figure*}[h!]
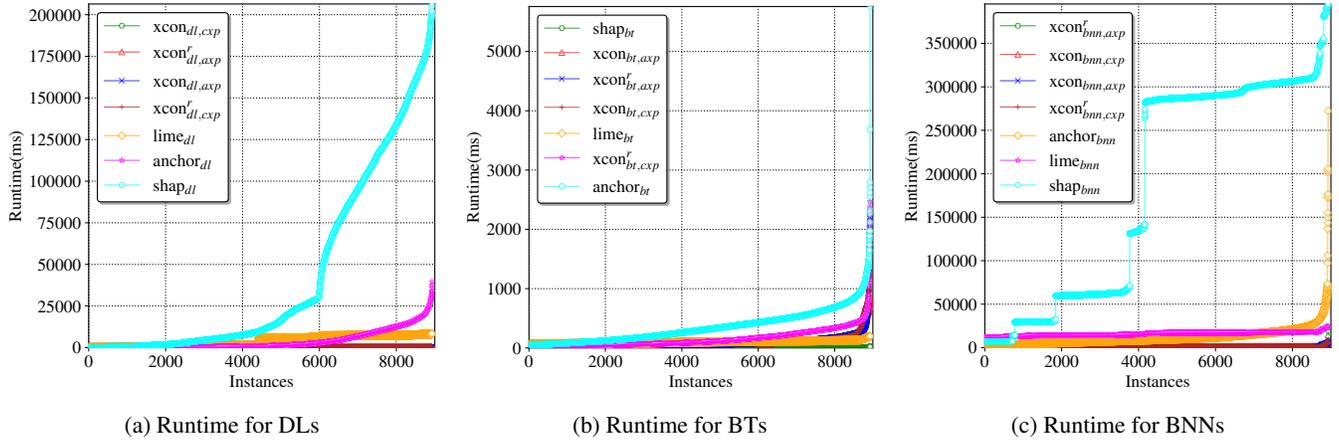

	\centering
	\begin{subfigure}[b]{0.33\textwidth}
    \centering
    \includegraphics[width=\textwidth]{hexp/rtime/cactus_dl_inst_rtime}
    \caption{Runtime for DLs}
    \label{fig:dlhrtime}
  \end{subfigure}
  \hfill
  \begin{subfigure}[b]{0.315\textwidth}
    \centering
    \includegraphics[width=\textwidth]{hexp/rtime/cactus_bt_inst_rtime}
    \caption{Runtime for BTs}
    \label{fig:bthrtime}
  \end{subfigure}
  \hfill
  \begin{subfigure}[b]{0.33\textwidth}
    \centering
    \includegraphics[width=\textwidth]{hexp/rtime/cactus_bnn_inst_rtime}
    \caption{Runtime for BNNs}
    \label{fig:bnnhrtime}
  \end{subfigure}
  \caption{Runtime~(ms) of the considered explainers per explanation for DLs, BTs and BNNs.}
  \label{fig:hrtime}
\end{figure*}

\paragraph{Correctness.}
The average correctness of computed explanations\footnote{LIME/SHAP
assign weights to \emph{all the features}. We use only those whose
weight contributes to the decision made based on sign.} is shown in
Figure~\ref{fig:correct} and Table~\ref{tab:cor}.
Here, an explanation is said to be correct if it answers a
\emph{``why''} question and satisfies~\eqref{eq:axp} (or
\eqref{eq:axpc} in the presence of background knowledge) or it answers
a \emph{``why not''} question and satisfies~\eqref{eq:cxp} (or
\eqref{eq:cxpc} in the presence of background
knowledge).
The superscripted notation \emph{lime}$^r_{\ast}$,
\emph{shap}$^r_{\ast}$, and \emph{anchor}$^r_{\ast}$ is used to denote
the fact that background knowledge is applied when evaluating
correctness of the explanations produced by LIME, SHAP, and Anchor,
respectively.
%
%
Figure~\ref{fig:correct} and Table~\ref{tab:cor} show that the average correctness is higher when
background knowledge is applied as the number of features required in
a minimal correct explanation answering a \emph{``why''} question can
drop, which is demonstrated in Section~\ref{sec:resexp}.
However, heuristic approaches are not able to achieve 100\% correctness
in the majority of the datasets.
The best results are demonstrated by SHAP in both
Figure~\ref{fig:correctnobg} and Figure~\ref{fig:correctbg}.
SHAP's explanations for most of the datasets achieve 40\% correctness
when no background knowledge applied, while its correctness jumps to
80\% for the vast majority of datasets when background knowledge is
taken into account.
As of LIME and Anchor, without background knowledge, the correctness
of most of the explanations is less than 20\% for
\emph{anchor}$_{bt}$, \emph{lime}$_{bnn}$, \emph{anchor}$^r_{bnn}$,
and \emph{lime}$_{dl}$, but the correctness dramatically increases
when background knowledge is applied.
Figure~\ref{fig:correctbg} demonstrates that with background knowledge
the best correctness is achieved by SHAP, followed by Anchor, where
the major correctness for SHAP, Anchor, and LIME is more than 80\%,
60\% and 40\%, respectively.

Heuristic explainers consistently
demonstrate low correctness when no background knowledge is applied,
which confirms the earlier results of~\cite{inms-corr19,ignatiev-ijcai20}.
However, the situation changes dramatically when we apply the
background knowledge.
This is because some of the counterexamples invalidating heuristic
explanations are forbidden by the knowledge extracted.
Assuming that this knowledge is valid, these correctness results
better reflect the reality and so are more trustable.

\begin{figure*}[h!]
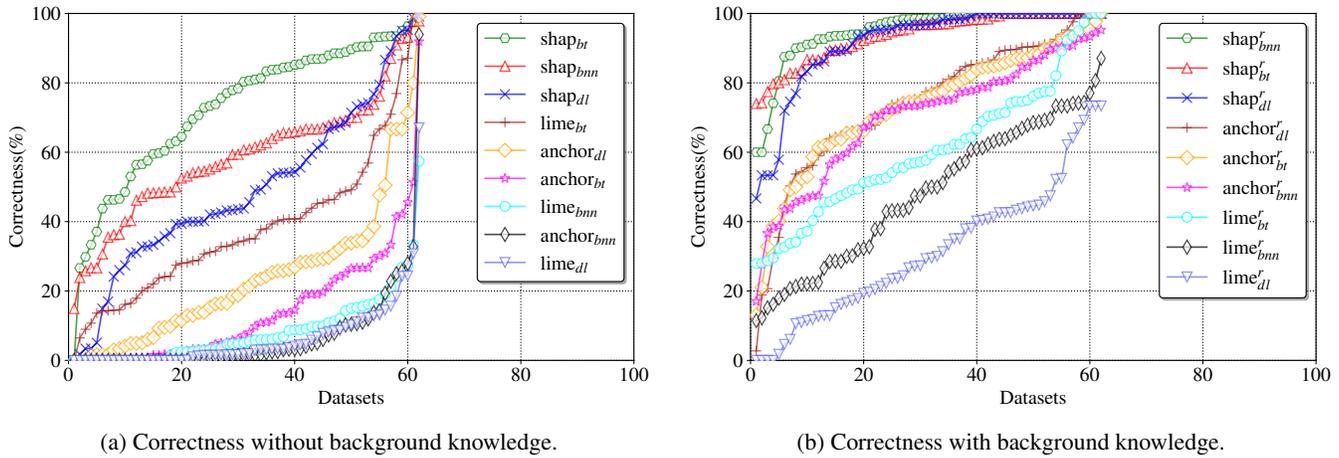

	\centering
	\begin{subfigure}[b]{0.49\textwidth}
    \centering
     \includegraphics[width=\textwidth]{hexp/correctness/correctness_cactus_nobg}
    \caption{Correctness without background knowledge.}
    \label{fig:correctnobg}
 \end{subfigure}
  \hfill
  \begin{subfigure}[b]{0.49\textwidth}
    \centering
     \includegraphics[width=\textwidth]{hexp/correctness/correctness_cactus_bg}
    \caption{Correctness with background knowledge.}
    \label{fig:correctbg}
 \end{subfigure}
  \caption{Correctness of heuristic explanations.}
  \label{fig:correct}
\end{figure*}

\begin{table}[t!]
\centering
\caption{Average correctness of LIME, SHAP and Anchor.}
\label{tab:cor}
\scalebox{0.84}{
\begin{tabular}{cS[table-format=2.2]S[table-format=2.2]S[table-format=2.2]S[table-format=2.2]S[table-format=2.2]S[table-format=2.2]}\toprule
\multirow{3}{*}{Explainer} & \multicolumn{6}{c}{Correctness (\%)} \\ \cmidrule{2-7}
 & \multicolumn{3}{c}{Without knowledge} & \multicolumn{3}{c}{With knowledge}\\ \cmidrule{2-7}
 & $\text{DL}$ & BT & BNN & $\text{DL}$ & BT & BNN \\ \midrule
LIME & 6.06 & 38.26 & 8.2 & 31.06 & 60.63 & 47.88\\
SHAP & 49.47 & 72.89 & 58.89 & 91.72 & 93.75 & 95.0\\
Anchor & 24.03 & 13.85 & 6.57 & 73.85 & 73.0 & 70.1\\ \bottomrule
\end{tabular}
}
\end{table}

\paragraph{Explanation quality.}
Although explanation correctness dramatically increases when
background knowledge is used, large size of correct explanations can
render them uninterpretable.
In this experiment, we evaluate the size of correct explanations
computed by LIME, SHAP, and Anchor, and check how far those correct
explanations are from their subset-minimal counterparts.
Concretely, given a correct heuristic explanation computed either by
LIME, or SHAP, or Anchor, we apply the formal approach to reduce it
further, with or without background knowledge.
Then we contrast the size of correct explanations and their
corresponding size-minimal correct explanations for DL, BT, and BNN
models.
The comparison is detailed in the scatter plots of
Figure~\ref{fig:limesize}, \ref{fig:shapsize}, and
\ref{fig:anchorsize}.
As can be observed, a vast majority of correct explanations computed
by LIME, SHAP, and Anchor are not minimal.
Their size significantly exceeds the size of subset-minimally reduced
explanations.
Furthermore, the size difference increases when background knowledge
is available, which is in line with our earlier observations regarding
the AXp computation.

\begin{figure*}[h!]
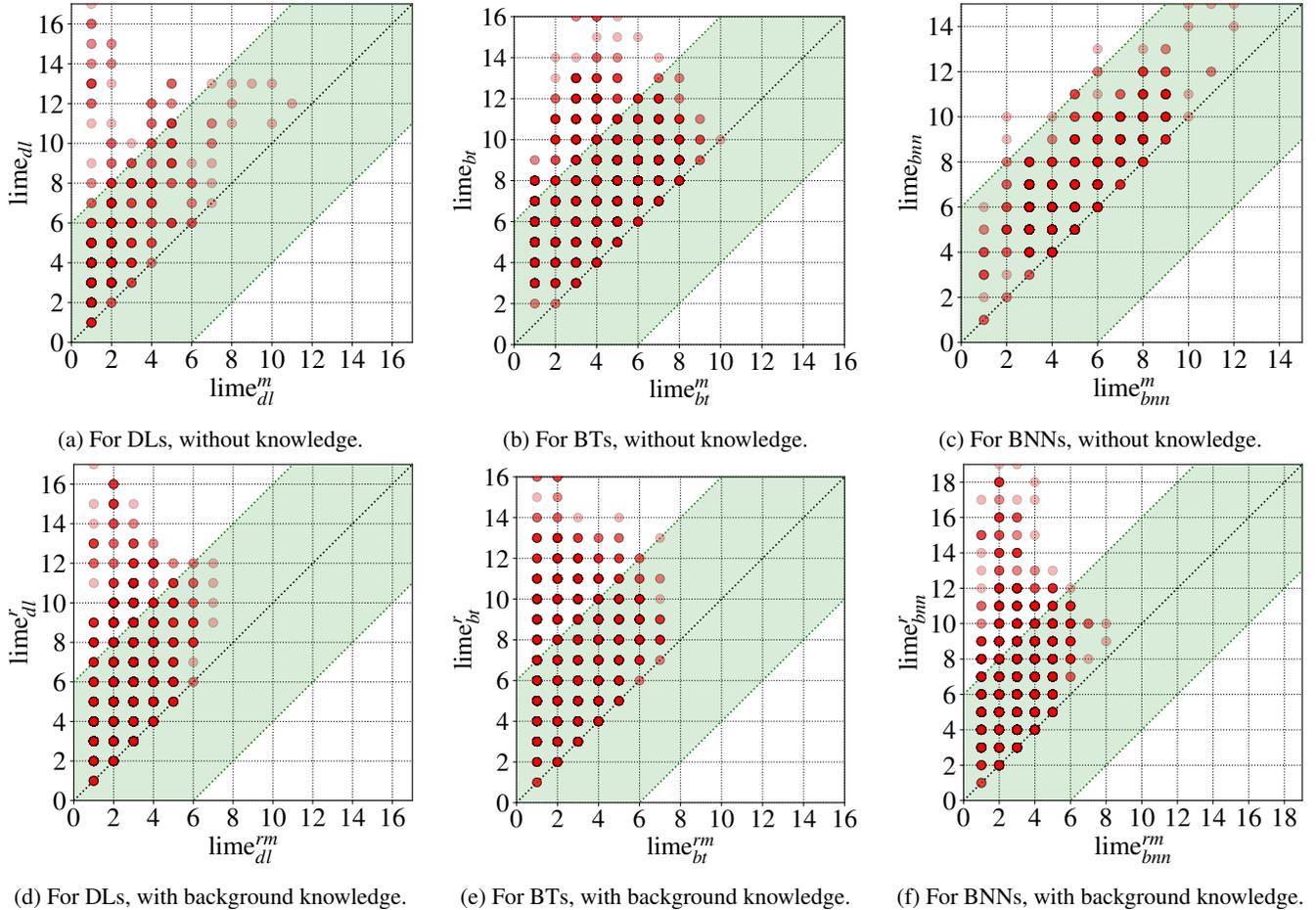

	\centering
	\begin{subfigure}[b]{0.32\textwidth}
    \centering
    \includegraphics[width=\textwidth]{hexp/size/size_scatter_lime_dl_nobg}
    \caption{For DLs, without knowledge.}
    \label{fig:dllimenobg}
  \end{subfigure}
  \hfill
  \begin{subfigure}[b]{0.32\textwidth}
    \centering
    \includegraphics[width=\textwidth]{hexp/size/size_scatter_lime_bt_nobg}
    \caption{For BTs, without knowledge.}
    \label{fig:btlimenobg}
  \end{subfigure}
  \hfill
  \begin{subfigure}[b]{0.32\textwidth}
    \centering
    \includegraphics[width=\textwidth]{hexp/size/size_scatter_lime_bnn_nobg}
    \caption{For BNNs, without knowledge.}
    \label{fig:bnnlimenobg}
  \end{subfigure}
  \hfill
  \begin{subfigure}[b]{0.32\textwidth}
    \centering
    \includegraphics[width=\textwidth]{hexp/size/size_scatter_lime_dl_bg}
    \caption{For DLs, with background knowledge.}
    \label{fig:dllimebg}
  \end{subfigure}
  \hfill
  \begin{subfigure}[b]{0.32\textwidth}
    \centering
    \includegraphics[width=\textwidth]{hexp/size/size_scatter_lime_bt_bg}
    \caption{For BTs, with background knowledge.}
    \label{fig:btlimebg}
  \end{subfigure}
  \hfill
  \begin{subfigure}[b]{0.32\textwidth}
    \centering
    \includegraphics[width=\textwidth]{hexp/size/size_scatter_lime_bnn_bg}
    \caption{For BNNs, with background knowledge.}
    \label{fig:bnnlimebg}
  \end{subfigure}
  \caption{Size of LIME explanations for DLs, BTs and BNNs.}
  \label{fig:limesize}
\end{figure*}

\begin{figure*}[h!]
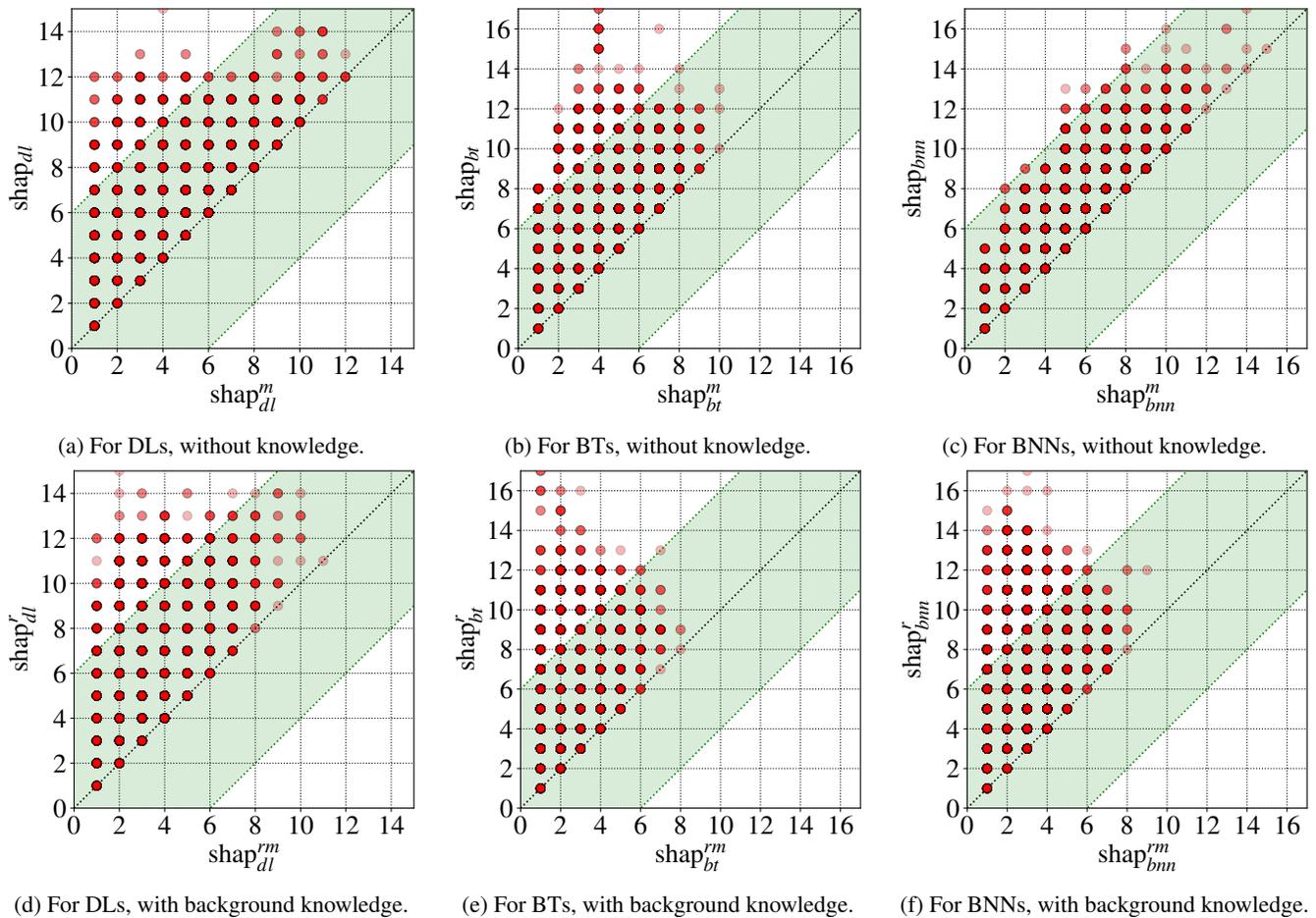

	\centering
	\begin{subfigure}[b]{0.32\textwidth}
    \centering
    \includegraphics[width=\textwidth]{hexp/size/size_scatter_shap_dl_nobg}
    \caption{For DLs, without knowledge.}
    \label{fig:dlshapnobg}
  \end{subfigure}
  \hfill
  \begin{subfigure}[b]{0.32\textwidth}
    \centering
    \includegraphics[width=\textwidth]{hexp/size/size_scatter_shap_bt_nobg}
    \caption{For BTs, without knowledge.}
    \label{fig:btshapnobg}
  \end{subfigure}
  \hfill
  \begin{subfigure}[b]{0.32\textwidth}
    \centering
    \includegraphics[width=\textwidth]{hexp/size/size_scatter_shap_bnn_nobg}
    \caption{For BNNs, without knowledge.}
    \label{fig:bnnshapnobg}
  \end{subfigure}
  \hfill
  \begin{subfigure}[b]{0.32\textwidth}
    \centering
    \includegraphics[width=\textwidth]{hexp/size/size_scatter_shap_dl_bg}
    \caption{For DLs, with background knowledge.}
    \label{fig:dlshapbg}
  \end{subfigure}
  \hfill
  \begin{subfigure}[b]{0.32\textwidth}
    \centering
    \includegraphics[width=\textwidth]{hexp/size/size_scatter_shap_bt_bg}
    \caption{For BTs, with background knowledge.}
    \label{fig:btshapbg}
  \end{subfigure}
  \hfill
  \begin{subfigure}[b]{0.32\textwidth}
    \centering
    \includegraphics[width=\textwidth]{hexp/size/size_scatter_shap_bnn_bg}
    \caption{For BNNs, with background knowledge.}
    \label{fig:bnnshapbg}
  \end{subfigure}
  \caption{Size of SHAP explanations for DLs, BTs and BNNs.}
  \label{fig:shapsize}
\end{figure*}

\begin{figure*}[h!]
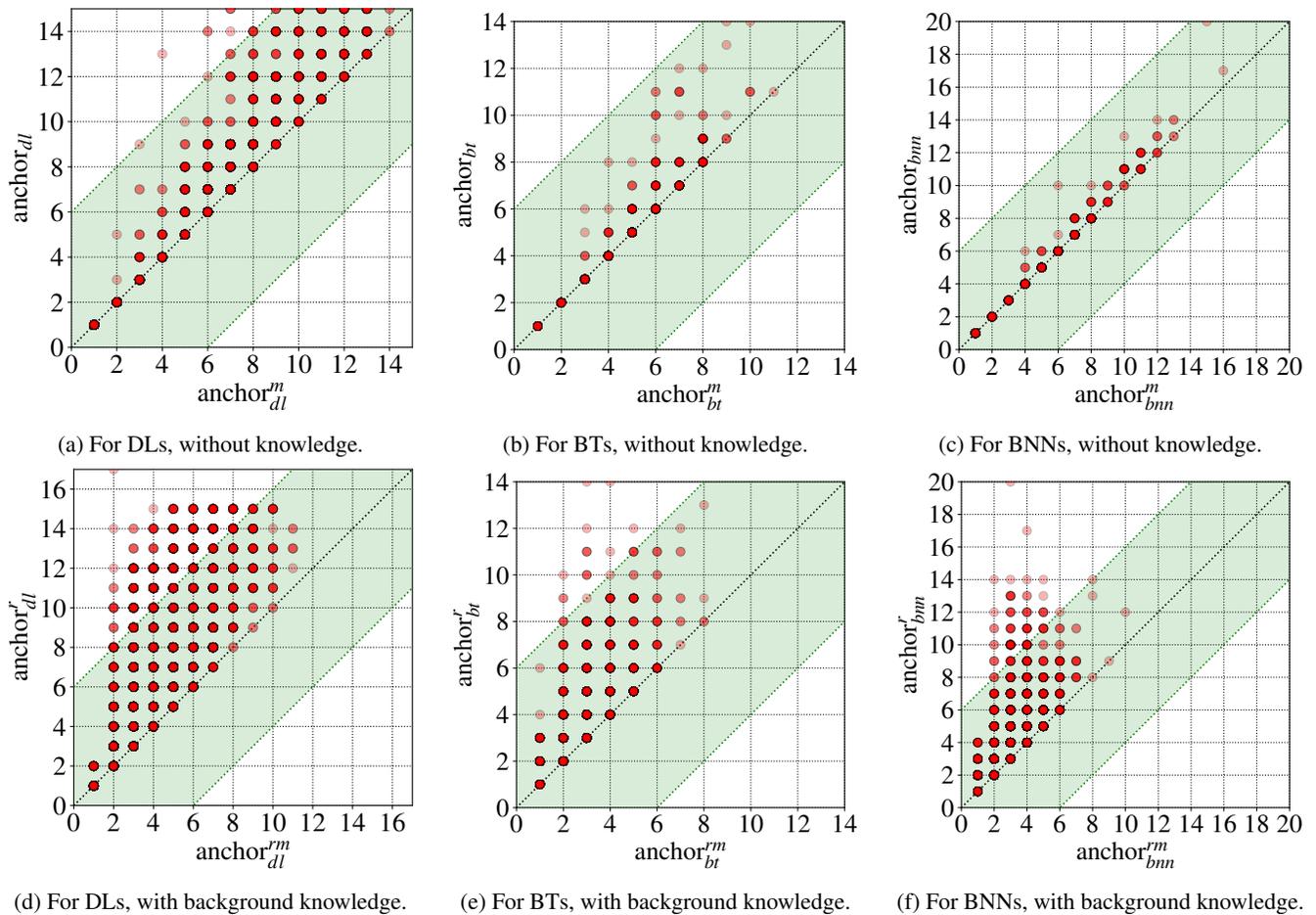

	\centering
	\begin{subfigure}[b]{0.32\textwidth}
    \centering
    \includegraphics[width=\textwidth]{hexp/size/size_scatter_anchor_dl_nobg}
    \caption{For DLs, without knowledge.}
    \label{fig:dlanchornobg}
  \end{subfigure}
  \hfill
  \begin{subfigure}[b]{0.32\textwidth}
    \centering
    \includegraphics[width=\textwidth]{hexp/size/size_scatter_anchor_bt_nobg}
    \caption{For BTs, without knowledge.}
    \label{fig:btanchornobg}
  \end{subfigure}
  \hfill
  \begin{subfigure}[b]{0.32\textwidth}
    \centering
    \includegraphics[width=\textwidth]{hexp/size/size_scatter_anchor_bnn_nobg}
    \caption{For BNNs, without knowledge.}
    \label{fig:bnnanchornobg}
  \end{subfigure}
  \hfill
  \begin{subfigure}[b]{0.32\textwidth}
    \centering
    \includegraphics[width=\textwidth]{hexp/size/size_scatter_anchor_dl_bg}
    \caption{For DLs, with background knowledge.}
    \label{fig:dlanchorbg}
  \end{subfigure}
  \hfill
  \begin{subfigure}[b]{0.32\textwidth}
    \centering
    \includegraphics[width=\textwidth]{hexp/size/size_scatter_anchor_bt_bg}
    \caption{For BTs, with background knowledge.}
    \label{fig:btanchorbg}
  \end{subfigure}
  \hfill
  \begin{subfigure}[b]{0.32\textwidth}
    \centering
    \includegraphics[width=\textwidth]{hexp/size/size_scatter_anchor_bnn_bg}
    \caption{For BNNs, with background knowledge.}
    \label{fig:bnnanchorbg}
  \end{subfigure}
  \caption{Size of Anchor explanations for DLs, BTs and BNNs.}
  \label{fig:anchorsize}
\end{figure*}

\section{Related Work} \label{sec:relw}

Many methods for extracting knowledge from a dataset of
rules exist~\cite{hipp-kdd00,zhang-bk02,as-vlbd94,zaki-kdd97,ijrb-ijcai20,bbl-siam19}.
For use as background knowledge, we aim at very high confidence in the
rules, ideally they should be \emph{completely} valid for the feature
space.
While this is impossible to guarantee, the approach we define only
generates rules, which are valid for the entire data used for rule
generation. We can then use a validation or test set to remove rules
that are not supported by the larger data.
Traditional rule mining is more interested in rules with
high support and less focused on validity, although it can be
adapted to this case (see our experimental results above).
Although the explanation methods we apply in the presence of
background knowledge are agnostic about where it comes
from,
the motivation for our rule extraction method is twofold:
(1)~the rules are computed in a clausal form
and (2)~their high quality is guaranteed by the use of the strict
optimization problem formulation.


The most prominent approaches to post-hoc explainability are of
heuristic nature~\cite{guestrin-kdd16,lundberg-nips17,guestrin-aaai18}
and based on sampling in the vicinity of the instances being
explained.
None of these approaches can handle background knowledge.
Furthermore, they are susceptible to out-of-distribution
attacks~\cite{lakkaraju-aies20a}.
Approaches to formal explainability are represented by compilation of
classifiers into tractable representations~\cite{darwiche-ijcai18} and
reasoning-based explanation
approaches~\cite{inms-aaai19,msi-aaai22}.
%
The closest related work
is~\cite{rubin-aaai22}.
%
Based on compilation of a binary classifier into a binary decision
diagram (BDD), it
conjoins concocted background knowledge
to give
more succinct \emph{``why''} explanations for the classifier.
This approach is restricted to much smaller examples than we consider
here, since the compilation of a classifier into a BDD tends to explode with
the feature space.
The SAT and SMT based approaches to explanation we use are far more
scalable.
Finally, we consider a much broader class of classifiers, and also
examine \emph{``why not''} explanations and how they can be improved
by using background knowledge.



\section{Conclusions} \label{sec:conc}

Using background knowledge is highly advantageous for producing formal
explanations of machine learning models.
For abductive explanations (AXps), the use of background knowledge
substantially shortens explanations, making them easier to understand,
\emph{and} improves the speed of producing explanations.
For contrastive explanations (CXps), while the background knowledge
lengthens them and may increase the time required to generate an
explanation, the resulting explanations are far more useful since they
do not rely on the (usually unsupportable) assumption that all tuples
in the feature space are possible.
Furthermore and as this paper shows, background knowledge can be
applied in the context of heuristic explanations when an accurate
analysis of their correctness is required.

\ignore{
Several lines of future work can be envisioned.
First, possible ways to apply knowledge extraction approach in the
context of heuristic explainers can be investigated.
Second, the use of background knowledge may prove helpful in the
context of probabilistic generalisations of formal
explanations~\cite{kutyniok-jair21,wang-ijcai21}.
}

\section*{Acknowledgments}
This research was partially funded by the Australian Government
through the Australian Research Council Industrial Transformation
Training Centre in Optimisation Technologies, Integrated
Methodologies, and Applications (OPTIMA), Project ID IC200100009.
This work was also partially supported by the AI Interdisciplinary
Institute ANITI, funded by the French program ``Investing for the
Future -- PIA3'' under Grant agreement no.\ ANR-19-PI3A-0004, and by
the H2020-ICT38 project COALA ``Cognitive Assisted agile manufacturing
for a Labor force supported by trustworthy Artificial intelligence''.

\bibliography{paper}

\begin{thebibliography}{76}
\providecommand{\natexlab}[1]{#1}

\bibitem[{ACM(2018)}]{taward18}
ACM. 2018.
\newblock Fathers of the Deep Learning Revolution Receive {ACM} {A.M. Turing}
  Award.
\newblock \url{http://tiny.cc/9plzpz}.

\bibitem[{Agrawal and Srikant(1994)}]{as-vlbd94}
Agrawal, R.; and Srikant, R. 1994.
\newblock Fast algorithms for mining association rules.
\newblock In \emph{VLDB}, 487--499.

\bibitem[{Amgoud and Ben{-}Naim(2022)}]{an-ijcai22}
Amgoud, L.; and Ben{-}Naim, J. 2022.
\newblock Axiomatic Foundations of Explainability.
\newblock In Raedt, L.~D., ed., \emph{IJCAI}, 636--642.

\bibitem[{Angwin et~al.(2016)Angwin, Larson, Mattu, and
  Kirchner}]{propublica16}
Angwin, J.; Larson, J.; Mattu, S.; and Kirchner, L. 2016.
\newblock Machine Bias.
\newblock \url{http://tiny.cc/dd7mjz}.

\bibitem[{Arenas et~al.(2021)Arenas, Baez, Barcel{\'{o}}, P{\'{e}}rez, and
  Subercaseaux}]{barcelo-nips21}
Arenas, M.; Baez, D.; Barcel{\'{o}}, P.; P{\'{e}}rez, J.; and Subercaseaux, B.
  2021.
\newblock Foundations of Symbolic Languages for Model Interpretability.
\newblock In \emph{NeurIPS}.

\bibitem[{Arenas et~al.(2022)Arenas, Barcel{\'{o}}, Romero, and
  Subercaseaux}]{barcelo-corr22}
Arenas, M.; Barcel{\'{o}}, P.; Romero, M.; and Subercaseaux, B. 2022.
\newblock On Computing Probabilistic Explanations for Decision Trees.
\newblock \emph{CoRR}, abs/2207.12213.

\bibitem[{Audemard, Koriche, and Marquis(2020)}]{marquis-kr20}
Audemard, G.; Koriche, F.; and Marquis, P. 2020.
\newblock On Tractable {XAI} Queries based on Compiled Representations.
\newblock In \emph{KR}, 838--849.

\bibitem[{Bailey and Stuckey(2005)}]{bs-dapl05}
Bailey, J.; and Stuckey, P.~J. 2005.
\newblock Discovery of Minimal Unsatisfiable Subsets of Constraints Using
  Hitting Set Dualization.
\newblock In \emph{PADL}, 174--186.

\bibitem[{Belaid, Bessiere, and Lazaar(2019)}]{bbl-siam19}
Belaid, M.; Bessiere, C.; and Lazaar, N. 2019.
\newblock Constraint Programming for Association Rules.
\newblock In Berger{-}Wolf, T.~Y.; and Chawla, N.~V., eds., \emph{SIAM},
  127--135.

\bibitem[{Belov, Lynce, and Marques{-}Silva(2012)}]{blms-aicom12}
Belov, A.; Lynce, I.; and Marques{-}Silva, J. 2012.
\newblock Towards efficient {MUS} extraction.
\newblock \emph{{AI} Commun.}, 25(2): 97--116.

\bibitem[{Bend{\'{\i}}k, Cern{\'{a}}, and Benes(2018)}]{bcb-atva18}
Bend{\'{\i}}k, J.; Cern{\'{a}}, I.; and Benes, N. 2018.
\newblock Recursive Online Enumeration of All Minimal Unsatisfiable Subsets.
\newblock In \emph{ATVA}, 143--159.

\bibitem[{Biere et~al.(2021)Biere, Heule, van Maaren, and
  Walsh}]{sat-handbook21}
Biere, A.; Heule, M.; van Maaren, H.; and Walsh, T., eds. 2021.
\newblock \emph{Handbook of Satisfiability}. {IOS} Press.
\newblock ISBN 978-1-64368-160-3.

\bibitem[{Blanc, Lange, and Tan(2021)}]{tan-nips21}
Blanc, G.; Lange, J.; and Tan, L. 2021.
\newblock Provably efficient, succinct, and precise explanations.
\newblock In \emph{NeurIPS}.

\bibitem[{Boumazouza et~al.(2021)Boumazouza, Alili, Mazure, and
  Tabia}]{mazure-cikm21}
Boumazouza, R.; Alili, F.~C.; Mazure, B.; and Tabia, K. 2021.
\newblock {ASTERYX:} {A} model-{A}gnostic {S}a{T}-bas{E}d app{R}oach for
  s{Y}mbolic and score-based e{X}planations.
\newblock In \emph{CIKM}, 120--129.

\bibitem[{Camburu et~al.(2019)Camburu, Giunchiglia, Foerster, Lukasiewicz, and
  Blunsom}]{lukasiewicz-corr19}
Camburu, O.; Giunchiglia, E.; Foerster, J.; Lukasiewicz, T.; and Blunsom, P.
  2019.
\newblock Can {I} Trust the Explainer? Verifying Post-hoc Explanatory Methods.
\newblock \emph{CoRR}, abs/1910.02065.

\bibitem[{Chen and Guestrin(2016)}]{guestrin-kdd16a}
Chen, T.; and Guestrin, C. 2016.
\newblock {XGBoost}: {A} Scalable Tree Boosting System.
\newblock In \emph{{KDD}}, 785--794.

\bibitem[{Clark and Niblett(1989)}]{clark-ml89}
Clark, P.; and Niblett, T. 1989.
\newblock The {CN2} Induction Algorithm.
\newblock \emph{Machine Learning}, 3: 261--283.

\bibitem[{Darwiche and Hirth(2020)}]{darwiche-ecai20}
Darwiche, A.; and Hirth, A. 2020.
\newblock On the Reasons Behind Decisions.
\newblock In \emph{ECAI}, 712--720.

\bibitem[{Darwiche and Marquis(2021)}]{darwiche-jair21}
Darwiche, A.; and Marquis, P. 2021.
\newblock On Quantifying Literals in Boolean Logic and Its Applications to
  Explainable {AI}.
\newblock \emph{J. Artif. Intell. Res.}, 72: 285--328.

\bibitem[{Dua and Graff(2017)}]{dua-2019}
Dua, D.; and Graff, C. 2017.
\newblock {UCI} Machine Learning Repository.

\bibitem[{Ferreira et~al.(2022)Ferreira, de~Sousa~Ribeiro, Gon{\c{c}}alves, and
  Leite}]{leite-kr22}
Ferreira, J.; de~Sousa~Ribeiro, M.; Gon{\c{c}}alves, R.; and Leite, J. 2022.
\newblock Looking Inside the Black-Box: Logic-based Explanations for Neural
  Networks.
\newblock In \emph{KR}, 432–442.

\bibitem[{Friedman(2001)}]{friedman-tas01}
Friedman, J.~H. 2001.
\newblock Greedy Function Approximation: A Gradient Boosting Machine.
\newblock \emph{The Annals of Statistics}, 29(5): 1189--1232.

\bibitem[{Gario and Micheli(2015)}]{gm-smt15}
Gario, M.; and Micheli, A. 2015.
\newblock {PySMT}: a Solver-Agnostic Library for Fast Prototyping of
  {SMT}-Based Algorithms.
\newblock In \emph{SMT Workshop}.

\bibitem[{Ghosh and Meel(2019)}]{meel-aies19}
Ghosh, B.; and Meel, K.~S. 2019.
\newblock {IMLI:} An Incremental Framework for MaxSAT-Based Learning of
  Interpretable Classification Rules.
\newblock In \emph{{AIES}}, 203--210.

\bibitem[{Gorji and Rubin(2022)}]{rubin-aaai22}
Gorji, N.; and Rubin, S. 2022.
\newblock Sufficient Reasons for Classifier Decisions in the Presence of Domain
  Constraints.
\newblock In \emph{{AAAI}}, 5660--5667.

\bibitem[{Hipp, G{\"{u}}ntzer, and Nakhaeizadeh(2000)}]{hipp-kdd00}
Hipp, J.; G{\"{u}}ntzer, U.; and Nakhaeizadeh, G. 2000.
\newblock Algorithms for Association Rule Mining - {A} General Survey and
  Comparison.
\newblock \emph{{SIGKDD} Explor.}, 2(1): 58--64.

\bibitem[{Huang et~al.(2022)Huang, Izza, Ignatiev, Cooper, Asher, and
  Marques{-}Silva}]{hiicams-aaai22}
Huang, X.; Izza, Y.; Ignatiev, A.; Cooper, M.~C.; Asher, N.; and
  Marques{-}Silva, J. 2022.
\newblock Tractable Explanations for {d-DNNF} Classifiers.
\newblock In \emph{AAAI}, 5719--5728.

\bibitem[{Hubara et~al.(2016)Hubara, Courbariaux, Soudry, El{-}Yaniv, and
  Bengio}]{hcseyb-neurips16}
Hubara, I.; Courbariaux, M.; Soudry, D.; El{-}Yaniv, R.; and Bengio, Y. 2016.
\newblock Binarized Neural Networks.
\newblock In \emph{NIPS}, 4107--4115.

\bibitem[{Ignatiev(2020)}]{ignatiev-ijcai20}
Ignatiev, A. 2020.
\newblock Towards Trustable Explainable {AI}.
\newblock In \emph{IJCAI}, 5154--5158.

\bibitem[{Ignatiev et~al.(2022)Ignatiev, Izza, Stuckey, and
  Marques{-}Silva}]{iisms-aaai22}
Ignatiev, A.; Izza, Y.; Stuckey, P.~J.; and Marques{-}Silva, J. 2022.
\newblock Using {MaxSAT} for Efficient Explanations of Tree Ensembles.
\newblock In \emph{AAAI}, 3776--3785.

\bibitem[{Ignatiev et~al.(2021)Ignatiev, Lam, Stuckey, and
  Marques{-}Silva}]{ilsms-aaai21}
Ignatiev, A.; Lam, E.; Stuckey, P.~J.; and Marques{-}Silva, J. 2021.
\newblock A Scalable Two Stage Approach to Computing Optimal Decision Sets.
\newblock In \emph{{AAAI}}, 3806--3814.

\bibitem[{Ignatiev and Marques-Silva(2021)}]{ims-sat21}
Ignatiev, A.; and Marques-Silva, J. 2021.
\newblock {SAT}-Based Rigorous Explanations for Decision Lists.
\newblock In \emph{SAT}, 251--269.

\bibitem[{Ignatiev, Morgado, and Marques{-}Silva(2018)}]{imms-sat18}
Ignatiev, A.; Morgado, A.; and Marques{-}Silva, J. 2018.
\newblock PySAT: {A} Python Toolkit for Prototyping with {SAT} Oracles.
\newblock In \emph{SAT}, 428--437.

\bibitem[{Ignatiev, Morgado, and Marques{-}Silva(2019)}]{imms-jsat19}
Ignatiev, A.; Morgado, A.; and Marques{-}Silva, J. 2019.
\newblock {RC2:} an Efficient MaxSAT Solver.
\newblock \emph{J. Satisf. Boolean Model. Comput.}, 11(1): 53--64.

\bibitem[{Ignatiev et~al.(2020)Ignatiev, Narodytska, Asher, and
  Marques{-}Silva}]{inams-aiia20}
Ignatiev, A.; Narodytska, N.; Asher, N.; and Marques{-}Silva, J. 2020.
\newblock From Contrastive to Abductive Explanations and Back Again.
\newblock In \emph{AI*IA}, 335--355.

\bibitem[{Ignatiev, Narodytska, and
  Marques{-}Silva(2019{\natexlab{a}})}]{inms-aaai19}
Ignatiev, A.; Narodytska, N.; and Marques{-}Silva, J. 2019{\natexlab{a}}.
\newblock Abduction-Based Explanations for Machine Learning Models.
\newblock In \emph{AAAI}, 1511--1519.

\bibitem[{Ignatiev, Narodytska, and
  Marques{-}Silva(2019{\natexlab{b}})}]{inms-nips19}
Ignatiev, A.; Narodytska, N.; and Marques{-}Silva, J. 2019{\natexlab{b}}.
\newblock On Relating Explanations and Adversarial Examples.
\newblock In \emph{NeurIPS}, 15857--15867.

\bibitem[{Ignatiev, Narodytska, and
  Marques{-}Silva(2019{\natexlab{c}})}]{inms-corr19}
Ignatiev, A.; Narodytska, N.; and Marques{-}Silva, J. 2019{\natexlab{c}}.
\newblock On Validating, Repairing and Refining Heuristic {ML} Explanations.
\newblock \emph{CoRR}, abs/1907.02509.

\bibitem[{Ignatiev et~al.(2018)Ignatiev, Pereira, Narodytska, and
  Marques{-}Silva}]{ipnms-ijcar18}
Ignatiev, A.; Pereira, F.; Narodytska, N.; and Marques{-}Silva, J. 2018.
\newblock A SAT-Based Approach to Learn Explainable Decision Sets.
\newblock In \emph{{IJCAR}}, 627--645.

\bibitem[{Ignatiev et~al.(2015)Ignatiev, Previti, Liffiton, and
  Marques{-}Silva}]{iplms-cp15}
Ignatiev, A.; Previti, A.; Liffiton, M.~H.; and Marques{-}Silva, J. 2015.
\newblock Smallest {MUS} Extraction with Minimal Hitting Set Dualization.
\newblock In \emph{{CP}}, 173--182.

\bibitem[{Izza, Ignatiev, and Marques{-}Silva(2022)}]{iims-jair22}
Izza, Y.; Ignatiev, A.; and Marques{-}Silva, J. 2022.
\newblock On Tackling Explanation Redundancy in Decision Trees.
\newblock \emph{J. Artif. Intell. Res.}, 75: 261--321.

\bibitem[{Izza et~al.(2020)Izza, Jabbour, Raddaoui, and Boudane}]{ijrb-ijcai20}
Izza, Y.; Jabbour, S.; Raddaoui, B.; and Boudane, A. 2020.
\newblock On the Enumeration of Association Rules: {A} Decomposition-based
  Approach.
\newblock In Bessiere, C., ed., \emph{IJCAI}, 1265--1271.

\bibitem[{Izza and Marques{-}Silva(2021)}]{ims-ijcai21}
Izza, Y.; and Marques{-}Silva, J. 2021.
\newblock On Explaining Random Forests with {SAT}.
\newblock In \emph{{IJCAI}}.

\bibitem[{Kamath et~al.(1992)Kamath, Karmarkar, Ramakrishnan, and
  Resende}]{resende-mp92}
Kamath, A.~P.; Karmarkar, N.; Ramakrishnan, K.~G.; and Resende, M. G.~C. 1992.
\newblock A continuous approach to inductive inference.
\newblock \emph{Math. Program.}, 57: 215--238.

\bibitem[{Kohavi(1996)}]{kohavi-kdd96}
Kohavi, R. 1996.
\newblock Scaling Up the Accuracy of Naive-Bayes Classifiers: {A} Decision-Tree
  Hybrid.
\newblock In \emph{KDD}, 202--207.

\bibitem[{Lakkaraju, Bach, and Leskovec(2016)}]{leskovec-kdd16}
Lakkaraju, H.; Bach, S.~H.; and Leskovec, J. 2016.
\newblock Interpretable Decision Sets: {A} Joint Framework for Description and
  Prediction.
\newblock In \emph{KDD}, 1675--1684.

\bibitem[{LeCun, Bengio, and Hinton(2015)}]{bengio-nature15}
LeCun, Y.; Bengio, Y.; and Hinton, G. 2015.
\newblock Deep learning.
\newblock \emph{Nature}, 521(7553): 436.

\bibitem[{Liffiton et~al.(2016)Liffiton, Previti, Malik, and
  Marques{-}Silva}]{lpmms-cj16}
Liffiton, M.~H.; Previti, A.; Malik, A.; and Marques{-}Silva, J. 2016.
\newblock Fast, flexible {MUS} enumeration.
\newblock \emph{Constraints An Int. J.}, 21(2): 223--250.

\bibitem[{Liffiton and Sakallah(2008)}]{liffiton-jar08}
Liffiton, M.~H.; and Sakallah, K.~A. 2008.
\newblock Algorithms for Computing Minimal Unsatisfiable Subsets of
  Constraints.
\newblock \emph{J. Autom. Reasoning}, 40(1): 1--33.

\bibitem[{Lipton(2018)}]{lipton-cacm18}
Lipton, Z.~C. 2018.
\newblock The mythos of model interpretability.
\newblock \emph{Commun. {ACM}}, 61(10): 36--43.

\bibitem[{Lundberg and Lee(2017)}]{lundberg-nips17}
Lundberg, S.~M.; and Lee, S. 2017.
\newblock A Unified Approach to Interpreting Model Predictions.
\newblock In \emph{NeurIPS}, 4765--4774.

\bibitem[{Malfa et~al.(2021)Malfa, Michelmore, Zbrzezny, Paoletti, and
  Kwiatkowska}]{kwiatkowska-ijcai21}
Malfa, E.~L.; Michelmore, R.; Zbrzezny, A.~M.; Paoletti, N.; and Kwiatkowska,
  M. 2021.
\newblock On Guaranteed Optimal Robust Explanations for {NLP} Models.
\newblock In \emph{IJCAI}, 2658--2665.

\bibitem[{Malioutov and Meel(2018)}]{meel-cp18}
Malioutov, D.; and Meel, K.~S. 2018.
\newblock {MLIC:} {A} MaxSAT-Based Framework for Learning Interpretable
  Classification Rules.
\newblock In \emph{{CP}}, 312--327.

\bibitem[{Marques{-}Silva et~al.(2020)Marques{-}Silva, Gerspacher, Cooper,
  Ignatiev, and Narodytska}]{msgcin-nips20}
Marques{-}Silva, J.; Gerspacher, T.; Cooper, M.~C.; Ignatiev, A.; and
  Narodytska, N. 2020.
\newblock Explaining Naive Bayes and Other Linear Classifiers with Polynomial
  Time and Delay.
\newblock In \emph{NeurIPS}.

\bibitem[{Marques{-}Silva et~al.(2021)Marques{-}Silva, Gerspacher, Cooper,
  Ignatiev, and Narodytska}]{msgcin-icml21}
Marques{-}Silva, J.; Gerspacher, T.; Cooper, M.~C.; Ignatiev, A.; and
  Narodytska, N. 2021.
\newblock Explanations for Monotonic Classifiers.
\newblock In \emph{ICML}, 7469--7479.

\bibitem[{Marques{-}Silva et~al.(2013)Marques{-}Silva, Heras, Janota, Previti,
  and Belov}]{mshjpb-ijcai13}
Marques{-}Silva, J.; Heras, F.; Janota, M.; Previti, A.; and Belov, A. 2013.
\newblock On Computing Minimal Correction Subsets.
\newblock In \emph{{IJCAI}}, 615--622.

\bibitem[{Marques-Silva and Ignatiev(2022)}]{msi-aaai22}
Marques-Silva, J.; and Ignatiev, A. 2022.
\newblock Delivering Trustworthy AI through Formal XAI.
\newblock In \emph{AAAI}, 3806--3814.

\bibitem[{Mencia, Previti, and Marques{-}Silva(2015)}]{mpms-ijcai15}
Mencia, C.; Previti, A.; and Marques{-}Silva, J. 2015.
\newblock Literal-Based {MCS} Extraction.
\newblock In \emph{{IJCAI}}, 1973--1979.

\bibitem[{Miller(2019)}]{miller-aij19}
Miller, T. 2019.
\newblock Explanation in artificial intelligence: Insights from the social
  sciences.
\newblock \emph{Artif. Intell.}, 267: 1--38.

\bibitem[{Narodytska et~al.(2019)Narodytska, Shrotri, Meel, Ignatiev, and
  Marques{-}Silva}]{nsmims-sat19}
Narodytska, N.; Shrotri, A.~A.; Meel, K.~S.; Ignatiev, A.; and Marques{-}Silva,
  J. 2019.
\newblock Assessing Heuristic Machine Learning Explanations with Model
  Counting.
\newblock In \emph{SAT}, 267--278.

\bibitem[{Olson et~al.(2017)Olson, Cava, Orzechowski, Urbanowicz, and
  Moore}]{Olson2017PMLB}
Olson, R.~S.; Cava, W. G.~L.; Orzechowski, P.; Urbanowicz, R.~J.; and Moore,
  J.~H. 2017.
\newblock {PMLB:} a large benchmark suite for machine learning evaluation and
  comparison.
\newblock \emph{BioData Min.}, 10(1): 36:1--36:13.

\bibitem[{Paszke et~al.(2019)Paszke, Gross, Massa, Lerer, Bradbury, Chanan,
  Killeen, Lin, Gimelshein, Antiga, Desmaison, K{\"{o}}pf, Yang, DeVito,
  Raison, Tejani, Chilamkurthy, Steiner, Fang, Bai, and
  Chintala}]{pytorch-neurips19}
Paszke, A.; Gross, S.; Massa, F.; Lerer, A.; Bradbury, J.; Chanan, G.; Killeen,
  T.; Lin, Z.; Gimelshein, N.; Antiga, L.; Desmaison, A.; K{\"{o}}pf, A.; Yang,
  E.~Z.; DeVito, Z.; Raison, M.; Tejani, A.; Chilamkurthy, S.; Steiner, B.;
  Fang, L.; Bai, J.; and Chintala, S. 2019.
\newblock PyTorch: An Imperative Style, High-Performance Deep Learning Library.
\newblock In \emph{NeurIPS}, 8024--8035.

\bibitem[{Reiter(1987)}]{reiter-aij87}
Reiter, R. 1987.
\newblock A Theory of Diagnosis from First Principles.
\newblock \emph{Artif. Intell.}, 32(1): 57--95.

\bibitem[{Ribeiro, Singh, and Guestrin(2016)}]{guestrin-kdd16}
Ribeiro, M.~T.; Singh, S.; and Guestrin, C. 2016.
\newblock "Why Should {I} Trust You?": Explaining the Predictions of Any
  Classifier.
\newblock In \emph{KDD}, 1135--1144.

\bibitem[{Ribeiro, Singh, and Guestrin(2018)}]{guestrin-aaai18}
Ribeiro, M.~T.; Singh, S.; and Guestrin, C. 2018.
\newblock Anchors: High-Precision Model-Agnostic Explanations.
\newblock In \emph{AAAI}, 1527--1535.

\bibitem[{Rivest(1987)}]{rivest-ml87}
Rivest, R.~L. 1987.
\newblock Learning Decision Lists.
\newblock \emph{Mach. Learn.}, 2(3): 229--246.

\bibitem[{Rudin(2019)}]{rudin-natmi19}
Rudin, C. 2019.
\newblock Stop explaining black box machine learning models for high stakes
  decisions and use interpretable models instead.
\newblock \emph{Nat. Mach. Intell.}, 1(5): 206--215.

\bibitem[{Shih, Choi, and Darwiche(2018)}]{darwiche-ijcai18}
Shih, A.; Choi, A.; and Darwiche, A. 2018.
\newblock A Symbolic Approach to Explaining Bayesian Network Classifiers.
\newblock In \emph{IJCAI}, 5103--5111.

\bibitem[{Shrotri et~al.(2022)Shrotri, Narodytska, Ignatiev, Meel,
  Marques-Silva, and Vardi}]{snimmsv-aaai22}
Shrotri, A.; Narodytska, N.; Ignatiev, A.; Meel, K.~S.; Marques-Silva, J.; and
  Vardi, M. 2022.
\newblock Constraint-Driven Explanations of Black-Box ML Models.
\newblock In \emph{AAAI}.

\bibitem[{Slack et~al.(2020)Slack, Hilgard, Jia, Singh, and
  Lakkaraju}]{lakkaraju-aies20a}
Slack, D.; Hilgard, S.; Jia, E.; Singh, S.; and Lakkaraju, H. 2020.
\newblock Fooling {LIME} and {SHAP:} Adversarial Attacks on Post hoc
  Explanation Methods.
\newblock In \emph{AIES}, 180--186.

\bibitem[{Szegedy et~al.(2014)Szegedy, Zaremba, Sutskever, Bruna, Erhan,
  Goodfellow, and Fergus}]{szegedy-iclr14}
Szegedy, C.; Zaremba, W.; Sutskever, I.; Bruna, J.; Erhan, D.; Goodfellow,
  I.~J.; and Fergus, R. 2014.
\newblock Intriguing properties of neural networks.
\newblock In \emph{{ICLR} (Poster)}.

\bibitem[{W{\"{a}}ldchen et~al.(2021)W{\"{a}}ldchen, MacDonald, Hauch, and
  Kutyniok}]{kutyniok-jair21}
W{\"{a}}ldchen, S.; MacDonald, J.; Hauch, S.; and Kutyniok, G. 2021.
\newblock The Computational Complexity of Understanding Binary Classifier
  Decisions.
\newblock \emph{J. Artif. Intell. Res.}, 70: 351--387.

\bibitem[{Yu et~al.(2020)Yu, Ignatiev, Stuckey, and Bodic}]{yislb-cp20}
Yu, J.; Ignatiev, A.; Stuckey, P.~J.; and Bodic, P.~L. 2020.
\newblock Computing Optimal Decision Sets with {SAT}.
\newblock In \emph{{CP}}, 952--970.

\bibitem[{Yu et~al.(2021)Yu, Ignatiev, Stuckey, and Le~Bodic}]{yisb-jair21}
Yu, J.; Ignatiev, A.; Stuckey, P.~J.; and Le~Bodic, P. 2021.
\newblock Learning Optimal Decision Sets and Lists with SAT.
\newblock \emph{JAIR}, 72: 1251--1279.

\bibitem[{Zaki et~al.(1997)Zaki, Parthasarathy, Ogihara, and Li}]{zaki-kdd97}
Zaki, M.~J.; Parthasarathy, S.; Ogihara, M.; and Li, W. 1997.
\newblock New Algorithms for Fast Discovery of Association Rules.
\newblock In \emph{{KDD}}, 283--286.

\bibitem[{Zhang and Zhang(2002)}]{zhang-bk02}
Zhang, C.; and Zhang, S. 2002.
\newblock \emph{Association Rule Mining, Models and Algorithms}.

\end{thebibliography}

\end{document}